\renewcommand{\vec}[1]{{\bm{#1}}}
\newcommand{\mat}[1]{\mathbf{#1}}
\newcommand{\Sph}{\mathbb{S}}
\def\1{\bm{1}}
\def\<{\langle}
\def\>{\rangle}
\def\va{{\bm{a}}}
\def\vb{{\bm{b}}}
\def\vv{{\bm{v}}}
\def\vw{{\bm{w}}}
\def\vx{{\bm{x}}}
\def\vy{{\bm{y}}}
\def\vz{{\bm{z}}}
\DeclareMathAlphabet{\mathsfit}{\encodingdefault}{\sfdefault}{m}{sl}
\SetMathAlphabet{\mathsfit}{bold}{\encodingdefault}{\sfdefault}{bx}{n}
\def\cL{{\mathcal{L}}}
\def\bR{{\mathbb{R}}}
\newcommand{\V}{\mathcal{V}}
\newcommand{\R}{\mathbb{R}}
\DeclareMathOperator*{\argmax}{arg\,max}
\DeclareMathOperator*{\argmin}{arg\,min}
\DeclarePairedDelimiter\abs{\lvert}{\rvert}
\DeclarePairedDelimiter\curly{\{}{\}}
\DeclarePairedDelimiter\paren{(}{)}
\DeclarePairedDelimiter\sq{[}{]}
\DeclarePairedDelimiter\norm{\lVert}{\rVert}
\DeclarePairedDelimiter\ang{\langle}{\rangle}
\newcommand{\dd}{\,\mathrm d}
\newcommand{\T}{T}
\def\st{\:\colon\:}
\newcommand{\M}{\mathcal{M}}
\newcommand{\dummy}{\mkern1mu\cdot\mkern1mu}
\DeclareSymbolFont{fouriersymbols}{FMS}{futm}{m}{n}
\DeclareSymbolFont{fourierlargesymbols}{FMX}{futm}{m}{n}
\DeclareMathDelimiter{\VERT}{\mathord}{fouriersymbols}{152}{fourierlargesymbols}{147}
\newcommand\numberthis{\addtocounter{equation}{1}\tag{\theequation}}
\newcommand{\aug}[1]{\overline{#1}}
\DeclareMathOperator{\HOp}{H}
\DeclareMathOperator{\TV}{TV}
\newcommand{\iso}{\mathrm{iso}}
\let\hat\relax
\newcommand{\hat}{\widehat}
\let\tilde\relax
\newcommand{\tilde}{\widetilde}
\DeclareMathOperator*{\esssup}{ess\,sup}
\DeclareMathOperator{\spn}{span}
\newcommand{\rob}[1]{}
\newcommand{\joe}[1]{}
\newcommand{\rahul}[1]{}
\renewcommand{\rob}[1]{{\color{red}{RN: #1}}}
\renewcommand{\joe}[1]{{\color{blue}{JS: #1}}}
\renewcommand{\rahul}[1]{{\color{orange}{rahul: #1}}}
    \let\Cref\crtCref
    \let\cref\crtcref
\crefname{remark}{Remark}{Remarks}
\crefname{lemma}{Lemma}{Lemmas}
\crefname{definition}{Definition}{Definitions}
\crefname{corollary}{Corollary}{Corollaries}
\crefname{proposition}{Proposition}{Propositions}
\crefname{figure}{Figure}{Figures}
\begin{document}
\title{Variation Spaces for Multi-Output Neural Networks: \\
 Insights on Multi-Task Learning and Network Compression}

\author{\name Joseph Shenouda \email jshenouda@wisc.edu \\
       \addr Department of Electrical and Computer Engineering\\
       University of Wisconsin--Madison\\
       Madison, WI 53706, USA
       \AND
       \name Rahul Parhi\thanks{Part of this work was done while RP was with the University of Wisconsin--Madison and then with the \'Ecole polytechnique f\'ed\'erale de Lausanne. He is now with the University of California, San Diego.} \email rparhi@ucsd.edu \\
       \addr Department of Electrical and Computer Engineering \\
        University of California, San Diego \\
       La Jolla, CA 92093, USA
       \AND
       \name Kangwook Lee \email kangwook.lee@wisc.edu \\
       \name Robert D. Nowak \email rdnowak@wisc.edu\\
       \addr Department of Electrical and Computer Engineering\\
       University of Wisconsin--Madison\\
       Madison, WI 53706, USA}

\editor{Aarti Singh}

\maketitle

\begin{abstract}
This paper introduces a novel theoretical framework for the analysis of vector-valued neural networks through the development of vector-valued variation spaces, a new class of reproducing kernel Banach spaces. These spaces emerge from studying the regularization effect of weight decay in training networks with activation functions like the rectified linear unit (ReLU). This framework offers a deeper understanding of multi-output networks and their function-space characteristics. A key contribution of this work is the development of a representer theorem for the vector-valued variation spaces. This representer theorem establishes that shallow vector-valued neural networks are the solutions to data-fitting problems over these infinite-dimensional spaces, where the network widths are bounded by the square of the number of training data. This observation reveals that the norm associated with these vector-valued variation spaces encourages the learning of features that are useful for multiple tasks, shedding new light on multi-task learning with neural networks. Finally, this paper develops a connection between weight-decay regularization and the multi-task lasso problem. This connection leads to novel bounds for layer widths in deep networks that depend on the intrinsic dimensions of the training data representations. This insight not only deepens the understanding of the deep network architectural requirements, but also yields a simple convex optimization method for deep neural network compression. The performance of this compression procedure is evaluated on various architectures.
\end{abstract}

\begin{keywords}
  deep learning,
  model compression,
  multi-task lasso,
  regularization,
  sparsity,
  variation spaces,
  weight decay.
\end{keywords}


\section{Introduction}
The investigation of shallow scalar-valued (single-output) neural networks through the lens of function spaces has been extensively developed. This line of work was pioneered by \citet{BarronUniversal,kurkova2001bounds,kurkova2002comparison, mhaskar2004tractability, barron2008approximation, bach2017breaking}, where those authors study the so-called \emph{variation spaces} of shallow networks. On the other hand, the investigation of vector-valued (multi-output) networks is far less developed. To fill that gap, this paper develops a new framework to investigate the characteristics of the functions learned by vector-valued networks. Since each layer of a deep neural network (DNN) is itself a shallow vector-valued neural network, the proposed investigation is crucial to provide insights into learning with \emph{deep} neural networks.  

The developed framework is based on a novel class of Banach spaces which are termed \emph{vector-valued variation spaces}. These spaces arise naturally from a precise characterization of the function-space norm that is regularized by weight decay when training neural networks with homogeneous activation functions, such as the rectified linear unit (ReLU). The vector-valued variation spaces reduce to the classical, scalar-valued variation spaces in the single-output setting. In the general vector-valued setting, these spaces exhibit several intriguing properties that shed light on multi-task learning with neural networks as well as the inductive bias of weight-decay regularization. These properties motivate a simple and computationally efficient method to compress pre-trained deep neural networks (DNNs).

\subsection{Organization and Main Contributions}
This paper is organized as follows. In \cref{sec:nbt} we review the effect of weight-decay regularization in the training of DNNs with homogeneous activation functions. There, it is revealed that weight-decay regularization is equivalent to a constrained form of multi-task lasso regularization. This observation then naturally leads to a variation norm on the space of vector-valued networks and their (appropriately taken) wide limits, which comprises the vector-valued variation space. These vector-valued variation spaces characterize the functions generated by shallow vector-valued networks (or, equivalently, layers of deep networks). 

In \cref{VVspace,sec:vv_rep_thm} we develop the vector-valued variation spaces and investigate some of their properties. We show that the vector-valued variation spaces are ``immune'' to the curse of dimensionality in the sense that any function in a vector-valued variation space can be $\epsilon$-approximated (in $L^2$) by a vector-valued neural network whose width scales as $\epsilon^{-1/2}$, which is independent of both the input and output dimensions. We show that these spaces are reproducing kernel Banach spaces and prove a representer theorem. The representer theorem shows that shallow vector-valued neural networks are solutions to data-fitting problems over the infinite-dimensional vector-valued variation spaces. Furthermore, it guarantees that the widths of these neural network solutions are bounded by the square of the number of training data, irrespective of the input and output dimensions.

In \cref{sec:neuron-sharing} we show that the norm associated with the vector-valued variation spaces (and hence weight decay in the case of homogeneous activation functions) promotes a remarkable ``neuron sharing" property of the solutions. This refers to the fact that the norm encourages solutions in which each neuron contributes to \emph{every} output function, as opposed to networks in which disjoint sets of neurons are used to represent different output functions.
This indicates how different outputs may influence each other in the training process. It also provides a new viewpoint of multi-task learning with neural networks: Solutions are encouraged to learn features that are useful for multiple tasks.

Finally, in \cref{sec:bounds} we show that weight-decay regularization in DNNs with homogeneous activation functions is tightly linked to a convex multi-task lasso problem. With this link, we present new bounds on the sufficient widths of layers in DNNs that depend on the intrinsic dimensions of learned data representations at each layer. This result is based on a novel characterization of the sparsity of multi-task lasso solutions, which may be of independent interest. Notably, if the dimensions (ranks of data representation matrices) are low, then there exist layers whose widths are much narrower than the number of data while still being optimal solutions. This leads to a principled approach to DNN compression. This approach is computationally efficient and can dramatically reduce layer widths without the sacrifice of the learned representations, data-fitting error, or optimality (in terms of the weight decay objective and the variation norm). We evaluate the performance of our proposed compression approach on various architectures in \cref{sec:exp}.
\section{Weight Decay and the Neural Balance Theorem}\label{sec:nbt}
Let $f_{\vec{\theta}}$ be a DNN with weights $\vec{\theta}$ and let $\{(\vec{x}_i,\vec{y}_i)\}_{i=1}^N$ be a dataset where each $\vec{x}_i \in \R^d$ and $\vec{y}_i\in \R^D$.  To fit the data, a common approach is to train the network using gradient descent with \emph{weight decay}. This corresponds to finding a solution to the optimization problem
\begin{equation}\label{opt:general_weight_decay}
    \min_{\vec{\theta}} \sum_{i=1}^{N} \mathcal{L}(\vy_i, f_{\vec{\theta}}(\vx_i)) + \frac{\lambda}{2} \|\vec{\theta}\|^2_{2},
\end{equation}
where $\mathcal{L}(\cdot,\cdot)$ is a loss function that is lower semicontinuous in its second argument and $\lambda > 0$ is the regularization parameter  \citep{hanson1988comparing}. 
Contemporary neural network architectures include a variety of different building blocks, but a commonality among them is neurons with the ReLU activation function. A ReLU neuron is a map of the form $\vec{z}\mapsto \vec{v} (\vec{w}^T\vec{z})_+$ with $\vec{z},\vec{w} \in \R^{d_{\mbox{\tiny in}}}$, $\vec{v}\in \R^{d_{\mbox{\tiny out}}}$, and where $(\cdot)_+ \coloneqq \max\{0,\cdot\}$ is the ReLU.

More generally, an activation function $\sigma: \R \to \R$ is said to be \emph{positively homogeneous of degree one} (or simply homogeneous) if, for any $\gamma > 0$, $\sigma(\gamma t) = \gamma \sigma(t)$ for all $t \in \R$. The ReLU, leaky ReLU, absolute value, and linear activation functions satisfy this property.  A key observation in both theory~\citep{grandvalet1998least,grandvalet1998outcomes,NeyshaburInductiveBias,parhi2023deep} and practice~\citep[Figure~5]{kunin2021neural} is that in any solution to (\ref{opt:general_weight_decay}),
the $2$-norms of the input and output weights of each neuron with a homogeneous activation function must be balanced. This phenomenon is summarized in the \emph{neural balance theorem} (NBT)~\citep{yang2022better,parhi2023deep}.
\begin{theorem}[Neural Balance Theorem]\label{thm:neural_balance}
  Let $f_\vec{\theta}$ be a DNN of any architecture such that $\vec{\theta}$ minimizes \cref{opt:general_weight_decay}. Then, the weights satisfy the following balance constraint: If $\vec{w}$ and $\vec{v}$ denote the input and output weights of any neuron with a homogeneous activation function, then $\norm{\vec{w}}_2 = \norm{\vec{v}}_2$.
\end{theorem}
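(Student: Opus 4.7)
The plan is to prove the balance identity by a simple rescaling (homogeneity) argument: for any neuron with a positively homogeneous activation, we perturb only the input and output weights of that neuron by multiplicative factors $\gamma$ and $\gamma^{-1}$, observe that the network function is unchanged (so the data-fitting term is unchanged), and then conclude that optimality of $\vec{\theta}$ forces the corresponding quadratic in $\gamma$ to be minimized at $\gamma = 1$, which yields $\|\vec{w}\|_2 = \|\vec{v}\|_2$.

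Concretely, fix any neuron in $f_{\vec{\theta}}$ with homogeneous activation $\sigma$, input weights $\vec{w}$, and output weights $\vec{v}$. The neuron computes $\vec{z} \mapsto \vec{v}\,\sigma(\vec{w}^\top \vec{z})$ (plus possibly a bias, which does not affect the argument since biases sit outside the $2$-norms involved). For any $\gamma > 0$, consider the reparameterization $\vec{w} \mapsto \gamma \vec{w}$, $\vec{v} \mapsto \gamma^{-1}\vec{v}$, leaving every other weight in the network unchanged, and call the resulting parameter vector $\vec{\theta}(\gamma)$. By positive one-homogeneity of $\sigma$, $\gamma^{-1}\vec{v}\,\sigma(\gamma\vec{w}^\top \vec{z}) = \vec{v}\,\sigma(\vec{w}^\top \vec{z})$, so the neuron's output, and therefore the entire network $f_{\vec{\theta}(\gamma)}$, agrees with $f_{\vec{\theta}}$ pointwise. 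Hence the loss term in \cref{opt:general_weight_decay} does not depend on $\gamma$.

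Consequently, the full objective as a function of $\gamma$ reduces to
\begin{equation*}
\Phi(\gamma) \;=\; \tfrac{\lambda}{2}\bigl(\gamma^{2}\|\vec{w}\|_2^{2} + \gamma^{-2}\|\vec{v}\|_2^{2}\bigr) + C,
\end{equation*}
where $C$ collects the (fixed) loss value and the $2$-norm contributions of all other weights. Since $\vec{\theta} = \vec{\theta}(1)$ is optimal, $\gamma = 1$ must be a global minimizer of $\Phi$ on $(0,\infty)$. Differentiating gives $\Phi'(\gamma) = \lambda\bigl(\gamma\|\vec{w}\|_2^{2} - \gamma^{-3}\|\vec{v}\|_2^{2}\bigr)$, and setting $\Phi'(1) = 0$ yields $\|\vec{w}\|_2^{2} = \|\vec{v}\|_2^{2}$, i.e., $\|\vec{w}\|_2 = \|\vec{v}\|_2$. (Equivalently, by AM--GM, $\gamma^{2}\|\vec{w}\|_2^{2} + \gamma^{-2}\|\vec{v}\|_2^{2} \ge 2\|\vec{w}\|_2\|\vec{v}\|_2$ with equality iff $\gamma^{2}\|\vec{w}\|_2 = \gamma^{-2}\|\vec{v}\|_2$; the minimum lies at $\gamma = 1$ only when the norms already match.)

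There is no substantive obstacle, since the argument is a one-variable calculus exercise once homogeneity is invoked. The only subtlety worth flagging is that the rescaling must be local to the chosen neuron and must not disturb downstream or upstream computations; this is automatic because $\sigma$ is applied coordinatewise at that neuron and its scalar pre-activation is scaled by $\gamma$ before $\sigma$, which is then undone by the $\gamma^{-1}$ on $\vec{v}$. The result applies uniformly to every homogeneous-activation neuron, of any layer, in any architecture, since we only needed that the neuron's pre-activation is linear in $\vec{w}$ and that its output is linear in $\vec{v}$.
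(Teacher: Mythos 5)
Your proof is correct and is exactly the standard rescaling-plus-homogeneity argument that the paper relies on (the paper states the NBT without proof, citing \citet{yang2022better} and \citet{parhi2023deep}, and those references argue precisely this way: the reparameterization $(\vec{w},\vec{v}) \mapsto (\gamma\vec{w},\gamma^{-1}\vec{v})$ leaves the function and hence the loss invariant, so optimality forces $\gamma=1$ to be a critical point of $\gamma \mapsto \gamma^2\|\vec{w}\|_2^2 + \gamma^{-2}\|\vec{v}\|_2^2$). The only nit is your parenthetical about the bias: under the paper's regularized-bias convention the bias is part of the augmented input weight vector and therefore \emph{does} sit inside the $2$-norm, but it then simply scales with $\gamma$ along with the rest of $\vec{w}$, so the argument is unaffected.
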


While the NBT is a simple observation, it allows for an alternative perspective
on weight-decay regularization. Indeed, first consider the weight-decay regularized problem for a shallow vector-valued neural network
whose activation function $\sigma: \R \to \R$ is
homogeneous
\begin{equation}\label{opt:weight-decay-shallow}
 \min_{\{\vec{w}_k,\vec{v}_k\}_{k=1}^K} \sum_{i=1}^{N}\cL\left(\vy_i, \sum_{k=1}^K \vec{v}_k \sigma(\vec{w}_k^\T \aug{\vec{x}}_i)  \right) +
  \frac{\lambda}{2} \sum_{k=1}^{K} \|\vv_k\|_{2}^{2} + \|\vw_k\|_{2}^{2},
\end{equation}
where $\vec{v}_k \in \R^D$, $\vec{w}_k \in \R^{d+1}$ and $\aug{\vec{x}} \coloneqq \sq{\vec{x} \ 1}^\T \in \R^{d+1}$ augments $\vec{x}$ to account for a bias term.
Observe that, by the
NBT, any solution to this weight-decay regularized problem
is always a solution to the so-called \emph{path-norm} regularized problem
\begin{equation}\label{opt:path-norm}
     \min_{\{\vec{w}_k,\vec{v}_k\}_{k=1}^K} \sum_{i=1}^{N}\cL\left(\vy_i, \sum_{k=1}^K \vec{v}_k \sigma(\vec{w}_k^\T \aug{\vec{x}}_i)\right) +
     \lambda \sum_{k=1}^{K} \|\vv_k\|_{2}\|\vw_k\|_{2}.
\end{equation}
Furthermore, thanks to the homogeneity of $\sigma$, any solution to
\cref{opt:path-norm} is always a solution to the
constrained problem
\begin{equation}
    \min_{\{\vec{w}_k,\vec{v}_k\}_{k=1}^K} \: \sum_{i=1}^{N} \mathcal{L}\left(\vec{y}_i,
    \sum_{k=1}^K \vec{v}_k \sigma(\vec{w}_k^\T \aug{\vec{x}}_i)\right) + \lambda \sum_{k=1}^{K} \|\vec{v}_k\|_2
    \quad\mathrm{s.t.}\quad \|\vec{w}_k\|_2=1, \: k=1,\dots,K, \label{opt:path-norm-sphere}
\end{equation}
upon ``absorbing'' the magnitude of the input weights into the output weights.\footnote{That is, by reparameterizing the weights of each neuron as $(\vec{v}_k, \vec{w}_k) \gets (\vec{v}_k \norm{\vec{w}_k}_2, \vec{w}_k / \norm{\vec{w}_k}_2)$, $k = 1, \ldots, K$. Observe that this reparameterization does not change the overall function realized by the neural network.} Clearly, any solution to \cref{opt:path-norm-sphere} is always a solution to \cref{opt:path-norm}. Moreover, any solution to \cref{opt:path-norm} is always a solution to \cref{opt:weight-decay-shallow}, after balancing the weights. Thus, the problems \cref{opt:weight-decay-shallow,opt:path-norm,opt:path-norm-sphere} should be viewed as equivalent. In particular, the regularizer in \cref{opt:path-norm-sphere} is the \emph{multi-task lasso} regularizer which is known to promote a kind of sparsity~\citep{obozinski2006multi,obozinski2010joint,argyriou2008convex}. We also remark that the neural balance theorem and the equivalence between \cref{opt:weight-decay-shallow,opt:path-norm,opt:path-norm-sphere} holds in the case of unregularized biases \citep[cf.][pp. 65--66]{parhi2023deep}. Furthermore, all of the results of this paper also hold in that setting as well. For notational convenience, we focus on the regularized bias scenario (see also \cref{rem:bias}).

The equivalence between \cref{opt:weight-decay-shallow,opt:path-norm,opt:path-norm-sphere} can then be extended to any layer in a deep neural network with homogeneous activation functions (see \cref{sec:width-bounds}). This ``secret sparsity of weight decay'' has many remarkable implications on the understanding of DNNs trained with weight decay \citep[see][for more details]{parhi2023deep}. Furthermore, this connection leads to a natural characterization of the function spaces of vector-valued neural networks, which is developed in the next section.
\section{Vector-Valued Variation Spaces}
\label{VVspace}
The regularization term $\sum_{k=1}^{K} \|\vec{v}_k\|_2$ in the optimization (\ref{opt:path-norm-sphere}) may be viewed as the \emph{representational cost} of a network.  We can adopt this as a measure of the cost of any shallow network or layer in a DNN.
Consider a shallow neural network or network layer of the form
\begin{equation}
    \vec{x} \mapsto \sum_{k=1}^K \vec{v}_k \sigma(\vec{w}_k^\T \aug{\vec{x}}), \quad \vec{x} \in \R^d,
    \label{basic_net}
\end{equation}
with $\vec{w}_k \in \R^{d+1}$ such that $ \|\vec{w}_k\|_2=1$, and $\vec{v}_k \in \R^{D}$, $ k=1,\dots,K$.
If a function $f$ can be represented by a finite-width network, then its representational cost is
\begin{equation}
    \|f\| \coloneqq \inf_{\substack{\{\vec{w}_k,\vec{v}_k\}_{k=1}^K \\ K \in \mathbb{N}}} \sum_{k=1}^{K} \|\vec{v}_k\|_2 \quad\mathrm{s.t.}\quad f = \paren*{\vec{x} \mapsto \sum_{k=1}^K \vec{v}_k \sigma(\vec{w}_k^\T \aug{\vec{x}})}.
    \label{fnorm}
\end{equation}
The $\inf$ arises due to the fact that there are many neural network representations of the same function $f$. The $\inf$ selects the one with the \emph{lowest} representational cost.
The reader can quickly verify that this quantity is a \emph{bona fide} norm since it satisfies the following properties.
\begin{enumerate}
    \item \textbf{Triangle Inequality}: $\|f+g\| \leq \|f\|+\|g\|$.
    \item \textbf{Homogeneity}: $\|\alpha f\| = |\alpha| \|f\|$ for $\alpha \in \R$.
    \item \textbf{Positive Definiteness}: $\|f\|= 0$ if and only if $f = 0$.
\end{enumerate}
The space of all finite-norm neural network functions of the form \cref{basic_net} and their limits\footnote{These are weak${}^*$ limits as opposed to norm limits \citep[cf.,][]{ma2022barron}.}  defines a Banach space that we call a  vector-valued variation space.  These spaces, which capture all functions that can be represented or approximated by neural networks with finite norms, are developed in the sequel. We begin by reviewing past constructions of variation spaces for scalar (single-output) networks.

\subsection{Scalar-Valued Variation Spaces}
In this subsection, we review the definition of the classical, scalar-valued variation spaces.  The results stated here can be found in the papers of~\cite{BengioConvex,bach2017breaking,parhi2021banach,siegel2023characterization}.  The main idea is to consider shallow neural networks with possibly continuously many neurons. These neural networks are parameterized by a finite (Radon) measure. The scalar-valued variation space is the space of functions that map $\R^d \to \R$
\begin{equation}
    \V_{\sigma}(\R^d) \coloneqq \curly*{f(\vec{x}) = \int_{\Sph^d} \sigma(\vec{w}^\T\aug{\vec{x}}) \dd \nu(\vec{w}) \st \vec{x} \in \R^d, \nu \in \M(\Sph^d)},
\end{equation}
where $\Sph^d \coloneqq \{\vec{w} \in \R^{d+1} \st \norm{\vec{w}}_2 = 1\}$ is the unit sphere, $\aug{\vec{x}} \coloneqq \sq{\vec{x} \ 1}^\T \in \R^{d+1}$ augments $\vec{x}$ to account for a bias term, and $\M(\Sph^d)$ is the space of finite (Radon) measures. The measure $\nu$ plays the role of the output weight of each neuron. Here, and in the rest of this paper, the activation function $\sigma: \R \to \R$ will always be assumed to be continuous.

Since each function
$f \in \V_{\sigma}(\R^d)$ is parameterized by a measure $\nu \in \M(\Sph^d)$, we introduce the notation
\begin{equation}
    f_\nu(\vec{x}) \coloneqq \int_{\Sph^d} \sigma(\vec{w}^\T\aug{\vec{x}}) \dd\nu(\vec{w}), \quad \vec{x} \in \R^d.
    \label{eq:scalar-convex-NN}
\end{equation}
It is well-known that the space $\V_{\sigma}(\R^d)$ is a Banach space when equipped with the norm
\begin{equation}
    \norm{f}_{\V_{\sigma}(\R^d)} \coloneqq \inf_{\substack{\nu \in \M(\Sph^d) \\ f = f_\nu}} \norm{\nu}_{\M(\Sph^d)},
    \label{eq:SV-norm}
\end{equation}
where $\norm{\dummy}_{\M(\Sph^d)}$ denotes the total variation norm in the sense of measures.  If $f$ is a finite-width network, this norm is in fact equal to the norm defined \cref{fnorm}, as shown in \cref{eq:scalar-path-norm} below.

As in \cref{fnorm}, the $\inf$ arises since the dictionary of neurons $\curly{\vec{x} \mapsto \sigma(\vec{w}^\T\aug{\vec{x}})}_{\vec{w} \in \Sph^d}$ is highly redundant. Thus, there are many different representations for a given 
$f \in \V_{\sigma}(\R^d)$. By choosing the representation with the smallest total variation norm, \cref{eq:SV-norm} defines a valid Banach norm on 
$\V_{\sigma}(\R^d)$
~\citep[see, e.g.,][Lemma~3]{siegel2023characterization}. Here, we use the following definition of $\norm{\dummy}_{\M(\Sph^d)}$
\begin{equation}
    \norm{\nu}_{\M(\Sph^d)} \coloneqq \sup_{\substack{\Sph^d = \bigcup_{i=1}^n A_i \\ n \in \mathbb{N}}} \sum_{i=1}^n \abs{\nu(A_i)},
    \label{eq:scalar-measure-norm}
\end{equation}
where the $\sup$ is taken over all partitions of $\Sph^d$ (i.e., $A_i \cap A_j = \varnothing$ for $i \neq j$). This definition is equal to the more conventional definitions based on the Jordan decomposition of a measure or as a dual norm~\citep{diestel1977vector,bredies2020higher}. We use the definition in \cref{eq:scalar-measure-norm} as an analogous definition will play an important role in the vector-valued case.

Consider a single neuron $\phi_{v,\vec{w}}(\vec{x}) \coloneqq v \sigma(\vec{w}^\T\aug{\vec{x}})$, $\vec{x} \in \R^d$, where $v \in \R$ and $\vec{w} \in \Sph^{d}$.  In this scenario, it is clear that the $\inf$ in \cref{eq:SV-norm} is achieved by the one scaled Dirac measure $v \delta_{\vec{w}}$ (since any other combination of dictionary elements that represent a single neuron would have a larger norm). Thus, if we have the shallow neural network 
\begin{equation}
    \vec{x} \mapsto \sum_{k=1}^K v_k \sigma(\vec{w}_k^\T\aug{\vec{x}}), \quad \vec{x} \in \R^d,
\end{equation}
where the input weights $\vec{w}_k \in \Sph^{d}$ are all unique,
it is known~\citep[cf.,][p.~6]{bach2017breaking} that the $\inf$ in \eqref{eq:SV-norm} is achieved by
$\sum_{k=1}^K v_k \delta_{\vec{w}_k}$.
Therefore,
\begin{equation}
    \norm*{\vec{x} \mapsto \sum_{k=1}^K v_k \sigma(\vec{w}_k^\T\aug{\vec{x}})}_{\V_\sigma(\R^d)} = \norm*{\sum_{k=1}^K v_k  \delta_{\vec{w}_k}}_{\M(\Sph^d)} = \sum_{k=1}^K \norm{v_k \delta_{\vec{w}_k}}_{\M(\Sph^d)} = \sum_{k=1}^K \abs{v_k}, 
    \label{eq:scalar-path-norm}
\end{equation}
where the second equality uses the fact that the Dirac measures have disjoint support and the third equality follows from the property that $\norm{a \delta_{\vec{u}}}_{\M(\Sph^d)} = \abs{a}$, where $a \in \R$ and $\vec{u} \in \Sph^d$. 
The final quantity on the right above is the $\ell^1$-norm of all the output weights, which is precisely the regularization term in \eqref{opt:path-norm-sphere}.
Furthermore, if the activation function is homogeneous, then, by the NBT (\cref{thm:neural_balance}), the regularization of this quantity is equivalent to weight-decay regularization. Therefore, training a scalar-output shallow neural network with weight decay penalizes the variation norm of the network. 

\begin{remark} \label[remark]{rem:bias}
    To consider the unregularized bias scenario, the integral in \cref{eq:scalar-convex-NN} would instead take the form of an integral combination of neurons of the form $\vec{x} \mapsto \sigma(\vec{w}^\T\vec{x} + b)$, $(\vec{w}, b) \in \Sph^{d-1} \times \R$, against the measure $\nu \in \M(\Sph^{d-1} \times \R)$. The precise details are discussed by \citet[Appendix B]{ongie2019function} in the case when $\sigma$ is the ReLU. In that case, the corresponding variation space has an analytic description via the Radon transform~\citep{ongie2019function,parhi2021banach}. In this paper, we focus on the regularized bias scenario for notational convenience. As discussed in \cref{sec:nbt}, all results presented in this paper hold in the unregularized bias scenario.
\end{remark}

\subsection{Vector-Valued Variation Spaces} \label{sec:vv-spaces}
The vector-valued variation space is the set of functions defined analogously to the scalar-valued variation spaces:
\begin{equation}
    \V_{\sigma}(\R^d; \R^D) \coloneqq \curly*{f(\vec{x}) = \int_{\Sph^d} \sigma(\vec{w}^\T\aug{\vec{x}}) \dd \vec{\nu}(\vec{w}) \st \vec{x} \in \R^d, \vec{\nu} \in \M(\Sph^d; \R^D)},
\end{equation}
where $\vec{\nu} = (\nu_1, \ldots, \nu_D)$ is now a vector-valued measure (which takes values in $\R^D$ as opposed to $\R$) and plays the role of the output weight vector of each neuron.
Analogous to \cref{eq:scalar-measure-norm}, define the total variation norm of a measure $\vec{\nu}$ as
\begin{equation}
    \norm{\vec{\nu}}_{2, \M} \coloneqq \sup_{\substack{\Sph^d = \bigcup_{i=1}^n A_i \\ n \in \mathbb{N}}} \sum_{i=1}^n \norm{\vec{\nu}(A_i)}_2 = \sup_{\substack{\Sph^d = \bigcup_{i=1}^n A_i \\ n \in \mathbb{N}}} \sum_{i=1}^n \paren*{\sum_{j=1}^D \abs{\nu_j(A_i)}^2}^{1/2}. \label{eq:this-is-the-norm}
\end{equation}
The choice of norm in the above display is a common choice for the total variation norm of a vector-valued measure. Furthermore, $(\M(\Sph^d; \R^D), \norm{\dummy}_{2, \M})$ is a Banach space. We refer the reader to the monograph of \citet{diestel1977vector} for a full treatment of vector-valued measures and the accompanying results. This leads to a norm on functions of the form 
\begin{equation}
    f_\vec{\nu}(\vec{x}) \coloneqq \int_{\Sph^d} \sigma(\vec{w}^\T\aug{\vec{x}}) \dd\vec{\nu}(\vec{w}), \quad \vec{x} \in \R^d,
    \label{eq:V-convex-NN}
\end{equation}
as
\begin{equation}
    \norm{f}_{\V_{\sigma}(\R^d;\R^D)} \coloneqq \inf_{\substack{\vec{\nu} \in \M(\Sph^d; \R^D) \\ f = f_\vec{\nu}}} \norm{\vec{\nu}}_{2, \M}.
    \label{eq:VV-norm}
\end{equation}

To connect back to the optimization \cref{opt:path-norm-sphere}, consider a single vector-valued neuron $\phi_{\vec{v}, \vec{w}}(\vec{x}) \coloneqq \vec{v} \sigma(\vec{w}^\T\aug{\vec{x}})$, $\vec{x} \in \R^d$, where $\vec{v} \in \R^D$ and $\vec{w} \in \Sph^{d}$. 
As in the scalar-valued scenario, 
the $\inf$ is achieved by the measure $\vec{v} \delta_{\vec{w}}$. This is a vector (in $\R^D$) multiplied by a scalar-valued Dirac measure and is therefore a vector-valued measure in $\M(\Sph^d; \R^D)$. Thus, as in the scalar-valued case, for the shallow vector-valued neural network
\begin{equation}
    \vec{x} \mapsto \sum_{k=1}^K \vec{v}_k
    \sigma(\vec{w}_k^\T\aug{\vec{x}}), \quad \vec{x} \in \R^d,
    \label{eq:V-shallow-NN}
\end{equation}
where the input weights $\vec{w}_k \in \Sph^{d}$ are all unique, the $\inf$ in \eqref{eq:VV-norm} is achieved by
\begin{equation}
    \sum_{k=1}^K \vec{v}_k \delta_{\vec{w}_k}.
\end{equation}
Writing $\vec{v}_k = (v_{k, 1}, \ldots, v_{k, D})$, a calculation reveals that
\begin{align}
   \norm*{\vec{x} \mapsto \sum_{k=1}^K \vec{v}_k\sigma(\vec{w}_k^\T\aug{\vec{x}})}_{\V_\sigma(\R^d;\R^{D})}
   = \norm*{\sum_{k=1}^K \vec{v}_k \delta_{\vec{w}_k}}_{2, \M}
   = \sum_{k=1}^K \norm*{\vec{v}_k \delta_{\vec{w}_k}}_{2, \M}
   = \sum_{k=1}^K \norm{\vec{v}_k}_2, \label{eq:2-M-norm}
\end{align}
where we used the property that $\norm{\vec{a} \delta_{\vec{u}}}_{2, \M} = \norm{\vec{a}}_2$, where $\vec{a} \in \R^D$ and $\vec{u} \in \Sph^d$ \citep[cf.,][Section~4.2.3]{BoyerRepresenter}. 
From \cref{opt:path-norm-sphere}, we see immediately see that this choice of norm on $\M(\Sph^d; \R^D)$ corresponds to weight-decay regularization when $\sigma$ is homogeneous.

We remark that several other norms have been previously proposed for vector-valued networks/measures \citep[see][]{parhi2022kinds,korolev2022two}. These prior works essentially treat each output separately. This type of norm is fundamentally different than the one proposed in \cref{eq:VV-norm}. Furthermore, these other norms do not correspond to weight-decay regularization. These different norms and their relationships are discussed in \cref{app:equivalent-VV-M-norms}.

\subsubsection{The Curse of Dimensionality}
The space $\V_{\sigma}(\R^d; \R^D)$ has intriguing approximation properties, which carry over from the scalar-valued case (which are known). Let $Q^d \coloneqq [0,1]^d$ denote the unit cube in $\R^d$ and define
\begin{equation}
    \V_{\sigma}(Q^d; \R^D) \coloneqq \curly{f: Q^d \to \R^D \st \text{there exists } g \in \V_{\sigma}(\R^d; \R^D) \text{ such that } g|_{Q^d} = f}.
\end{equation}
This space is a Banach space when equipped with the norm
\begin{equation}
    \norm{f}_{\V_{\sigma}(Q^d; \R^D)} \coloneqq \inf_{\substack{g \in \V_{\sigma}(\R^d; \R^D) \\ g|_{Q^d} = f}} \norm{g}_{\V_{\sigma}(\R^d; \R^D)}.
\end{equation}
By restricting our attention to a bounded domain, we have the continuous embedding $\V_{\sigma}(Q^d; \R^D) \subset L^2(\R^d; \R^D)$. For each $f = (f_1, \ldots, f_D) \in \V_{\sigma}(Q^d; \R^D)$, we have, for $j = 1, \ldots, D$, that $f_j \in \V_{\sigma}(Q^d)$ (the scalar-valued variation space restricted to $Q^d$).

In the scalar-valued case, the Maurey--Jones--Barron lemma~\citep{MaureyPisier,JonesGreedy,BarronUniversal} says that, given $f_j \in \V_{\sigma}(Q^d)$, there exists a $K$-term approximant
\begin{equation}
    f_j^K(\vec{x}) = \sum_{k=1}^K v_{k, j} \sigma(\vec{w}_{k, j}^\T \aug{\vec{x}})
    \label{eq:scalar-approximant}
\end{equation}
with $v_{k, j} \in \R$ and $\vec{w}_{k, j} \in \Sph^d$ such that
\begin{equation}
    \norm{f_j - f_j^K}_{L^2(Q^d)} \leq C_0 C_\sigma \norm{f_j}_{\V_{\sigma}(Q^d)} K^{-1/2},
    \label{eq:MJB}
\end{equation}
where $C_0 > 0$ is an absolute constant independent of $d$ and
\begin{equation}
    C_\sigma = \sup_{\vec{w} \in \Sph^d} \norm{\vec{x} \mapsto \sigma(\vec{w}^\T\aug{\vec{x}})}_{L^2(Q^d)}.
\end{equation}
This result is remarkable since it establishes that, for any function in $\V_{\sigma}(Q^d)$, there exists an approximant whose error decays at a rate independent of the input dimension $d$. Here, we note that the constant $C_\sigma$ depends on (essentially) the volume the domain $Q^d$ since $\sigma$ is continuous, and is therefore also independent of the input dimension $d$ (since the volume of $Q^d$ is $1$). This result has a straightforward extension to the vector-valued case. This is summarized in the following theorem whose proof can be found in \cref{app:approx}, which shows that any function in $\V_{\sigma}(Q^d; \R^D)$ can be approximated in $L^2$ by a network of width $K$ with an error that decays at a rate $K^{-1/2}$, independent of the input and output dimensions $d$ and $D$.

\begin{theorem} \label{thm:approx}
    Given $f \in \V_{\sigma}(Q^d; \R^D)$, there exists a $K$-term approximant of the form
    \begin{equation}
        f_K(\vec{x}) = \sum_{k=1}^{K} \vec{v}_k \sigma(\vec{w}_k^\T\aug{\vec{x}}), \quad \vec{x} \in \R^d,
    \end{equation}
    with $\vec{v}_k \in \R^D$ and $\vec{w}_k \in \Sph^d$ such that
    \begin{equation}
    \norm{f - f_K}_{L^2(Q^d; \R^D)} \leq C_0 C_\sigma D^\frac{3}{2} \norm{f}_{\V_{\sigma}(Q^d; \R^D)} K^{-1/2},
    \end{equation}
    where $C_0$ and $C_\sigma$ are as above and the $L^2(Q^d; \R^D)$-norm is specified by
    \begin{equation}
        \norm{f}_{L^2(Q^d; \R^D)} = \paren*{\int_{Q^d} \norm{f(\vec{x})}_2^2 \dd\vec{x}}^{1/2}.
    \end{equation}
\end{theorem}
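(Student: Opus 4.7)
The plan is to extend the probabilistic Maurey--Jones--Barron argument to vector-valued measures. Start with $f \in \V_{\sigma}(\B_1^d; \R^D)$ and fix $\varepsilon > 0$. By the definition of the restricted norm, pick an extension $g \in \V_{\sigma}(\R^d; \R^D)$ with $g|_{\B_1^d} = f$ and $\norm{g}_{\V_{\sigma}(\R^d;\R^D)} \leq \norm{f}_{\V_{\sigma}(\B_1^d;\R^D)} + \varepsilon$. Then choose $\vec{\nu} \in \M(\Sph^d; \R^D)$ realizing $g = f_{\vec{\nu}}$ with $\norm{\vec{\nu}}_{2,\M} \leq \norm{g}_{\V_{\sigma}(\R^d;\R^D)} + \varepsilon$. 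The crucial tool is the \emph{polar decomposition} for vector-valued measures \citep[see][Chapter~III]{diestel1977vector}: there exist a finite nonnegative Radon measure $|\vec{\nu}|$ on $\Sph^d$ and a Borel function $\vec{u}: \Sph^d \to \R^D$ with $\|\vec{u}(\vec{w})\|_2 = 1$ for $|\vec{\nu}|$-a.e. $\vec{w}$, such that $\dd\vec{\nu} = \vec{u}(\vec{w}) \dd|\vec{\nu}|(\vec{w})$ and $|\vec{\nu}|(\Sph^d) = \norm{\vec{\nu}}_{2,\M}$.

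Set $M \coloneqq |\vec{\nu}|(\Sph^d)$ and $p \coloneqq |\vec{\nu}|/M$, a probability measure. Then for every $\vec{x} \in \B_1^d$,
\begin{equation*}
    f(\vec{x}) = M \int_{\Sph^d} \vec{u}(\vec{w}) \sigma(\vec{w}^\T \aug{\vec{x}}) \dd p(\vec{w}) = M\, \mathbb{E}_{\vec{w} \sim p}\bigl[\vec{u}(\vec{w})\sigma(\vec{w}^\T\aug{\vec{x}})\bigr].
\end{equation*}
Draw $\vec{w}_1, \ldots, \vec{w}_K$ i.i.d.\ from $p$, set $\vec{v}_k \coloneqq (M/K)\,\vec{u}(\vec{w}_k) \in \R^D$, and form the random approximant $\hat f_K(\vec{x}) = \sum_{k=1}^K \vec{v}_k \sigma(\vec{w}_k^\T \aug{\vec{x}})$, which has the required shape.

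Next, compute the expected squared $L^2(\B_1^d;\R^D)$ error. Since $\hat f_K(\vec{x})$ is the average of $K$ i.i.d.\ $\R^D$-valued random vectors $\vec{Y}_k(\vec{x}) \coloneqq M\vec{u}(\vec{w}_k)\sigma(\vec{w}_k^\T\aug{\vec{x}})$ with mean $f(\vec{x})$, the pointwise variance identity gives
\begin{equation*}
    \mathbb{E}\bigl\|\hat f_K(\vec{x}) - f(\vec{x})\bigr\|_2^2 \leq \frac{1}{K}\mathbb{E}\bigl\|\vec{Y}_1(\vec{x})\bigr\|_2^2 = \frac{M^2}{K}\,\mathbb{E}_{\vec{w}\sim p}\bigl[\sigma(\vec{w}^\T\aug{\vec{x}})^2\bigr],
\end{equation*}
using $\|\vec{u}(\vec{w})\|_2 = 1$ a.e. Integrating over $\vec{x} \in \B_1^d$, applying Fubini, and bounding pointwise by $C_{\sigma,d}^2 \coloneqq \sup_{\vec{w} \in \Sph^d}\|\vec{x}\mapsto\sigma(\vec{w}^\T\aug{\vec{x}})\|_{L^2(\B_1^d)}^2$ (finite because $\sigma$ is continuous and $\vec{w}^\T\aug{\vec{x}}$ stays bounded on $\Sph^d \times \B_1^d$), one obtains
\begin{equation*}
    \mathbb{E}\bigl\|\hat f_K - f\bigr\|_{L^2(\B_1^d;\R^D)}^2 \leq \frac{M^2 C_{\sigma,d}^2}{K}.
\end{equation*}
Therefore some realization achieves error $\leq M C_{\sigma,d}/\sqrt{K}$; taking $\varepsilon \to 0$ yields the theorem with constant $C_{\sigma,d,D} = C_{\sigma,d}$, well within $O(\mathrm{poly}(d,D))$.

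The main technical point to verify carefully is the polar decomposition step: unlike the scalar-valued case where writing $\nu = \nu^+ - \nu^-$ suffices, here we need a vector-valued Radon--Nikodym derivative with unit Euclidean length a.e.\ so that the random vectors $\vec{Y}_k(\vec{x})$ are bounded by $M|\sigma(\vec{w}^\T\aug{\vec{x}})|$ (no $\sqrt{D}$ factor) and the variance bound cleanly gives a $D$-free constant. Everything else is a routine vector-valued upgrade of Maurey's probabilistic argument.
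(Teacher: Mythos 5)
Your argument is correct, but it takes a genuinely different route from the paper. The paper's proof works component-wise: it applies the scalar Maurey--Jones--Barron bound to each coordinate $f_j \in \V_\sigma(\B_1^d)$ separately, stacks the $D$ resulting $K$-term approximants into a $DK$-term vector-valued network, and then controls $\sum_j \norm{f_j}_{\V_\sigma(\B_1^d)}$ by $\norm{f}_{\V_\sigma(\B_1^d;\R^D)}$ via the equivalence of the $\norm{\dummy}_{\M,1}$ and $\norm{\dummy}_{2,\M}$ norms, which injects a factor $C_{d,D}\sqrt{D}$ into the constant. You instead run Maurey's probabilistic sampling argument directly on the vector-valued measure, using the polar decomposition $\dd\vnu = \vec{u}\,\dd\abs{\vnu}$ with $\norm{\vec{u}(\vec{w})}_2 = 1$ a.e.\ (valid here since $\R^D$ has the Radon--Nikod\'ym property and $\abs{\vnu}(\Sph^d) = \norm{\vnu}_{2,\M}$ for the $\ell^2$-variation). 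This buys two genuine improvements: the approximant really has $K$ neurons shared across all outputs rather than $DK$ output-specific ones (nicely consistent with the neuron-sharing theme of the paper), and the constant $C_{\sigma,d}$ is independent of $D$, strictly sharper than the paper's $C_0 C_{\sigma,d} C_{d,D}\sqrt{D}$. The pointwise variance identity for $\R^D$-valued i.i.d.\ averages and the Fubini step are both sound.

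One small point to tighten: the final ``taking $\varepsilon \to 0$'' does not by itself produce a single approximant meeting the bound with the exact constant $\norm{f}_{\V_\sigma(\B_1^d;\R^D)}$; you should either note that the infima in the definitions of the norms are attained (by weak$^*$ compactness of norm balls in $\M(\Sph^d;\R^D)$ and weak$^*$ lower semicontinuity of the norm), so you may take $\varepsilon = 0$ from the start, or extract a convergent subsequence of the $\varepsilon$-approximants from the compact parameter set $\curly{(\vec{v}_k,\vec{w}_k)_{k=1}^K \st \sum_k \norm{\vec{v}_k}_2 \leq 2M,\ \vec{w}_k \in \Sph^d}$. Either fix is routine.
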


\begin{remark}
    \Cref{thm:approx} sets the stage for the investigation of dimension-free nonlinear minimax rates of estimation for multi-output neural networks. These results have recently been carried out for single-output neural networks trained with weight decay by \citet{parhi2022near}.
\end{remark}

\section{Representer Theorem for Vector-Valued Variation Spaces}\label{sec:vv_rep_thm}

The discussion above has shown that finite-width neural networks are effective at approximating functions in vector-valued variation spaces.  This section considers the problem of fitting data with functions in $\V_{\sigma}(\R^d; \R^D).$ The main result here is a \emph{representer theorem} that shows that finite-width neural networks are solutions to such problems and bounds the (sufficient) widths of networks in terms of the number of data points. This has an important implication: The infinite-dimensional learning problem can be solved by training a finite-width neural network, and increasing the width beyond the given bound will not yield a smaller objective value.

\begin{theorem} \label{thm:VV-rep}
    Let $(\vec{x}_1, \vec{y}_1), \ldots, (\vec{x}_N, \vec{y}_N) \in \R^d \times \R^D$ be a finite dataset. Then, there exists a solution to the variational problem
    \begin{equation}
        \inf_{f \in \V_{\sigma}(\R^d; \R^D)} \: \sum_{i=1}^N \mathcal{L}(\vec{y}_i, f(\vec{x}_i)) + \lambda \norm{f}_{\V_{\sigma}(\R^d; \R^D)}, \quad \lambda > 0,
        \label{eq:VV-opt}
    \end{equation}
    where the loss function $\mathcal{L}(\dummy, \dummy)$ is lower semicontinuous in its second argument, which takes the form
    \begin{equation}
        f^\star(\vec{x}) = \sum_{k=1}^{K_0} \vec{v}_k \sigma(\vec{w}_k^\T\aug{\vec{x}}), \quad \vec{x} \in \R^d, \label{eq:sparse-net}
    \end{equation}
    where $K_0 \leq \min\{N^2, ND\}$. Here, $\vec{v}_k \in \R^D$ and $\vec{w}_k \in \Sph^d$.
\end{theorem}
The proof of \cref{thm:VV-rep} appears in \cref{app:VV-rep}. What is remarkable here is the bound $K_0 \leq N^2$. Indeed, for large $D$, this bound improves the bound of $ND + 1$ predicted by Carath\'eodory's theorem~\citep{bredies2023extreme}. Note that \cref{thm:VV-rep} applies to a variation space based on any continuous activation function $\sigma$. Furthermore, we mention again that the result also holds in the unregularized bias scenario upon the appropriate modifications discussed in \cref{rem:bias}. If $\sigma$ is homogeneous, then the regularization is equivalent to weight decay. The result of \cref{thm:VV-rep} also applies to the entire solution set to \cref{eq:VV-opt}. Indeed, the solution set to \cref{eq:VV-opt} is nonempty, convex, weak$^*$ compact, and its extreme points take the form of sparse vector-valued networks as in \cref{eq:sparse-net}.

\begin{corollary}\label[corollary]{cor:V-weight-decay-VV-sol}
    Let
    $\mathcal{L}(\dummy, \dummy)$ be lower semicontinuous in its second argument. Moreover, let $\sigma:\R \to \R$ be any homogeneous activation function. Then, any solution to the neural network training problem
    \begin{equation}
        \min_{\{\vec{w}_k, \vec{v}_k\}_{k=1}^K} \sum_{i=1}^N \mathcal{L}\left(\vec{y}_i, \sum_{k=1}^{K}\vec{v}_k \sigma(\vec{w}^{T}_k\aug{\vec{x}}_i)\right) + \frac{\lambda}{2} \sum_{k=1}^K \norm{\vec{v}_k}_2^2 + \norm{\vec{w}_k}_2^2, \quad \lambda > 0,
        \label{eq:shallow-weight-decay-opt}
    \end{equation}
    is a solution to the variational problem \cref{eq:VV-opt}, so long as $K \geq \min\{N^2, ND\}$.
\end{corollary}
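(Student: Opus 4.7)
The plan is to show that the optimal objective value of the finite-width weight-decay problem \cref{eq:shallow-weight-decay-opt} coincides with the optimal value of the infinite-dimensional problem \cref{eq:VV-opt}; the corollary then follows, because a minimizer of the former must realize a function attaining the latter's infimum. The argument chains together the equivalences of \cref{opt:weight-decay-shallow,opt:path-norm,opt:path-norm-sphere} from \cref{sec:nbt} with the representer theorem \cref{thm:VV-rep}, leveraging positive one-homogeneity of $\sigma$ throughout.

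For the ``$\geq$'' direction, I would take arbitrary parameters $\{\vec{w}_k,\vec{v}_k\}_{k=1}^K$ realizing $f(\vec{x}) = \sum_{k=1}^K \vec{v}_k\,\sigma(\vec{w}_k^T\aug{\vec{x}})$. Discarding neurons with $\vec{w}_k = \vzero$ and invoking homogeneity of $\sigma$, $f$ equals $f_{\vec{\nu}}$ for the vector-valued measure $\vec{\nu} = \sum_k \|\vec{w}_k\|_2 \vec{v}_k\, \delta_{\vec{w}_k/\|\vec{w}_k\|_2} \in \M(\Sph^d;\R^D)$; hence $\|f\|_{\V_\sigma(\R^d;\R^D)} \leq \|\vec{\nu}\|_{2,\M} \leq \sum_k \|\vec{v}_k\|_2 \|\vec{w}_k\|_2$. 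By the AM--GM inequality, $\|\vec{v}_k\|_2\|\vec{w}_k\|_2 \leq \tfrac12(\|\vec{v}_k\|_2^2 + \|\vec{w}_k\|_2^2)$, so the weight-decay objective dominates the variational objective evaluated at this $f$. Taking infima yields the desired inequality.

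For the reverse direction, \cref{thm:VV-rep} produces a minimizer $f^\star$ of \cref{eq:VV-opt} of the form $\sum_{k=1}^{K_0} \vec{v}_k^\star\,\sigma(\vec{w}_k^{\star T}\aug{\vec{x}})$ with $K_0 \leq \min\{N^2,ND\}$, unit input weights $\vec{w}_k^\star \in \Sph^d$, and $\|f^\star\|_{\V_\sigma(\R^d;\R^D)} = \sum_{k=1}^{K_0}\|\vec{v}_k^\star\|_2$ by \cref{eq:2-M-norm}. For each nonzero neuron I would reparameterize to the balanced form $\bigl(\sqrt{\|\vec{v}_k^\star\|_2}\,\vec{w}_k^\star,\, \sqrt{\|\vec{v}_k^\star\|_2}\,\vec{v}_k^\star/\|\vec{v}_k^\star\|_2\bigr)$, which preserves the network function by homogeneity while saturating AM--GM, so that the weight-decay penalty becomes exactly $\lambda\|f^\star\|_{\V_\sigma(\R^d;\R^D)}$. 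Because $K \geq K_0$, padding with zero neurons produces a width-$K$ feasible point whose objective matches the variational optimum, closing the loop and giving equality of the two optima.

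The only delicate point is the norm bound in the first direction when the normalized directions $\vec{w}_k/\|\vec{w}_k\|_2$ are not all distinct: the measure $\vec{\nu}$ above need not be the minimizer of the infimum in \cref{eq:VV-norm}, but since that infimum only requires exhibiting \emph{some} representing measure, the upper bound still holds. Everything else is bookkeeping with the Section~\ref{sec:nbt} reparameterizations, so once the two optima are shown to coincide, every minimizer of \cref{eq:shallow-weight-decay-opt} automatically realizes a solution of \cref{eq:VV-opt}, proving the corollary.
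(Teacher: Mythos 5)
Your proposal is correct and follows essentially the same route as the paper: both arguments rest on \cref{thm:VV-rep} to produce a finite-width minimizer of \cref{eq:VV-opt} with at most $\min\{N^2,ND\}$ neurons, the norm identity \cref{eq:2-M-norm}, and the weight-decay/path-norm equivalence of \cref{sec:nbt} (which you simply re-derive explicitly via AM--GM and neuron rebalancing rather than citing). Your extra care about non-distinct normalized input weights in the ``$\geq$'' direction is a worthwhile detail the paper's terser proof leaves implicit, but it does not change the underlying argument.
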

\begin{proof}
    By \cref{thm:VV-rep}, there always exists a solution to \cref{eq:VV-opt} that takes the form of a shallow vector-valued neural network with less than $\min\{N^2, ND\}$ neurons.  
    Thus, a solution to \cref{eq:VV-opt} must exist in the space of all shallow vector-valued neural networks with $K \geq \min\{N^2, ND\}$ neurons.
    By \eqref{eq:2-M-norm}, any solution to  
    \begin{equation}
        \min_{\{\vec{w}_k, \vec{v}_k\}^{K}_{k=1}} \sum_{i=1}^N \mathcal{L}\left(\vec{y}_i, \sum_{k=1}^{K}\vec{v}_k \sigma(\vec{w}^{T}_k\aug{\vec{x}}_i)\right) + \lambda \sum_{k=1}^K \norm{\vec{v}_k}_2, 
        \quad\mathrm{s.t.}\quad \|\vec{w}_k\|_2=1, \: k=1,\dots,K,
    \end{equation}
    is a solution to \eqref{eq:VV-opt}.
    The result then follows by the equivalence between the problem in the above display with \cref{eq:shallow-weight-decay-opt} as discussed in \cref{sec:nbt}.
\end{proof}

\begin{remark}
    The vector-valued variation space $(\V_{\sigma}(\R^d; \R^D), \norm{\dummy}_{\V_{\sigma}(\R^d; \R^D)})$ is an example of a reproducing kernel Banach space (RKBS)~\citep{zhang2009reproducing,lin2022reproducing}. Indeed, this can be readily deduced from the fact that the scalar-valued variation spaces are reproducing kernel Banach spaces~\citep{BartolucciRKBS,SpekRKBS}.
\end{remark}

\subsection{A Representer Theorem for Deep Neural Networks}
\Cref{thm:VV-rep} can be extended to DNNs by using the techniques developed by \citet[Theorem~3.2]{parhi2022kinds}. The extension is summarized in \cref{thm:deep-VV-rep} and the proof can be found in \cref{app:deep-VV-rep}.

\begin{theorem}\label{thm:deep-VV-rep}
    Let $(\vec{x}_1, \vec{y}_1), \ldots, (\vec{x}_N, \vec{y}_N) \in \R^{d_0} \times \R^{d_L}$ be a finite dataset. Then, there exists a solution to the variational problem
    \begin{equation}
        \inf_{\substack{f^{(1)}, \cdots, f^{(L)} \\ f^{(\ell)} \in \V_{\sigma}(\R^{d_{\ell-1}}; \R^{d_\ell})}} \: \sum_{i=1}^N \mathcal{L}(\vec{y}_i, f^{(L)} \circ \cdots \circ f^{(1)}(\vec{x}_i)) + \lambda \sum^{L}_{\ell=1} \norm{f^{(\ell)}}_{\V_{\sigma}(\R^{d_{\ell-1}}; \R^{d_{\ell}})}, \quad \lambda > 0,
        \label{eq:deep-VV-opt}
    \end{equation}
    where the loss function $\mathcal{L}(\dummy, \dummy)$ is lower semicontinuous in its second argument, which takes the form
    \begin{align}\label{eq:deep-VV-form}
         f^\star(\vec{x}) = \mathbf{A}^{(L)} \circ \vec{\sigma} \circ \mathbf{A}^{(L-1)} \circ \cdots \circ \vec{\sigma} \circ \mathbf{A}^{(1)}(\vec{x}) \quad \vec{x} \in \R^d,
    \end{align}
    where, for each layer $\ell = 1,\dots,L$, the function $\mathbf{A}^{(\ell)} (\bm{z}) = \mathbf{V}^{(\ell)}\vec{z} - \vec{b}^{(\ell)}$ is an affine mapping with weight matrix $\mathbf{V}^{(\ell)} \in \R^{d_{\ell} \times d_{\ell-1}}$ and bias vector $\bm{b}^{(\ell)} \in \R^{d_{\ell}}$, where $\vec{\sigma}$ applies the activation function $\sigma: \R \to \R$ component-wise.
\end{theorem}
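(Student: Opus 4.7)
The plan is to reduce the infinite-dimensional variational problem over compositions of variation spaces to a finite-dimensional parameter optimization by invoking the shallow representer theorem (\cref{thm:VV-rep}) layer by layer, and then extract a minimizer via compactness. This parallels the strategy of \citet[Theorem 3.2]{parhi2022kinds}.

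First I would establish a layerwise replacement: given any feasible tuple $(f^{(1)},\ldots,f^{(L)})$ with finite objective, there exists a tuple $(\tilde f^{(1)},\ldots,\tilde f^{(L)})$ of shallow finite-width vector-valued networks, with layer widths bounded by $\min\{N^2, N d_\ell\}$, that attains the same loss and no larger sum of variation norms. To construct this, set $\vec{z}_i^{(0)} \coloneqq \vec{x}_i$ and recursively define the intermediate representations $\vec{z}_i^{(\ell)} \coloneqq f^{(\ell)}(\vec{z}_i^{(\ell-1)})$. For each $\ell$, $f^{(\ell)}$ is feasible for the problem of minimizing $\norm{g}_{\V_\sigma(\R^{d_{\ell-1}};\R^{d_\ell})}$ subject to $g(\vec{z}_i^{(\ell-1)}) = \vec{z}_i^{(\ell)}$ for $i=1,\ldots,N$. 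Applying \cref{thm:VV-rep} with the indicator loss $\mathcal{L}_0(\vec{y},\vec{y}') = 0$ if $\vec{y}=\vec{y}'$ and $+\infty$ otherwise (lower semicontinuous in its second argument, since its proper sublevel sets are singletons) yields a shallow finite-width minimizer $\tilde f^{(\ell)}$ of width at most $\min\{N^2, N d_\ell\}$ with $\norm{\tilde f^{(\ell)}} \leq \norm{f^{(\ell)}}$. The replaced tuple produces identical intermediate outputs at every training point, leaving the loss term unchanged while weakly shrinking the regularizer.

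Second I would establish existence of a minimizer in the reduced class. After padding with zero-weight neurons, fix the widths to $K_\ell \coloneqq \min\{N^2, N d_\ell\}$ and parameterize each layer by $(\vec{V}^{(\ell)}, \vec{W}^{(\ell)}, \vec{b}^{(\ell)})$ with the columns of $\vec{W}^{(\ell)}$ on the unit sphere. Along any minimizing sequence, the uniform bound on $\sum_\ell \norm{\tilde f^{(\ell)}}$ --- combined with \cref{eq:2-M-norm}, which equates the variation norm with $\sum_k \norm{\vec{v}_k^{(\ell)}}_2$ for the shallow form --- confines the output weights to a bounded set, while the input directions lie on compact spheres and biases are bounded via augmentation. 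Bolzano--Weierstrass extracts a convergent subsequence of parameters; continuity of $\sigma$ yields pointwise convergence of the composed network on the training data, and lower semicontinuity of $\mathcal{L}$ together with continuity of the finite-dimensional variation-norm sum shows the limit tuple attains the infimum. Rewriting each shallow layer as $\vec{z} \mapsto \vec{V}^{(\ell)}_{\mathrm{out}} \vec{\sigma}(\vec{W}^{(\ell)}_{\mathrm{in}} \vec{z} + \vec{b}^{(\ell)})$ and collapsing each product $\vec{W}^{(\ell+1)}_{\mathrm{in}} \vec{V}^{(\ell)}_{\mathrm{out}}$ of consecutive linear operations into a single affine map $\mathbf{A}^{(\ell+1)}$ then produces the interleaved affine--nonlinearity structure \cref{eq:deep-VV-form} (with the outermost linear combination absorbed into the last $\mathbf{A}$).

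The main obstacle is the layerwise replacement: \cref{thm:VV-rep} is stated for loss functions taking real values, whereas $\mathcal{L}_0$ is extended-real-valued. However, only lower semicontinuity is used in its proof, and the constraint set is non-empty by construction (so the infimum is finite and is attained by a shallow finite-width network), so the representer theorem applies directly. A secondary subtlety is that naively composing $L$ shallow networks yields $L$ applications of $\vec{\sigma}$ interleaved with $L{+}1$ linear operations, so matching the indexing in the statement requires the consecutive-linear-map collapsing described above; after this bookkeeping, the bound on each layer's width transfers to the widths of the hidden layers of the resulting deep network.
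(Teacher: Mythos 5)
Your proof is correct, but it reaches existence by a genuinely different route than the paper. The paper works directly in the infinite-dimensional product of variation spaces: it truncates \cref{eq:deep-VV-opt} to the sublevel set $\norm{f^{(\ell)}}_{\V_\sigma} \le C/\lambda$, shows the objective is weak$^*$ lower semicontinuous in each layer (via weak$^*$ continuity of point evaluation and \cref{lemma:equivalent-VV-M-norms}), and invokes Banach--Alaoglu plus the Weierstrass extreme value theorem; only \emph{after} a minimizer is in hand does it pass to the layerwise interpolation problems and apply \cref{thm:VV-rep} to extract the finite-width form. You reverse the order: your layerwise replacement (applied to \emph{every} feasible tuple, with the interpolation constraints encoded by a $\{0,+\infty\}$-valued indicator loss) shows the infimum is already attained over a fixed finite-width class, and existence then follows from ordinary Bolzano--Weierstrass compactness in parameter space. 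Both arguments rest on the same implicit conventions (the loss is bounded below so the regularizer is controlled on minimizing sequences, and \cref{thm:VV-rep} tolerates an extended-real-valued loss --- which it does, since its proof uses only lower semicontinuity and a feasible point; note the paper itself applies \cref{thm:VV-rep} to a hard-constrained interpolation problem). What your route buys is that the weak$^*$ machinery is quarantined inside \cref{thm:VV-rep} and the deep problem is handled by elementary finite-dimensional compactness; the price is the extra care you correctly take in the limit, where input weights may collide so that the variation norm of the limit network can only drop below the limit of the surrogate $\sum_k \norm{\vec{v}_k}_2$ from \cref{eq:2-M-norm}, which is harmless for passing to the infimum. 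Your closing remark on bookkeeping is also apt: composing $L$ shallow layers yields $L$ applications of $\vec{\sigma}$ and, after collapsing consecutive linear maps, $L+1$ affine maps with hidden widths $K_\ell$ rather than $d_\ell$; the paper's statement of \cref{eq:deep-VV-form} glosses over this reindexing and your treatment is the more careful of the two on that point.
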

\section{Neuron Sharing in Neural Network Solutions}\label{sec:neuron-sharing}

This section describes a remarkable ``neuron sharing"  property of solutions to the weight decay optimization objective (\ref{opt:general_weight_decay}) and the variational problem (\ref{eq:VV-opt}). This refers to the fact that each neuron in a solution is encouraged to contribute to every output, as opposed to networks in which different neurons are used to represent different output functions. This indicates how different outputs may influence each other in the training process. It also provides a new viewpoint for multi-task learning with neural networks: When the activation function is homogeneous, weight decay encourages the learning of features that are useful for multiple tasks/outputs. 

The neuron sharing property arises from the definition of the $\V_{\sigma}(\R^d; \R^D)$-norm in \cref{eq:VV-norm}. In particular,
the $\norm{\dummy}_{\V_{\sigma}(\R^d; \R^D)}$-norm regularized problem
\begin{align}\label{opt:vv-norm-reg}
  \min_{f \in \V_{\sigma}(\R^d; \R^D)} \paren*{\mathcal{J}(f) \coloneqq \sum_{i=1}^{N}
  \mathcal{L}(\vec{y}_i,f(\vx_i)) + \lambda \|f\|_{\V_{\sigma}(\R^d; \R^D)}}, \quad
  \lambda > 0,
\end{align}
where $(\vec{x}_1,\vec{y}_1),\dots,(\vec{x}_N,\vec{y}_N) \in \R^d \times \R^D$ is any fixed dataset and $\mathcal{L}(\dummy, \dummy)$ is lower semicontinuous in its second argument, favors solutions that share neurons. We quantify this explicitly in \cref{thm:neuron-sharing}.

\begin{theorem}\label{thm:neuron-sharing}
    Let $f$ be a finite-width vector-valued neural network with unique input weights of the form
    \begin{align}
        f(\vec{x}) = \sum_{k=1}^{K} \vec{v}_k \sigma(\vec{w}^{T}_k\aug{\vec{x}}), \quad \vec{x} \in \R^d,
    \end{align}
    with $\|\vec{w}_k\|_2 = 1$, $k=1,\dots,K$. Then, there exists $
    \delta > 0$ (that depends on $\lambda$ and the data) such that, if $\|\vec{w}_1 - \vec{w}_2 \|_2 < \delta$ and these two neurons contribute exclusively to two disjoint subsets of the outputs, then the neural network
    \begin{align}\label{eq:f-hat-ns-proof}
           \hat{f}(\vx) = f(\vec{x}) - \vec{v}_1 \sigma(\vec{w}^{T}_1 \aug{\vec{x}}) + \vec{v}_1 \sigma(\vec{w}^{T}_2 \aug{\vec{x}}),
    \end{align}
    which shares one neuron across both sets of outputs has a strictly smaller objective value, i.e., $\mathcal{J}(\hat{f}) < \mathcal{J}(f)$. 
\end{theorem}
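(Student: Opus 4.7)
The plan is to prove this by a direct perturbation computation in which the norm decrease and the loss change are controlled separately. The key insight is that collapsing $\vec{w}_1$ onto $\vec{w}_2$ produces a strict, $\delta$-independent saving in the variation norm (because the combined output coefficient $\vec{v}_1+\vec{v}_2$ has disjoint-support summands, so $\|\vec{v}_1+\vec{v}_2\|_2$ is strictly less than $\|\vec{v}_1\|_2+\|\vec{v}_2\|_2$), while the induced perturbation in the fitted values $\hat{f}(\vec{x}_i)-f(\vec{x}_i)$ can be made arbitrarily small by continuity of $\sigma$. Taking $\delta$ small enough will make the loss change smaller than the (fixed) norm saving times $\lambda$.

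For the norm bookkeeping, I would invoke the identity \cref{eq:2-M-norm} applied to both $f$ and $\hat{f}$. Since the input directions $\{\vec{w}_k\}_{k=1}^K$ are distinct, $\|f\|_{\V_{\sigma}(\R^d;\R^D)} = \sum_{k=1}^K\|\vec{v}_k\|_2$. In $\hat{f}$ the atom at $\vec{w}_1$ disappears and the atom at $\vec{w}_2$ now carries coefficient $\vec{v}_1+\vec{v}_2$, hence the atomic representation $(\vec{v}_1+\vec{v}_2)\delta_{\vec{w}_2}+\sum_{k\geq 3}\vec{v}_k\delta_{\vec{w}_k}$ yields
\[
\|\hat{f}\|_{\V_{\sigma}(\R^d;\R^D)}\;\le\; \|\vec{v}_1+\vec{v}_2\|_2 + \sum_{k\geq 3}\|\vec{v}_k\|_2.
\]
Because $\vec{v}_1$ and $\vec{v}_2$ have disjoint supports, $\|\vec{v}_1+\vec{v}_2\|_2 = \sqrt{\|\vec{v}_1\|_2^2+\|\vec{v}_2\|_2^2}$, so
\[
\Delta \;:=\; \|f\|_{\V_{\sigma}(\R^d;\R^D)} - \|\hat{f}\|_{\V_{\sigma}(\R^d;\R^D)} \;\geq\; \|\vec{v}_1\|_2+\|\vec{v}_2\|_2 - \sqrt{\|\vec{v}_1\|_2^2+\|\vec{v}_2\|_2^2} \;>\; 0,
\]
with $\Delta$ depending only on $\vec{v}_1,\vec{v}_2$ (and not on $\delta$).

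For the loss side, a direct calculation gives
\[
\hat{f}(\vec{x}_i) - f(\vec{x}_i) \;=\; \vec{v}_1\bigl[\sigma(\vec{w}_2^\T\aug{\vec{x}}_i) - \sigma(\vec{w}_1^\T\aug{\vec{x}}_i)\bigr].
\]
Because the dataset is finite and $\sigma$ is continuous, $\sigma(\vec{w}^\T\aug{\vec{x}}_i)$ is uniformly continuous in $\vec{w}$ on a neighborhood of $\vec{w}_2$ across all $i$, so $\max_i\|\hat{f}(\vec{x}_i)-f(\vec{x}_i)\|_2 \to 0$ as $\|\vec{w}_1-\vec{w}_2\|_2\to 0$. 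The remaining step is to convert this into a bound on $\sum_i[\mathcal{L}(\vec{y}_i,\hat{f}(\vec{x}_i)) - \mathcal{L}(\vec{y}_i,f(\vec{x}_i))]$, choose $\delta$ small enough that this quantity is strictly less than $\lambda\Delta$, and conclude $\mathcal{J}(\hat f)<\mathcal{J}(f)$.

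The main obstacle is precisely this last conversion. Lower semicontinuity of $\mathcal{L}(\vec{y},\dummy)$, as literally stated in the theorem, is the wrong side of the inequality: it would let $\mathcal{L}(\vec{y}_i,\hat{f}(\vec{x}_i))$ spike upward under the perturbation. In practice the local argument needs continuity (or at least upper semicontinuity) of $\mathcal{L}$ in its second argument at each fitted value $f(\vec{x}_i)$, which is the case for standard losses; I would either add this regularity hypothesis or cite it implicitly and then invoke uniform continuity of $\mathcal{L}(\vec{y}_i,\dummy)$ on a compact neighborhood of $\{f(\vec{x}_i)\}_{i=1}^N$ to pick the required $\delta$.
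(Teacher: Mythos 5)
Your proposal is correct and follows essentially the same route as the paper's proof: the same strict norm saving $\|\vec{v}_1\|_2+\|\vec{v}_2\|_2-\|\vec{v}_1+\vec{v}_2\|_2>0$ from the disjoint supports, the same identity $\hat{f}(\vec{x}_i)-f(\vec{x}_i)=\vec{v}_1\bigl[\sigma(\vec{w}_2^\T\aug{\vec{x}}_i)-\sigma(\vec{w}_1^\T\aug{\vec{x}}_i)\bigr]$, and the same use of continuity of $\sigma$ over the finite dataset to choose $\delta$. Your concluding caveat is also well taken and is in fact sharper than the paper on this point: the paper's proof asserts that lower semicontinuity of $\mathcal{L}(\vec{y},\dummy)$ yields a $\gamma_i$ with $\mathcal{L}(\vec{y}_i,\hat{f}(\vec{x}_i))-\mathcal{L}(\vec{y}_i,f(\vec{x}_i))<\epsilon/N$, but that bound controls the loss from above and therefore requires upper semicontinuity (or continuity) at $f(\vec{x}_i)$, exactly as you observe; your suggestion to strengthen the regularity hypothesis (harmless for standard losses) is the right fix.
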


\begin{proof}
    Since the input weights are all unit norm, \Cref{opt:vv-norm-reg} reduces to
    \begin{align}
        \mathcal{J}(f) = \sum_{i=1}^{N} \mathcal{L}(\vy_i, f(\vx_i)) + \lambda \sum_{k=1}^{K} \|\vec{v}_k\|_2.
    \end{align}
    Without loss of generality, suppose that the two neurons whose input weights are $\vec{w}_1$ and $\vec{w}_2$ contribute to two disjoint subsets of the $D$ outputs. In particular, the first neuron contributes to the outputs in index set $\mathcal{I}_{1} \subset \{1,\dots,D\}$, while the second contributes to outputs in the index set $\mathcal{I}_{2} \subset \{1,\dots,D\}$, where $\mathcal{I}_1 \cap \mathcal{I}_2 = \varnothing$. That is, $\vec{v}_1$ and $\vec{v}_2$ have disjoint support.
  Define
  \begin{align}
      \epsilon \coloneqq \lambda (\left \|\vec{v}_1\|_2 +  \|\vec{v}_2\|_2 - \|\vec{v}_1 + \vec{v}_2\|_2 \right) > 0.
  \end{align}
  Since the loss $\mathcal{L}$ is lower semicontinuous in its second argument, there exists $\gamma_i > 0$,  such that, if $\|\hat{f}(\vx_i) - f(\vx_i)\|_2 < \gamma_i$, then
  \begin{align}
      \mathcal{L}(\vy_i, \hat{f}(\vx_i)) - \mathcal{L}(\vy_i, f(\vx_i)) < \epsilon/N,
  \end{align}
  where we note that $\gamma_i$ depends on $\epsilon$, $N$, and $\vec{x}_i$.
  By \eqref{eq:f-hat-ns-proof}, we have that
  \begin{align}
      \|\hat{f}(\vx_i) - f(\vx_i)\|_2
      &= \|\vec{v}_1 \sigma(\vec{w}^{T}_1\aug{\vx}_i) - \vec{v}_1 \sigma(\vec{w}^{T}_2\aug{\vx}_i)\|_2 \nonumber \\ 
      &= \|\vec{v}_1\|_2 |\sigma(\vec{w}^{T}_1\aug{\vx}_i) - \sigma(\vec{w}^{T}_2\aug{\vx}_i)|.
  \end{align}
  The continuity of the activation function guarantees that there exists a $\delta_i > 0$ (that depends on $\gamma_i/\|\vec{v}_1\|_2$) such that
  \begin{align}
      \|\vec{w}_1 - \vec{w}_2\|_2 < \delta_i \implies |\sigma(\vec{w}^{T}_1\aug{\vx}_i) - \sigma(\vec{w}^{T}_2\aug{\vec{x}}_i)| < \gamma_i/\|\vec{v}_1\|_2.
  \end{align}
  Therefore, if $\|\vec{w}_1 - \vec{w}_2 \|_2 <\delta \coloneqq \min_{i=1, \ldots, N} \delta_i$, then $|\sigma(\vec{w}^{T}_1\aug{\vx}_i) - \sigma(\vec{w}^{T}_2\aug{\vec{x}}_i)| < \gamma_i/\|\vec{v}_1\|_2$, for any $i=1,\ldots,N$.
  Consequently, this implies that
  \begin{align}
      \left( \sum_{i=1}^{N} \mathcal{L}(\vy_i, \hat{f}(\vx_i)) - \mathcal{L}(\vy_i, f(\vx_i)) \right)
      < \epsilon
  \end{align}
  whenever $\|\vec{w}_1 - \vec{w}_2 \|_2 <\delta$.
  To complete the proof, observe that whenever $\|\vec{w}_1 - \vec{w}_2 \|_2 <\delta$, we have that
  \begin{align*}
      \mathcal{J}(\hat{f}) - \mathcal{J}(f) &=\left( \sum_{i=1}^{N} \mathcal{L}(\vy_i, \hat{f}(\vx_i)) - \mathcal{L}(\vy_i, f(\vx_i)) \right) 
      + \lambda \left(\|\hat{f}\|_{\V_{\sigma}(\R^{d};\R^{D})} - \|f\|_{\V_{\sigma}(\R^{d};\R^{D})}\right)\\
      &=\left( \sum_{i=1}^{N} \mathcal{L}(\vy_i, \hat{f}(\vx_i)) - \mathcal{L}(\vy_i, f(\vx_i)) \right)
      +\lambda\left( \|\vec{v}_1 + \vec{v}_2\|_2 - \left(\|\vec{v}_1\|_2 +  \|\vec{v}_2\|_2\right)\right)\\
      &< \epsilon + \lambda\left( \|\vec{v}_1 + \vec{v}_2\|_2 - (\|\vec{v}_1\|_2 +  \|\vec{v}_2\|_2)\right) \\
      &=0. \numberthis
  \end{align*}
  Thus, if two neurons have sufficiently close input weights, removing one and having the other one be shared strictly decreases the objective in \cref{opt:vv-norm-reg}.
  \end{proof}
  
    Note that by sharing neurons $\hat{f}$ always has a strictly smaller $\V_{\sigma}(\R^{d};\R^{D})$-norm than $f$. Indeed, we have that
  \begin{align}
      \norm{\hat{f}}_{\V_{\sigma}(\R^d; \R^D)} - \norm{f}_{\V_{\sigma}(\R^d; \R^D)} &= \|\vec{v}_1 + \vec{v}_2\|_2 - \left(\|\vec{v}_1\|_2 +  \|\vec{v}_2\|_2\right) < 0,
  \end{align}
  where the equality follows from the fact that the only neurons that are different between the two functions are the ones with input weights $\vec{w}_1$ and $\vec{w}_2$ and the inequality follows from the triangle inequality (which is strict since $\vec{v}_1$ and $\vec{v}_2$ have disjoint support).

\Cref{thm:neuron-sharing} along with the discussion in \Cref{sec:nbt} shows that, when the activation function $\sigma$ is homogeneous, vector-valued neural networks trained with weight decay are encouraged to share neurons.
Trained networks that exhibit neuron sharing are important in multi-task learning problems, e.g., multi-class classification, where components of the labels could have some relationships or correlations. Therefore, the neuron sharing phenomenon exhibited by solutions to weight decay regularized problems provides some explanation towards its efficacy when used for classification tasks. We illustrate the types of architectures that are favored by weight decay regularization in \cref{fig:neuron_sharing}. We also verify this numerically in \cref{subsec:sharing}.

\begin{figure}[htb]
    \centering
    \includegraphics[width=\textwidth]{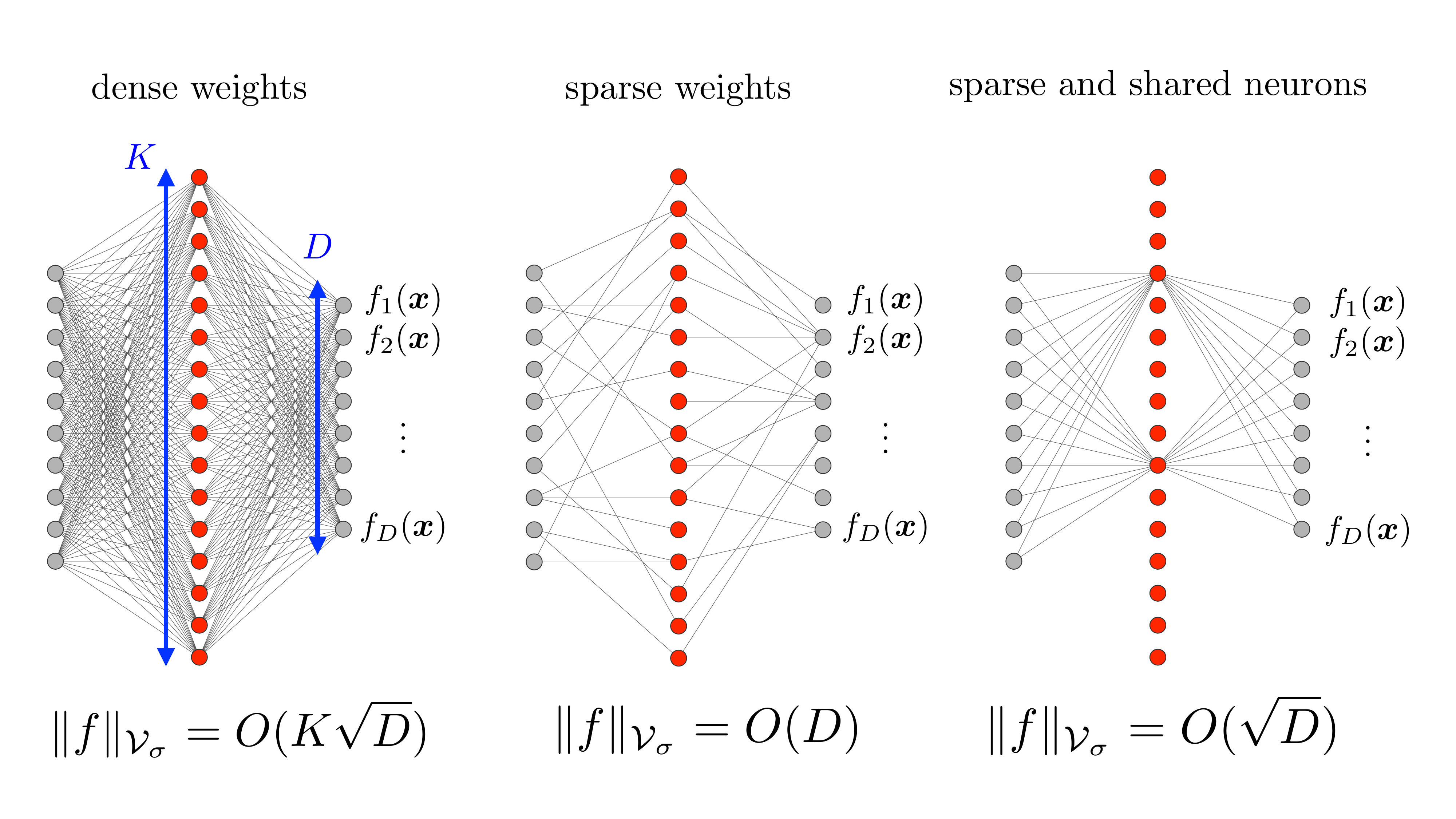}
    \caption{
    Three neural networks with different weight-sparsity patterns. The input weights are normalized to lie on the sphere and the components of the output weights are all $O(1)$. In the case of homogeneous activation functions, weight decay minimizes the $\V_{\sigma}(\R^d; \R^D)$ norm and therefore favors the right-most architecture. This architecture exhibits both neuron sparsity and neuron sharing. Each output depends on the same few neurons. This observation also gives insight into the regularity of the optimal functions: They favor functions that only vary in a few directions across all outputs. This is in contrast with the middle network where each output has variation in a small number of directions, but this set of directions can be different for each output.}
    \label{fig:neuron_sharing}
\end{figure}

\section{Data-Dependent Width Bounds and DNN Compression} \label{sec:bounds}
In this section, we complement our previous analysis of the vector-valued variation spaces. We use the NBT (\cref{thm:neural_balance}) to refine the number of neurons predicted by the representer theorems (\cref{thm:VV-rep,thm:deep-VV-rep}). In particular, these new bounds are \emph{data dependent} and improve previous results that have been reported in the literature~\citep[e.g.,][Proposition~7]{jacot2022feature}. Furthermore, these bounds are applicable to all layers with homogeneous activation functions within \emph{any} DNN.

We use the NBT, to relate the weights that minimize the weight decay objective with minimizers of a \emph{convex} multi-task lasso problem. This convex problem is data dependent and is based on the learned representations of the training data. We use this reduction to derive bounds on the widths of DNNs trained with weight decay (\cref{thm:widthbound}). A by-product of our analysis are, to the best of our knowledge, the first sparsity bounds for the (convex) multitask lasso problem (\cref{thm:multi_task_lasso_bound}). This result may be of independent interest. This investigation also motivates the proposal of a new principled and computationally efficient procedure to compress pre-trained DNNs (\cref{sec:compressing-dnns}).

\subsection{Width Bounds}\label{sec:width-bounds}
Let $f_{\vec{\theta}}$ be a DNN  that minimizes \eqref{opt:general_weight_decay}. Consider any layer of this DNN that has homogeneous activation functions and suppose that it has a width of $K$ neurons. Let $\vec{\varphi}_i$ denote the learned feature representation of the $i$th training example $\vec{x}_i$ that is input to this layer. Furthermore, let $\curly{\vec{w}_k}_{k=1}^K$ and $\curly{\vec{v}_k}_{k=1}^K$ denote the input and output weights of this layer, respectively. Then, the \emph{output features} learned by this layer are given by
\begin{equation}
    \vec{\psi}_i \coloneqq \sum_{k=1}^K \vec{v}_k \, \sigma(\vec{w}_k^\T\aug{\vec{\varphi}_i}), \quad i = 1, \ldots, N,
\end{equation}
where we recall that $\aug{\vec{\varphi}_i}$ augments a $1$ to $\vec{\varphi}_i$ to account for a bias term. Thanks to the homogeneity of $\sigma$, we can write
\begin{equation}
    \vec{\psi}_i \coloneqq \sum_{k=1}^K \tilde{\vec{v}}_k \,\sigma(\tilde{\vec{w}}_k^\T\aug{\vec{\varphi}_i}), \quad i = 1, \ldots, N, \label{eq:output-f}
\end{equation}
where $\tilde{\vec{w}}_k \coloneqq \vec{w}_k / \norm{\vec{w}_k}_2$ and $\tilde{\vec{v}}_k \coloneqq \vec{v}_k \norm{\vec{w}_k}_2$. Let $\phi_{k, i} \coloneqq \sigma(\tilde{\vec{w}}_k^\T\aug{\vec{\varphi}_i})$. Consequently, let 
\begin{equation}
    \vec{\phi}_i = (\phi_{1, i}, \ldots, \phi_{K, i}) \in \R^K, \quad i = 1, \ldots, N, \label{eq:post-activation-f}
\end{equation}
denote the \emph{post-activation} features.

A corollary to the representer theorem (\cref{thm:VV-rep}) is that there exists an optimal representation of this layer with $\leq N^2$ neurons. This is often a loose upper bound, especially when the learned features are highly structured. In \cref{thm:widthbound} we present \emph{data-dependent} bounds on the widths of DNN layers based on the intrinsic dimensions (ranks) of the learned data representations.
\begin{theorem}
\label{thm:widthbound}
    Let $(\vec{x}_1, \vec{y}_1), \ldots, (\vec{x}_N, \vec{y}_N)$ be any finite dataset and let $\mathcal{L}(\dummy, \dummy)$ be any loss function that is lower semicontinuous in its second argument. Furthermore, let $f_{\vec{\theta}}$ be a DNN that solves
    \begin{equation}\label{opt:gen-weight-decay-main-result}
    \min_{\vec{\theta}} \:\sum_{i=1}^{N} \mathcal{L}(\vy_i, f_{\vec{\theta}}(\vx_i)) + \frac{\lambda}{2} \|\vec{\theta}\|^2_{2}.
    \end{equation}
    Consider any layer of $f_{\vec{\theta}}$ with homogeneous activation functions and let $\curly{\vec{\phi}_i}_{i=1}^N$ and $\curly{\vec{\psi}_i}_{i=1}^N$ denote the learned post-activation and output features defined according to \cref{eq:post-activation-f} and \cref{eq:output-f}, respectively. If
    \begin{align}
        r_\mat{\Phi} &= \dim \spn\curly{\vec{\phi}_i}_{i=1}^N \nonumber \\
        r_\mat{\Psi} &= \dim \spn\curly{\vec{\psi}_i}_{i=1}^N
    \end{align}
    denotes the dimensions of the subspaces spanned by the learned features, then there exists an equivalent representation of the DNN (in the sense that it still minimizes \cref{opt:gen-weight-decay-main-result}) where this layer has at most $r_\mat{\Phi} \, r_\mat{\Psi} \leq N^2$ neurons. Furthermore, the equivalent representation can be found by solving a convex multi-task lasso problem.
\end{theorem}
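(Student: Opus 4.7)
The plan is to invoke the Neural Balance Theorem (\cref{thm:neural_balance}) to reduce weight-decay optimality at the chosen layer to a convex multi-task lasso subproblem, and then to apply a sparsity bound for multi-task lasso. First, after the reparameterization in \cref{eq:output-f}, the balanced network is also a minimizer of the path-norm problem \cref{opt:path-norm-sphere}, in which the chosen layer contributes $\sum_{k=1}^K \|\tilde{\vec{v}}_k\|_2$ to the objective subject to $\|\tilde{\vec{w}}_k\|_2 = 1$. Freezing the input weights $\tilde{\vec{w}}_k$ (so the post-activation features $\vec{\phi}_i$ are fixed) together with the output features $\vec{\psi}_i$ on the training data, and varying only $\mat{V} = [\tilde{\vec{v}}_1 \cdots \tilde{\vec{v}}_K]$, any feasible reduction of $\sum_k \|\tilde{\vec{v}}_k\|_2$ would strictly lower the overall weight-decay objective after rebalancing, contradicting global optimality. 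Hence the current output weights solve the convex multi-task lasso
\begin{equation*}
  \min_{\mat{V}} \: \sum_{k=1}^K \|\vec{v}_k\|_2 \quad \text{subject to} \quad \mat{V}\mat{\Phi} = \mat{\Psi},
\end{equation*}
with $\mat{\Phi}=[\vec{\phi}_1 \cdots \vec{\phi}_N]$ and $\mat{\Psi}=[\vec{\psi}_1 \cdots \vec{\psi}_N]$.

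The crux of the argument is the sparsity bound (\cref{thm:multi_task_lasso_bound}): this multi-task lasso admits a minimizer with at most $r_\mat{\Phi} r_\mat{\Psi}$ nonzero columns. I would prove it in two reductions followed by an extreme-point argument. First, projecting each column $\vec{v}_k$ onto $\mathrm{col}(\mat{\Psi})$ preserves the constraint (since $\mat{\Psi}$ already lies in its own column space) and cannot increase $\|\vec{v}_k\|_2$; writing $\mat{V} = \mat{U}_{\mat{\Psi}} \mat{B}$ with $\mat{U}_{\mat{\Psi}}$ an orthonormal basis of $\mathrm{col}(\mat{\Psi})$, one has $\|\mat{U}_{\mat{\Psi}} \mat{B}_{:,k}\|_2 = \|\mat{B}_{:,k}\|_2$, and the constraint reduces to $\mat{B}\mat{\Phi} = \mat{U}_{\mat{\Psi}}^T\mat{\Psi}$. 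Second, using the thin SVD $\mat{\Phi} = \mat{U}_{\mat{\Phi}} \mat{\Sigma}_{\mat{\Phi}} \mat{V}_{\mat{\Phi}}^T$ (with $\mat{U}_{\mat{\Phi}}$ having $r_\mat{\Phi}$ orthonormal columns), this becomes $\mat{B}\mat{U}_{\mat{\Phi}} = \mat{U}_{\mat{\Psi}}^T\mat{\Psi}\mat{V}_{\mat{\Phi}}\mat{\Sigma}_{\mat{\Phi}}^{-1}$, an affine subspace cut out by exactly $r_\mat{\Psi} r_\mat{\Phi}$ linearly independent scalar equations. Finally, because the group-lasso unit ball is the convex hull of rank-one atoms $\{\vec{u}\vec{e}_k^T : \|\vec{u}\|_2 \leq 1\}$, a Carath\'eodory/extreme-point argument gives an optimal $\mat{B}^\star$ with at most $r_\mat{\Phi} r_\mat{\Psi}$ nonzero columns: whenever more than this many columns are active, one can perturb $\mat{B}^\star$ along a direction in the null space of the active constraints to zero out an additional column while preserving feasibility and objective value. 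This extreme-point step is the main technical obstacle, since it is precisely where the dimension-dependent (as opposed to $N$-dependent) improvement arises.

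To conclude, replacing the layer's output weights with $\mat{V}^\star = \mat{U}_{\mat{\Psi}} \mat{B}^\star$ preserves all layer outputs and the value of the group-lasso regularizer; discarding the neurons whose output weight vanishes (and, by the NBT, the corresponding input weights, which must then also vanish) yields an equivalent DNN that still minimizes \cref{opt:gen-weight-decay-main-result} and whose specified layer has at most $r_\mat{\Phi} r_\mat{\Psi} \leq N^2$ neurons. The final ``furthermore'' clause follows since the compressed representation is explicitly produced by solving the displayed convex multi-task lasso.
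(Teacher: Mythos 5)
Your top-level argument coincides with the paper's: the NBT reduces optimality of the layer's output weights to the constrained multi-task lasso \cref{opt:path_norm_constrained}, the sparsity bound of \cref{thm:multi_task_lasso_bound} is invoked, and the bound $r_\mat{\Phi}\,r_\mat{\Psi}\le N^2$ follows by dimension counting. Where you genuinely diverge is in how you would prove the key lemma. The paper first shows $\mathrm{col}(\mat{V})=\mathrm{col}(\mat{\Psi})$ (your first reduction, identically), then writes $\mat{\Psi}=\sum_k \vec{v}_k\vec{\phi}_k^T$ as a convex combination of rescaled rank-one atoms, applies Carath\'eodory to get $R+1$ active columns with $R=r_\mat{\Phi}r_\mat{\Psi}$, and finally uses the KKT/subgradient conditions of the Lagrangian to show the active atoms span an $(R-1)$-dimensional \emph{affine} set not containing the origin, so a second application of Carath\'eodory drops the count to $R$. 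You instead reduce via the thin SVD of $\mat{\Phi}$ to $R$ linearly independent scalar constraints and then eliminate columns one at a time by a perturbation argument. Your route is arguably more direct (it avoids the Lagrangian computation and the double Carath\'eodory), but the step you flag as the ``main technical obstacle'' does need care: a generic direction in the null space of the active constraints will \emph{not} work, because $t\mapsto\norm{\vec{b}_k+t\vec{d}_k}_2$ is not affine and the objective could strictly decrease in neither direction without vanishing derivative. The perturbation must be a \emph{column rescaling} $\vec{b}_k\mapsto(1+tc_k)\vec{b}_k$, where $(c_k)$ is a nontrivial linear dependence among the $>R$ rank-one matrices $\vec{b}_k\vec{r}_k^T$ (with $\vec{r}_k^T$ the $k$th row of $\mat{U}_{\mat{\Phi}}$); along such a path the constraint is exactly preserved and the objective equals $\sum_k(1+tc_k)\norm{\vec{b}_k}_2$, which is affine in $t$ for small $\abs{t}$, so optimality forces $\sum_k c_k\norm{\vec{b}_k}_2=0$ and $t$ can be pushed until some factor $1+tc_k$ hits zero. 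With that substitution your argument closes correctly and yields the same bound of $R$ (not $R+1$) nonzero columns; the remaining bookkeeping (replacing $\mat{V}$ by $\mat{U}_{\mat{\Psi}}\mat{B}^\star$, deleting dead neurons, rebalancing via the NBT) matches the paper.
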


\begin{proof}
Given the learned post-activation and output features $\curly{\vec{\phi}_i}_{i=1}^N$ and $\curly{\vec{\psi}_i}_{i=1}^N$ of the training examples $\curly{\vec{x}_i}_{i=1}^N$ from one layer of the DNN $f_\vec{\theta}$ with homogeneous activation functions, observe that, by the NBT (\cref{thm:neural_balance}), $\curly{\tilde{\vec{v}}_k}_{k=1}^K$ (as defined in \cref{eq:output-f}) must minimize
\begin{equation}\label{opt:path_norm_constrained}
        \min_{\{{\vec{v}}_k\}^{K}_{k=1}} \: \sum_{k=1}^{K}\| {\vv}_k \|_{2} \quad\mathrm{s.t.}\quad \vec{\psi}_i = \sum_{k=1}^{K} \vec{v}_k \phi_{k,i}, \: i = 1,\cdots,N.
\end{equation}
Therefore, if we replace $\curly{\tilde{\vec{v}}_k}_{k=1}^K$ with any solution to \cref{opt:path_norm_constrained} and rebalance the weights, the new DNN would still minimize \cref{opt:gen-weight-decay-main-result}.

To complete the proof, we observe that \cref{opt:path_norm_constrained} is a convex multi-task lasso problem. As we shall prove in \cref{thm:multi_task_lasso_bound} in \cref{sec:multi-task-lasso} below, there exists a solution to that multi-task lasso problem with at most $r_\mat{\Phi} \, r_\mat{\Psi}$ nonzero vectors. By a dimension-counting argument we always have the bound $r_\mat{\Phi} \, r_\mat{\Psi} \leq N^2$. Finally, observe that we can always find the compressed representation by solving the convex multi-task lasso problem. Thanks to the NBT, upon rebalancing the weights into and out of this layer, the compressed DNN still minimizes \cref{opt:gen-weight-decay-main-result}.
\end{proof}

Note that this result is not restricted to only one layer in a DNN. Indeed, it can be applied in a layer-wise manner. Thus, this result provides bounds on the widths of any layer in a DNN with homogeneous activation functions. This result also improved previous results that have appeared in the literature since it is data dependent. Furthermore, our generic bound of $N^2$ also improves the recent bound of $N(N+1)$ of \citet[Proposition~7]{jacot2022feature}.

Our data-dependent bounds are particularly relevant since it has been observed empirically that the subspaces spanned by learned features are often low-dimensional~\citep{nar2019crossentropy, waleffe2020principal,feng2022rank, huh2023low, kwon2024efficient, yaras2024compressible}. These empirical observations have also been backed by theoretical arguments~\citep{papyan2020prevalence,le2022training}. Therefore, we see that \cref{thm:widthbound} motivates the design of a principled and computationally efficient procedure to compress pre-trained DNNs. This procedure proceeds in a layer-by-layer manner and solves the convex multi-task lasso problem \cref{opt:path_norm_constrained}. We empirically evaluate the performance of this compression procedure in \cref{sec:compressing-dnns} on various architectures.

\subsection{Sparsity of Solutions to the Multi-Task Lasso Problem}\label{sec:multi-task-lasso}
The main ingredients of the proof of \cref{thm:widthbound} were (i) a reduction to the convex multi-task lasso problem (ii) the invocation of the sparsity bounds on the multi-task lasso problem, which we derive in this section. We consider the multi-task lasso problem as formulated by \citet{obozinski2006multi,obozinski2010joint,argyriou2008convex, obozinski2008high}. This problem has also been studied in the context of inverse problems with multiple measurement vectors~\citep{cotter2005sparse,chen2006theoretical,sun2009efficient,bajwa2015conditioning}. Our result on the sparsity of multi-task lasso minimizers appears in \cref{thm:multi_task_lasso_bound}. This result is new, to the best of our knowledge, and may be of independent interest. The proof can be found in \cref{app:multi_task_lasso}. 

\begin{theorem}\label{thm:multi_task_lasso_bound}
Consider the multi-task lasso problem
\begin{equation}
    \min_{\mathbf{V}=[\vv_1,\cdots,\vv_K]} \:\sum_{k=1}^{K} \| \vv_{k} \|_{2} \quad\mathrm{s.t.}\quad \mathbf{\Psi} = \mathbf{V} \mathbf{\Phi}, \label{opt:multi-task-lasso}
\end{equation}
where $\mathbf{\Psi} \in \mathbb{R}^{D \times N}$ and $\mathbf{\Phi} \in \mathbb{R}^{K \times N}$ are matrices with ranks $r_{\mathbf{\Phi}}$ and $r_{\mathbf{\Psi}}$, respectively.  Assume that the row space of $\mathbf{\Psi}$ is contained in the row space of $\mathbf{\Phi}$, which implies a solution exists. For any $K>0$ there exists a solution with at most $r_{\mathbf{\Phi}} r_{\mathbf{\Psi}} $ nonzero columns.  Furthermore, there exists $\mathbf{\Psi}$ such that no solution has fewer than $r_{\mathbf{\Phi}}$ nonzero columns.
\end{theorem}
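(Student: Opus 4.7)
I will split the argument into the upper-bound claim (a solution with at most $r_{\mathbf{\Phi}} r_{\mathbf{\Psi}}$ nonzero columns exists) and the lower-bound claim (any solution has at least $r_{\mathbf{\Phi}}$ nonzero columns when the rows of $\mathbf{\Phi}$ are in general position). The upper bound is the main content and proceeds by a dimension reduction followed by a perturbation argument that shaves off one nonzero column at a time; the lower bound is a short rank-counting remark. First I collapse the problem using $\mathrm{rowspace}(\mathbf{\Psi}) \subseteq \mathrm{rowspace}(\mathbf{\Phi})$: factor $\mathbf{\Phi} = \mathbf{L}\mathbf{R}^T$ and $\mathbf{\Psi} = \mathbf{M}\mathbf{R}^T$ for an orthonormal basis $\mathbf{R} \in \mathbb{R}^{N \times r_{\mathbf{\Phi}}}$ of the common row space, with $\mathbf{L} \in \mathbb{R}^{K \times r_{\mathbf{\Phi}}}$ of rank $r_{\mathbf{\Phi}}$ and $\mathbf{M} \in \mathbb{R}^{D \times r_{\mathbf{\Phi}}}$ of rank $r_{\mathbf{\Psi}}$; the constraint becomes $\mathbf{V}\mathbf{L} = \mathbf{M}$. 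Because $\mathrm{colspace}(\mathbf{M})$ has dimension $r_{\mathbf{\Psi}}$, decomposing each column of $\mathbf{V}$ into its component inside and outside $\mathrm{colspace}(\mathbf{M})$ and dropping the outside part preserves feasibility while strictly shrinking each $\|\vv_k\|_2$, so I may write $\mathbf{V} = \mathbf{U}\tilde{\mathbf{V}}$ with $\mathbf{U} \in \mathbb{R}^{D \times r_{\mathbf{\Psi}}}$ orthonormal, giving the equivalent problem $\min \sum_k \|\tilde{\vv}_k\|_2$ subject to $\tilde{\mathbf{V}}\mathbf{L} = \mathbf{N}$ with $\tilde{\vv}_k \in \mathbb{R}^{r_{\mathbf{\Psi}}}$ and $\mathbf{N} \in \mathbb{R}^{r_{\mathbf{\Psi}} \times r_{\mathbf{\Phi}}}$.

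Next I run the perturbation argument on the reduced problem. Let $\tilde{\mathbf{V}}^*$ be any optimal solution with nonzero-column set $S$ of size $s$, and write $\tilde{\vv}_k^* = t_k \tilde{\vu}_k$ with $t_k > 0$ and unit $\tilde{\vu}_k$ for $k \in S$. I restrict attention to direction-preserving perturbations $\tilde{\vec{w}}_k = \alpha_k \tilde{\vu}_k$ on $S$ (zero outside $S$). Feasibility of $\tilde{\mathbf{V}}^* + \tau\tilde{\mathbf{W}}$ reduces to $\sum_{k\in S} \alpha_k \tilde{\vu}_k \mathbf{L}_k^T = \mathbf{0}$, an $r_{\mathbf{\Psi}} r_{\mathbf{\Phi}}$-dimensional linear system in the $s$ unknowns $\alpha_k$, so when $s > r_{\mathbf{\Phi}} r_{\mathbf{\Psi}}$ this system has a nonzero solution $\alpha$. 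For small $|\tau|$ the objective collapses to the linear function $\sum_k t_k + \tau \sum_k \alpha_k$, and optimality of $\tilde{\mathbf{V}}^*$ forces $\sum_{k\in S} \alpha_k = 0$ (otherwise one sign of $\tau$ would strictly decrease it). I then enlarge $|\tau|$ until the first coordinate $t_{k^*} + \tau \alpha_{k^*}$ hits zero, which produces another optimum with support size $s - 1$; iterating drives $s$ down to $r_{\mathbf{\Phi}} r_{\mathbf{\Psi}}$, and lifting back through $\mathbf{V} = \mathbf{U}\tilde{\mathbf{V}}$ preserves the count.

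For the lower bound, any feasible $\mathbf{V}$ with nonzero-column set $S$ satisfies $\mathbf{V}_S \mathbf{\Phi}_S = \mathbf{\Psi}$, which forces $\mathrm{rowspace}(\mathbf{\Psi}) \subseteq \mathrm{rowspace}(\mathbf{\Phi}_S)$. Under the general-position hypothesis, any $|S| < r_{\mathbf{\Phi}}$ rows of $\mathbf{\Phi}$ span only a proper $|S|$-dimensional subspace of $\mathrm{rowspace}(\mathbf{\Phi})$, so generically $\mathrm{rowspace}(\mathbf{\Psi})$ cannot lie in any such proper subspace, yielding $|S| \geq r_{\mathbf{\Phi}}$. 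The main obstacle I anticipate is the perturbation step: I need to verify that restricting to direction-preserving perturbations is not too costly (any transverse perturbation of a nonzero column would raise $\|\tilde{\vv}_k\|_2$ at second order and therefore cannot be used to reduce support while preserving optimality, which is why the restriction is both natural and sufficient), and that the argument $\sum_{k\in S} \alpha_k = 0$ really holds for every null-space $\alpha$, which hinges on the strict positivity of all $t_k$ and the local linearity of $|t_k + \tau\alpha_k|$. The dimension reduction itself is routine and the lower bound is a one-line consequence of row-space inclusion.
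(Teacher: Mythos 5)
Your proof is correct, and the upper-bound half takes a genuinely different route from the paper's. The paper writes the constraint as $\mathbf{\Psi} = \sum_k \vv_k \vec{\phi}_k^T$, normalizes by the optimal value $\gamma$ to exhibit $\mathbf{\Psi}$ as a convex combination of rank-one atoms living in a subspace of dimension at most $r_{\mathbf{\Phi}} r_{\mathbf{\Psi}}$, invokes Carath\'eodory's theorem to get $r_{\mathbf{\Phi}} r_{\mathbf{\Psi}}+1$ nonzero columns, and then needs a separate KKT/subgradient computation to show the active atoms lie on an affine hyperplane not containing the origin so that a second application of Carath\'eodory removes the ``$+1$''; it must also check at the end that the sparsified point attains the same objective value. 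Your null-space perturbation argument reaches $r_{\mathbf{\Phi}} r_{\mathbf{\Psi}}$ in one pass: whenever $|S| > r_{\mathbf{\Phi}} r_{\mathbf{\Psi}}$ the rank-one atoms are linearly dependent, the direction-preserving perturbation keeps the objective locally \emph{linear}, optimality forces the linear term to vanish, and you move along the ray at constant cost until a column dies --- so preservation of optimality is automatic and no subgradient computation is needed. The two arguments are morally the same mechanism (Carath\'eodory is itself proved by exactly this pivoting, and your condition $\sum_{k \in S}\alpha_k = 0$ is the paper's affine-hull identity $\sum_{j \neq i} c_j = 1$ in disguise), but your packaging is more self-contained and gets the sharp count directly. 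Two small remarks. First, you should state explicitly that a minimizer exists (the feasible set is nonempty, closed, and convex by the row-space hypothesis, and the objective is coercive), since your argument begins with ``let $\tilde{\mathbf{V}}^*$ be any optimal solution''; the paper does this in one line. Second, your lower bound is exactly as informal as the paper's: general position of the rows of $\mathbf{\Phi}$ alone does not preclude the row space of $\mathbf{\Psi}$ from lying in the span of fewer than $r_{\mathbf{\Phi}}$ rows of $\mathbf{\Phi}$ (e.g., a single row of $\mathbf{\Psi}$ proportional to a single row of $\mathbf{\Phi}$), so your word ``generically'' carries an unstated genericity assumption on $\mathbf{\Psi}$ relative to $\mathbf{\Phi}$ --- but the paper's own proof makes the identical leap, so this is not a gap relative to the reference.
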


Unlike the traditional lasso problem, which seeks a sparse solution in an unstructured manner, the multi-task lasso seeks a structured sparsity. In this setting, each column of $\mathbf{V}$ is associated with a block. This block will either be entirely zero or (typically) entirely nonzero. Since each column of $\mathbf{V}$ corresponds to a feature, this implies that each feature is used in either all prediction tasks or none. 
The proof utilizes a generalized version of Carathéodory's theorem along with the fact that for any solution $\mathbf{V}$, the dimension of the column space must be $r_{\mathbf{\Psi}}$.

The intuition behind the upper and lower bounds in \cref{thm:multi_task_lasso_bound} is as follows. If every row of $\mathbf{\Psi}$ can be synthesized using the same $r_{\mathbf{\Phi}}$ rows in $\mathbf{\Phi}$, while still minimizing the objective, then the lower bound is achieved.  On the other hand, if the minimum can be achieved by having $r_{\mathbf{\Psi}}$ linearly independent rows in $\mathbf{\Psi}$ synthesized by different subsets of $r_{\mathbf{\Phi}}$ rows in $\mathbf{\Phi}$, then the upper bound is met. Our numerical experiments in \cref{sec:gl_exps} demonstrate that the minimum number of nonzero columns in any solution may range between the upper and lower bounds depending on precise structure of $\mathbf{\Psi}$ and $\mathbf{\Phi}$.

\begin{remark}
    The lower bound follows from the following observation.  If a solution exists with fewer than $r_{\mathbf{\Phi}}$ nonzero columns, then all the rows of $\mathbf{\Psi}$ belong one of the (finitely many) lower dimensional subspaces in the row space of $\mathbf{\Phi}$ spanned by certain subsets its rows. But collectively, these subspaces do not contain all points in the row space of $\mathbf{\Phi}$, and so there are infinitely many points that cannot be represented with fewer than $r_\mathbf{\Phi}$ rows of $\mathbf{\Phi}$.
\end{remark}


\begin{remark}
    Observe that when $D = 1$, \cref{opt:multi-task-lasso} reduces to the classical lasso problem. In that case, the sparsity of solutions to the classical lasso has been investigated by \citet{rosset2004boosting,tibshirani2013lasso}.
\end{remark}

\captionsetup[sub]{font=footnotesize}

\section{Experiments} \label{sec:exp}
In this section we present three numerical experiments that validate our theory and demonstrate its utility in practice. Our first experiment demonstrates that weight decay encourages neuron sharing. Our second experiment validates the bound presented for the multi-task lasso problem in \cref{thm:multi_task_lasso_bound}. Our third experiment compresses layers for pre-trained VGG-19~\citep{Simonyan2014VeryDC,wang2021pufferfish} and AlexNet~\citep{krizhevsky2017imagenet} models via the multi-task lasso convex optimization problem as in \cref{thm:widthbound}. In particular, we show that this principled compression approach preserves the training loss, accuracy, and weight decay objective of the model.\footnote{The code to reproduce our experiments can be found at \url{https://github.com/joeshenouda/vv-spaces-nn-width}. We follow the reproduciblity guidance of \cite{shenouda2023guide}.}

\subsection{Neuron Sharing Simulation} \label{subsec:sharing}
To demonstrate that weight decay encourages neuron sharing we train three vector-valued shallow ReLU neural networks to fit a synthetic two-dimensional dataset. The dataset consists of $50$ samples where the features are two-dimensional vectors drawn i.i.d.\ from a multivariate normal distribution. The labels are three-dimensional and generated by passing the feature vectors through a randomly initialized ReLU neural network with five neurons.

All networks were initialized with one hundred and fifty neurons. The first network was trained with weight decay regularization. The second network was trained with $\ell^1$ regularization. The third network was trained with no regularization. We used full batch gradient descent with the Adam optimizer. We used a learning rate of $2 \times 10^{-3}$ for two million iterations. The regularization parameter was $\lambda = 10^{-6}$ for weight decay and $\lambda = 10^{-9}$ for $\ell^1$-regularization. We chose $\lambda$ to be as large as possible such that the networks interpolate the data.

\begin{figure}
    \centering
    \begin{subfigure}[b]{\textwidth}
        \centering
        \includegraphics[width=\textwidth]{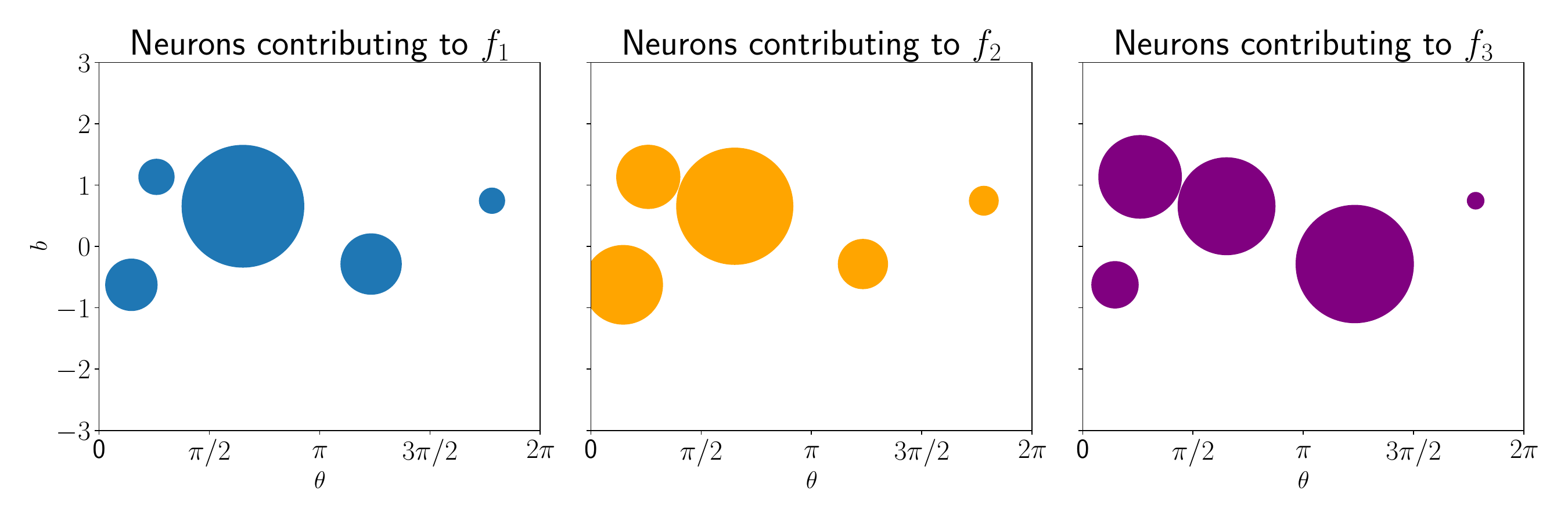}
        \caption{Weight Decay Regularization ($\#$Active Neurons: 5)}   
        \label{fig:neuron-loc-wd}
    \end{subfigure}
    \medskip
    \begin{subfigure}[b]{\textwidth}
        \centering
        \includegraphics[width=\textwidth]{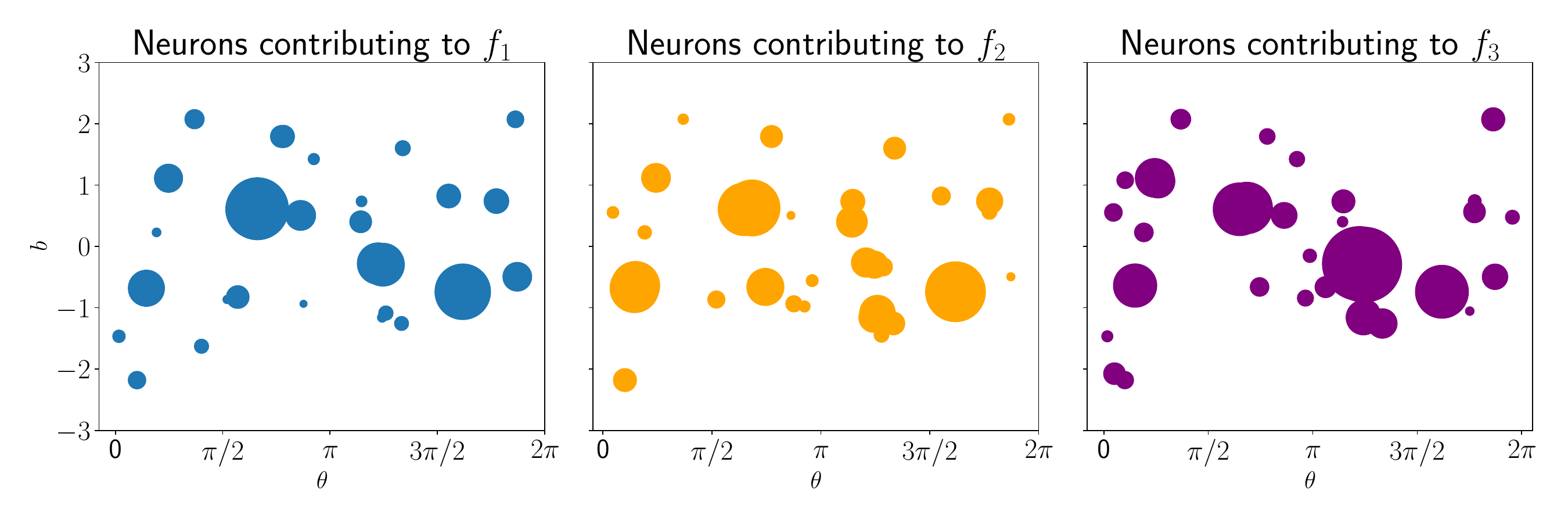}
        \caption{$\ell^1$-Regularization  ($\#$Active Neurons: 85)}   
        \label{fig:neuron-loc-L1}
    \end{subfigure}
    \medskip
    \begin{subfigure}[b]{\textwidth}
        \centering
        \includegraphics[width=\textwidth]{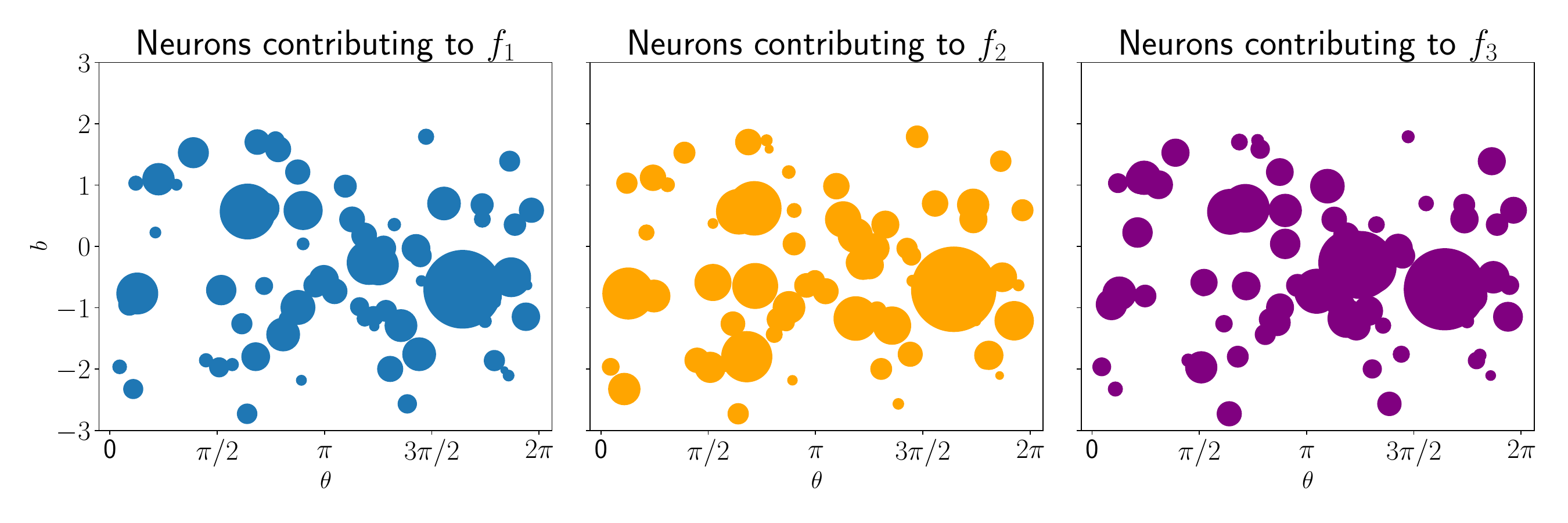}
        \caption{No Regularization  ($\#$Active Neurons: 130)}   
        \label{fig:neuron-loc-no-reg}
    \end{subfigure}
    \caption{We trained a three output two-dimensional ReLU neural network of the form $\vec{f}(x)=\sum_{k=1}^{K} \vv_k \sigma(\vw^{T}_{k}\vx+b_k)$  with weight decay, $\ell^1$-regularization, and no regularization. Let $f_1$, $f_2$ and $f_3$ denote the first, second, and third components of the outputs. We plot the locations of each active neurons under the $(\theta_k, b_k)$-parameterization. The size of the circles indicate the magnitude of the corresponding output weight vector. We see that in the case of weight decay, we have very few active neurons. Furthermore, those neurons that remain are shared across all outputs.}
    \label{fig:neuron_sharing_sim}
\end{figure}

In \cref{fig:neuron_sharing_sim} we plot the locations of the neurons contributing to each output of the trained network. Thanks to the homogeneity of the ReLU, after training, we normalize the input weights to be unit norm by absorbing the magnitude into the output weight. This reparameterization does not change the overall function mapping. This allows us to express each input weight as $\bm{w}_k = (\cos\theta_k, \sin\theta_k)$ with a bias $b_k \in \bR$. In other words, we can plot each neuron $\vec{x} \mapsto \sigma(\vec{w}_k^\T\vec{x} + b_k)$ with the two-dimensional coordinate $(\theta_k, b_k)$.

In \cref{fig:neuron_sharing_sim}, we show the $(\theta_k, b_k)$ pairs for every active neuron in the trained network. Our results show that, when regularizing with weight decay, the learned network is not only sparser (in terms of the number of active neurons), but also exhibits strong neuron sharing. In contrast with $\ell^1$-regularization there is not much neuron sparsity or neuron sharing. Finally, we see that no regularization results in a very dense network where all neurons are active. To generate the plots, we deem a neuron \emph{active} if the $\ell^1$-norm of the output weight is greater than $10^{-3}$. At the end of training we had $5$ neurons active for weight decay regularization, $85$ neurons active for $\ell^1$-regularization and $130$ neurons active with no regularization.

\subsection{Multi-Task Lasso Experiments}\label{sec:gl_exps}
We solve the multi-task lasso problem in \cref{thm:multi_task_lasso_bound} on randomly generated matrices $\mathbf{\Phi}$ and $\mathbf{\Psi}$ using CVXPy~\citep{diamond2016cvxpy}. Our experiments illustrate that, while our bounds hold, the exact number of nonzero columns depends on the data itself. In \cref{fig:dimensions_ML_lasso} we show histograms for the distribution of nonzero columns over 100 randomly generated pairs of $\mathbf{\Phi}$ and $\mathbf{\Psi}$. 

\begin{figure}[tb!]
    \centering
    \begin{subfigure}[b]{0.24\textwidth}
        \centering
        \includegraphics[width=\textwidth]{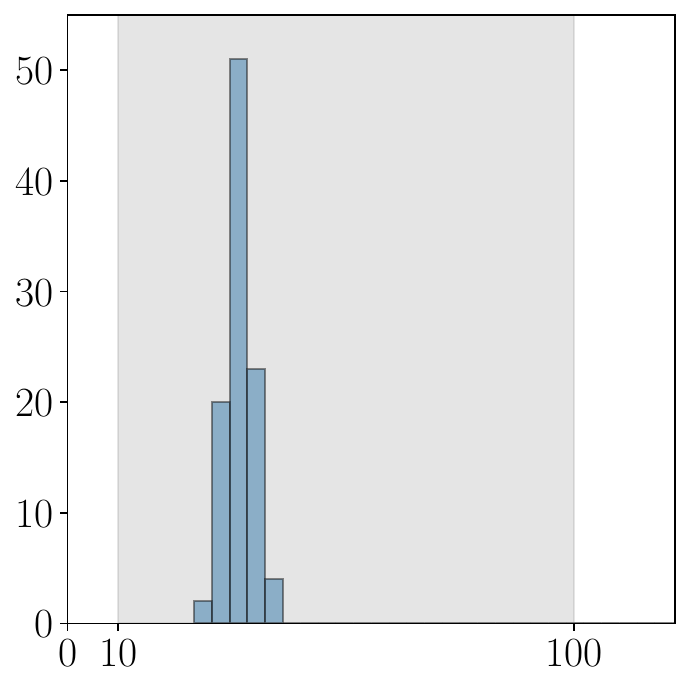}
        \caption{$D\!=\!10$,\!\! $N\!=\!10$,\!\! $K\!=\!500$}   
        \label{fig:f1-no-wd}
    \end{subfigure}
    \hfill
    \begin{subfigure}[b]{0.24\textwidth}
        \centering 
        \includegraphics[width=\textwidth]{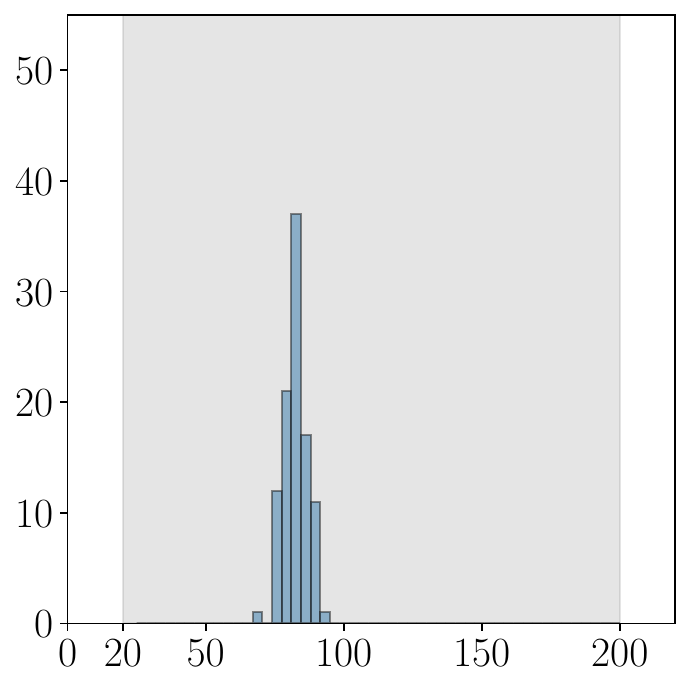}
        \caption{$D\!=\!10$,\!\! $N\!=\!20$,\!\! $K\!=\!500$}    
        \label{fig:f2-no-wd}
    \end{subfigure}
    \hfill
    \begin{subfigure}[b]{0.24\textwidth}
        \centering 
        \includegraphics[width=\textwidth]{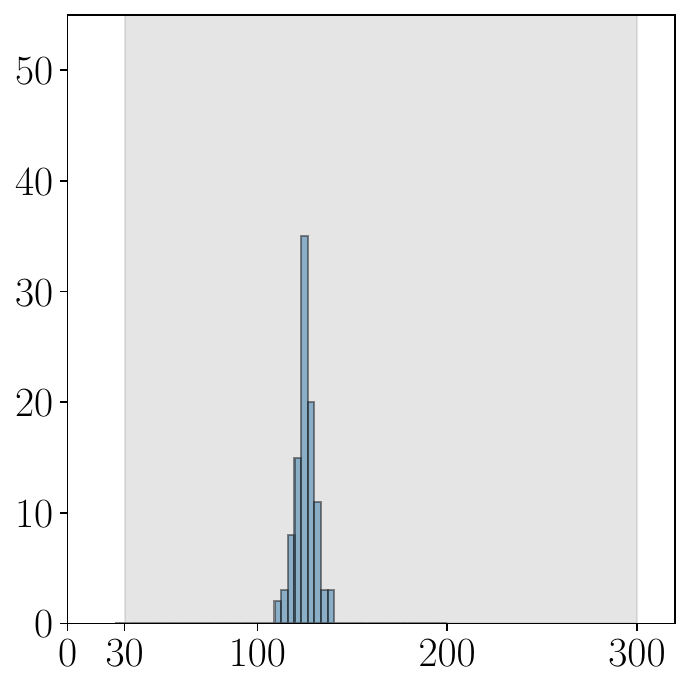}
        \caption{$D\!=\!10$,\!\! $N\!=\!30$,\!\! $K\!=\!500$}    
        \label{fig:f3-no-wd}
    \end{subfigure}
    \hfill
    \begin{subfigure}[b]{0.24\textwidth}
        \centering 
        \includegraphics[width=\textwidth]{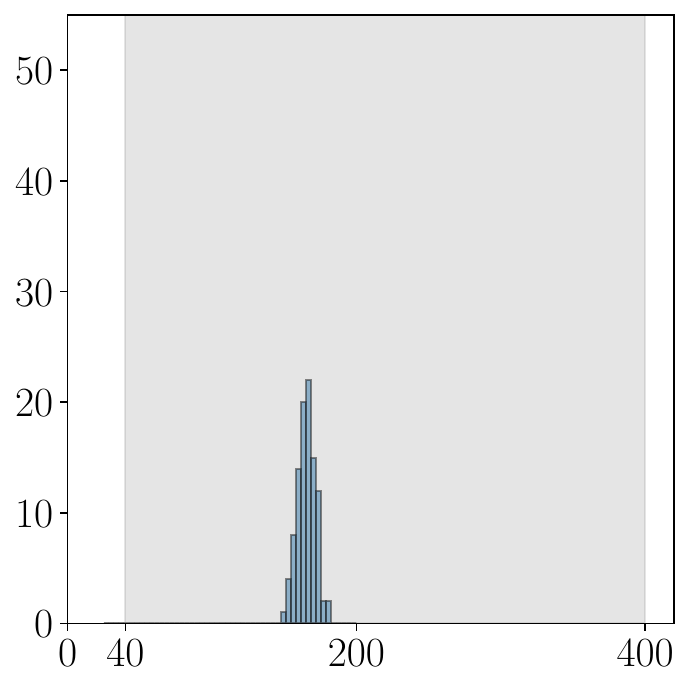}
        \caption{$D\!=\!10$,\!\! $N\!=\!40$,\!\! $K\!=\!500$}    
        \label{fig:f4-no-wd}
    \end{subfigure}
    \caption{
    Distribution of the number of active columns for the solutions to the multi-task lasso problem on randomly generated matrices of varying sizes. The horizontal axis is the number of nonzero columns in the optimal $\mathbf{V}$ and the vertical axis is the frequency. We ran this experiment for 100 randomly generated matrices. In all cases $r_{\mathbf{\Phi}}=N$ and $r_{\mathbf{\Psi}}=D$ so by~\cref{thm:multi_task_lasso_bound} we expect $N \leq \hat{K}\leq ND$. The shaded region indicates our theoretical bounds. The wide gap suggests that our upper bound can be sharpened.}
    \label{fig:dimensions_ML_lasso}
\end{figure}

In \cref{fig:ranks_ML_lasso} we perform a similar set of experiments but alter the underlying rank of $\mathbf{\Phi}$. This validates our bound showing that the sparsity of the solution can be much lower depending on the rank. We again see that the distribution is dependent on the data.  We note that, however, we never achieve our upper bound which may indicate that our bounds can be further sharpened.

\begin{figure}[tb!]
    \centering
    \begin{subfigure}[b]{0.24\textwidth}
        \centering
        \includegraphics[width=\textwidth]{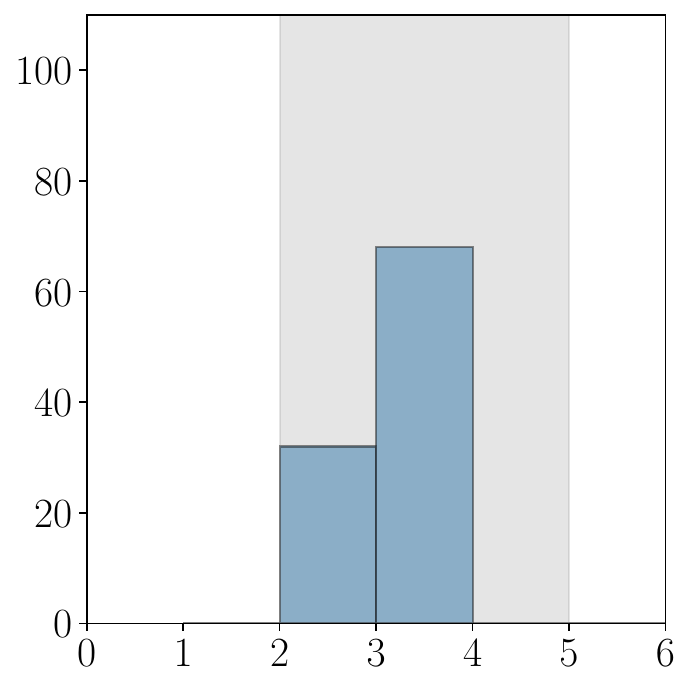}
        \caption{$r_{\mat{\Phi}} \!=\! 2$, $r_{\mat{\Psi}} \!=\! 2$}   
        \label{fig:d_10_n_20_k_200_r_2}
    \end{subfigure}
    \hfill
    \begin{subfigure}[b]{0.24\textwidth}
        \centering 
        \includegraphics[width=\textwidth]{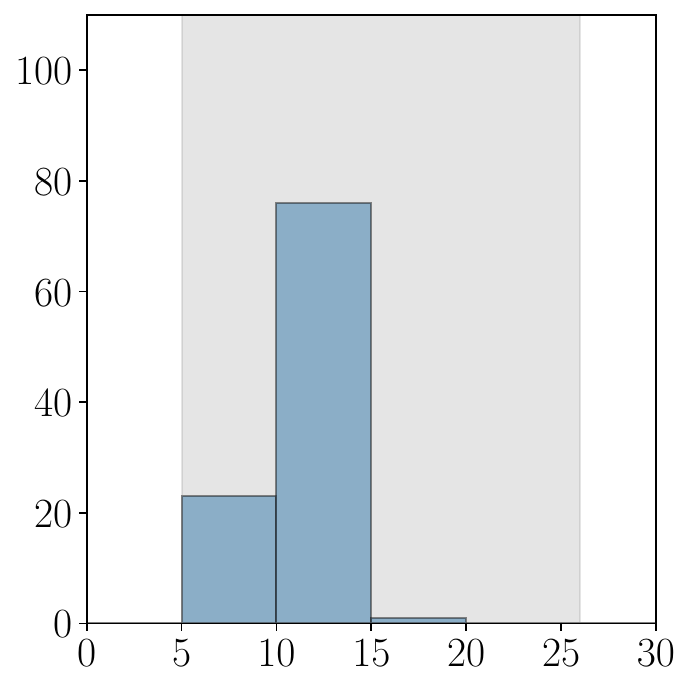}
        \caption{$r_{\mat{\Phi}} \!=\! 5$,$r_{\mat{\Psi}} \!=\! 5$}    
        \label{fig:d_10_n_20_k_200_r_5}
    \end{subfigure}
    \hfill
    \begin{subfigure}[b]{0.24\textwidth}
        \centering 
        \includegraphics[width=\textwidth]{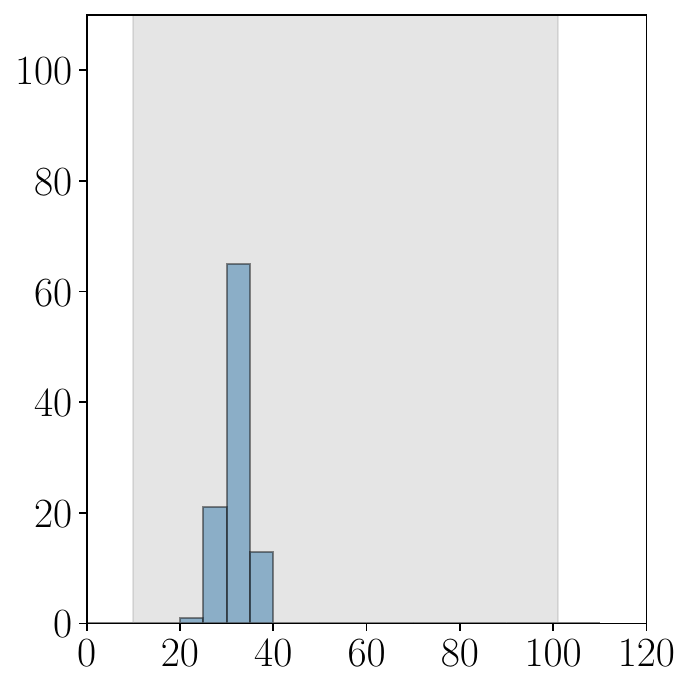}
        \caption{$r_{\mat{\Phi}}\!=\!10$, $r_{\mat{\Psi}}\!=\!10$}    
        \label{fig:d_10_n_20_k_200_r_10}
    \end{subfigure}
    \hfill
    \begin{subfigure}[b]{0.24\textwidth}
        \centering 
        \includegraphics[width=\textwidth]{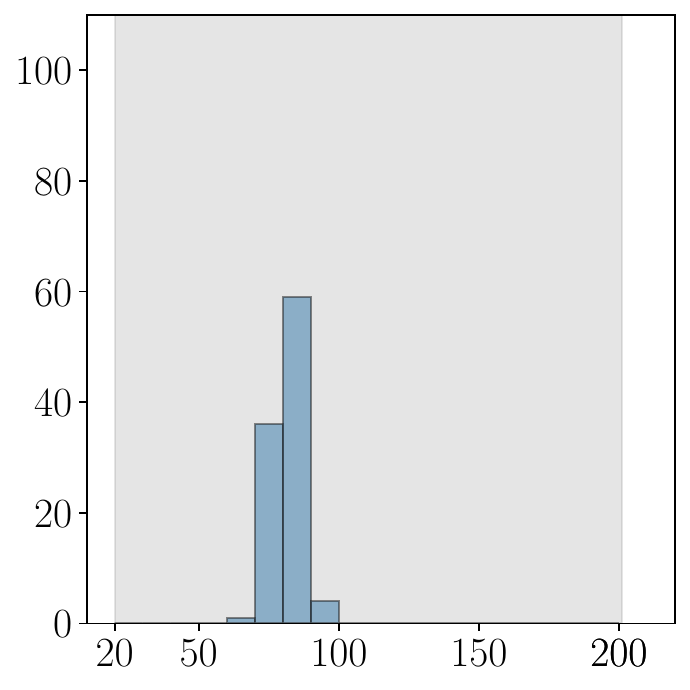}
        \caption{$r_{\mat{\Phi}}\!=\!20$, $r_{\mat{\Psi}}\!=\!10$}    
        \label{fig:d_10_n_20_k_200_r_20}
    \end{subfigure}
    \caption{Distribution of the number of active columns for the solutions to the multi-task lasso problem on randomly generated matrices with $\mathbf{\Phi}$ of various rank. The horizontal axis is the number of nonzero columns in the optimal $\mathbf{V}$ and the vertical axis is the frequency. We ran this experiment for 100 randomly generated matrices, in all cases $D=10, N=20$ and $K=200$. By~\cref{thm:multi_task_lasso_bound} we expect $r_{\mathbf{\Phi}} \leq \hat{K} \leq r_{\mathbf{\Phi}} \cdot r_{\mathbf{\Psi}}$. The shaded region indicates our theoretical bounds.}
    \label{fig:ranks_ML_lasso}
\end{figure}

To demonstrate that the sparsest solutions to the multi-task lasso problem \cref{thm:multi_task_lasso_bound} depends on the data matrices $\mathbf{\Phi}$ and $\mathbf{\Psi}$ we also ran a small-scale experiment similar to \cref{fig:ranks_ML_lasso} and \cref{fig:dimensions_ML_lasso}. However, in the next set of experiments, we exhaustively searched over all $2^{K}$ sparsity patterns that the solution may have. We arrive at the same conclusion as we did before: The sparsest solution can lie anywhere within our bound. This is depicted in \cref{fig:exhaustive_ML_lasso}.

\begin{figure}[tb!]
    \captionsetup[sub]{font=tiny}
    \centering
    \begin{subfigure}[b]{0.19\textwidth}
        \centering
        \includegraphics[width=\textwidth]{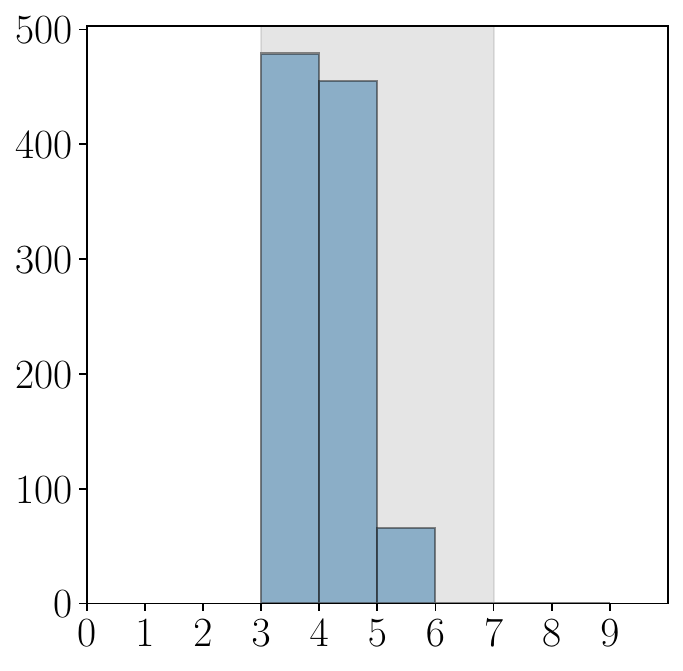}
        \caption{$D=2$, $N=3$, $K = 7$}   
        \label{fig:rank_K_7}
    \end{subfigure}
    \hfill
    \begin{subfigure}[b]{0.19\textwidth}
        \centering 
        \includegraphics[width=\textwidth]{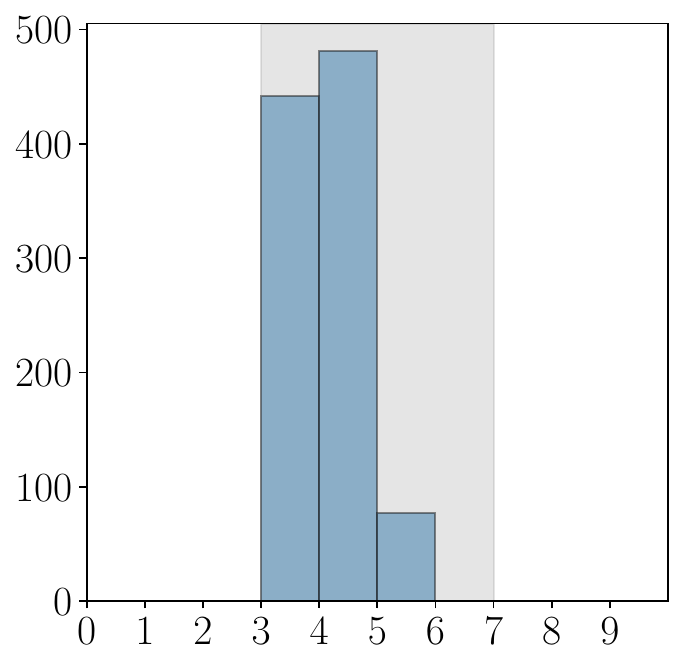}
        \caption{$D=2$, $N=3$, $K=8$}    
        \label{fig:rank_K_8}
    \end{subfigure}
    \hfill
    \begin{subfigure}[b]{0.19\textwidth}
        \centering 
        \includegraphics[width=\textwidth]{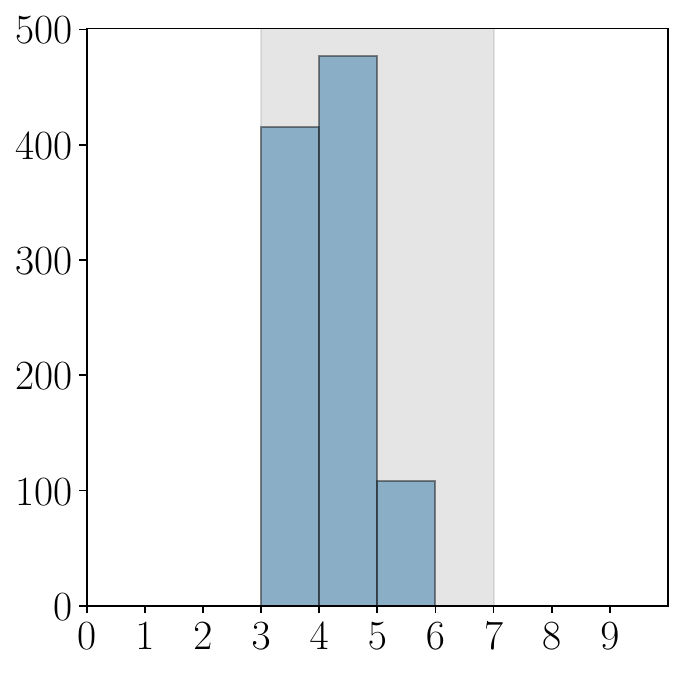}
        \caption{$D=2$, $N=3$, $K=9$}    
        \label{fig:rank_K_9}
    \end{subfigure}
    \hfill
    \begin{subfigure}[b]{0.19\textwidth}
        \centering 
        \includegraphics[width=\textwidth]{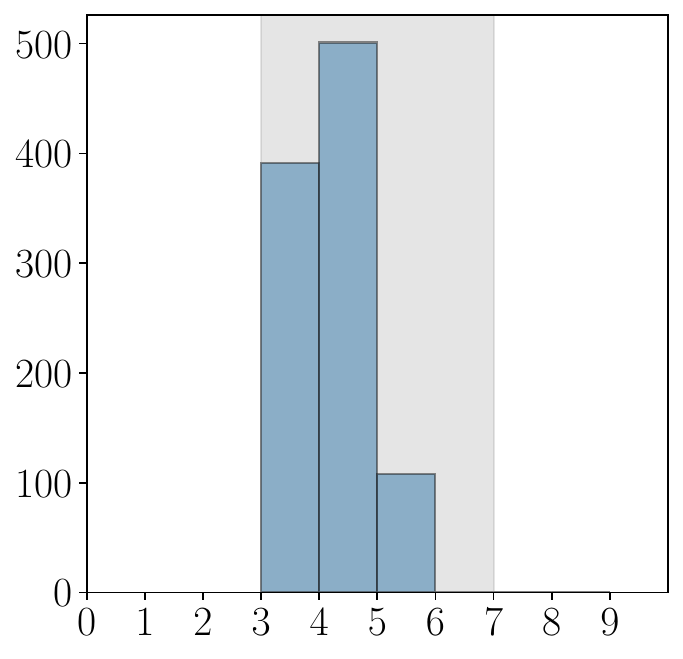}
        \caption{$D=2$,\! $N=3$,\! $K=10$}    
        \label{fig:rank_K_10}
    \end{subfigure}
    \hfill
    \begin{subfigure}[b]{0.19\textwidth}
        \centering 
        \includegraphics[width=\textwidth]{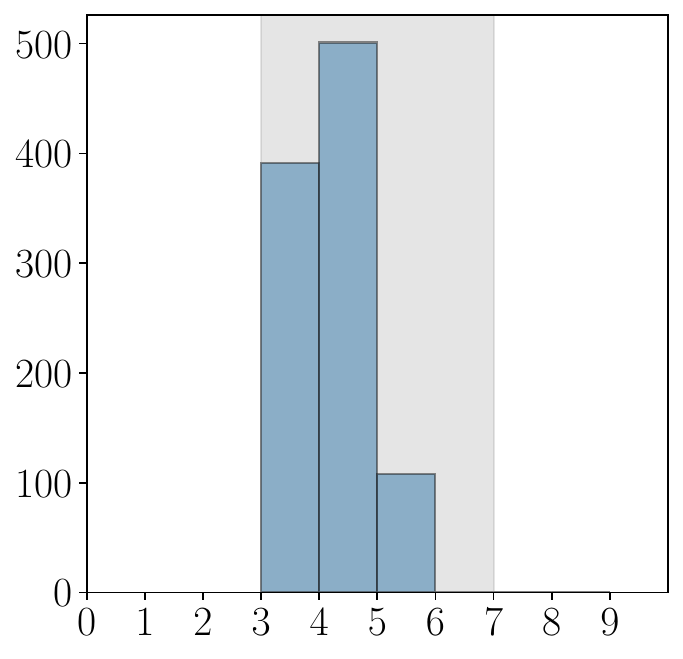}
        \caption{$D=2$, $N=3$, $K=11$}    
        \label{fig:rank_K_11}
    \end{subfigure}
    \caption{Distribution of number of nonzero columns for the solutions to the multi-task lasso problem. We ran this experiment over 1000 randomly generated matrices $\mathbf{\Phi}$ and $\mathbf{\Psi}$. The horizontal axis is the number of nonzero columns in the optimal $\mathbf{V}$ where each bin is left-inclusive corresponding to a single integer. The vertical axis is the frequency. We see that the sparsest solution is dependent on the data and can vary between our bounds. The area between the two shaded regions indicate our theoretical bounds. The gap indicates that our upper bound can be sharpened.}
    \label{fig:exhaustive_ML_lasso}
\end{figure}

\subsection{Compression of Pre-Trained DNNs}\label{sec:compressing-dnns}
We compress DNNs pre-trained on the CIFAR-10 dataset based on the principled approach outlined in \cref{sec:bounds}. The first model we consider is a pre-trained VGG-19 architecture trained with weight decay.\footnote{We use the pre-trained VGG-19 model from \url{https://github.com/hwang595/Pufferfish}.} This model consists of various convolutional and batch-norm layers followed by a fully-connected ReLU layer that contains 512 neurons. We run our compression procedure on this ReLU layer.

The output of this ReLU layer is the output of the entire network. There are $10$ outputs that correspond to the $10$ classes of CIFAR-10. Therefore, $\mathbf{\Psi} \in \mathbb{R}^{10 \times N}$ with $r_\mathbf{\Psi} \leq 10$. The post-activation feature matrix $\mathbf{\Phi} \in \R^{512 \times N}$ has an approximate rank of 10. We approximate the rank following the same procedure of \citet[Appendix D]{huh2023low}:  We threshold the singular values at a value of $10^{-3}$.

Thus, \cref{thm:widthbound} suggests that there exists an alternative optimal representation with no more than $100$ neurons in this layer that can be found by solving a convex multi-task lasso problem. We validate this by minimizing\footnote{We minimize the objective with proximal gradient methods.} a regularized version of the constrained multi-task lasso problem
\begin{equation}\label{opt:lagrangian}
    \min_{\{ \vv_k \}} \frac{1}{ND}\| \mathbf{V}\mathbf{\Phi} - \mathbf{\Psi}\|^{2}_{2} + \lambda \sum_{k=1}^{K} \| \vv_k \|_{2},
\end{equation}
with $\lambda = 3 \times 10^{-3}$.

The second model we consider is a pre-trained AlexNet architecture trained with weight decay.\footnote{We trained this model ourselves based on the implementation in \url{https://github.com/rasbt/deeplearning-models}.} We use this mode to illustrate how we can compress multiple layers in a DNN in a layer-by-layer fashion. This model has multiple convolutional layers followed by three fully connected ReLU layers with 9216, 4096, and 4096 neurons, respectively. We run our compression procedure on these ReLU layers.

For the first fully-connected layer we found that $r_{\mathbf{\Psi}} \approx 69$ and $r_{\mathbf{\Phi}} \approx 225$. For the second fully-connected layer we found that $r_{\mathbf{\Psi}} \approx 56$ and $r_{\Phi} \approx 69$. For the last fully-connected layer we found that $r_{\mathbf{\Psi}} \leq 10$ and $r_{\mathbf{\Phi}} \approx 56$. The approximate ranks were again computed with a threshold of $10^{-3}$ on the singular values. \Cref{thm:widthbound} suggestions that we can significantly compress the last two layers. For the first and second fully-connected layers we solve \cref{opt:lagrangian} with $\lambda = 10^{-6}$. For the last fully-connected layer we used $\lambda = 10^{-3}$.

The results for the compression of both VGG-19 and AlexNet can be found in~\cref{tab:cifar10}. Observe that there is almost no change in the training loss or the sum of squared weights of the model. The minor discrepancies between the original models and the compressed models are due to the fact that we solve the regularized problem \eqref{opt:lagrangian}.

\begin{table}[htb]
    \fontsize{30}{40}\selectfont
    \caption{Compression results for various layers of VGG-19 and AlexNet models pre-trained on the CIFAR-10 dataset. FC stands for fully connected.}
    \resizebox{1.\columnwidth}{!}{
    \begin{tabular}{lcccccc}
        \toprule
        & \multicolumn{2}{c}{Width} & \multicolumn{2}{c}{Train Loss (Test Acc.)} & \multicolumn{2}{c}{$\|\bm{\theta}\|^2_2$} \\
        \cmidrule(r){2-3}\cmidrule(r){4-5}\cmidrule(l){6-7}
        & Original & New  & Original & New & Original & New\\
        \midrule
        VGG-19 (last layer) &   512 & 99 & 0.0002\;(93.91\%) & 0.0002\;(93.90\%) & 3823.41 & 3816.70 \\
        AlexNet (last layer) & 4096 & 555 & 0.2203\;(72.17\%) & 0.2201\;(72.14\%) & 1141.50\; & 961.46\\
        AlexNet (penultimate layer) &  4096 & 2369 & 0.2203\;(72.17\%) & 0.2158\;(72.16\%) & 1141.50 & 1137.65\\
        AlexNet (first FC layer) &  9216 & 8208 & 0.2203\;(72.17\%) & 0.2190\;(72.18\%) & 1141.50 & 1137.33\\
        \cmidrule(r){1-1}
        AlexNet (last 3 layers) & (4096,4096,9216) & (555,2369,8208) & 0.2203\;(72.17\%) & 0.21521(72.13\%) & 1141.50 & 956.5\\
        \bottomrule
    \end{tabular}
    }
    \label{tab:cifar10}
\end{table}

\section{Related Work}
In this section, we highlight some related works and discuss how our work fits into the current literature.

\subsection{Variation Spaces and Representer Theorems for Neural Networks} Understanding neural networks by studying their associated variation space has been explored in the past \citep{BarronUniversal, kurkova2001bounds,mhaskar2004tractability,bach2017breaking}.  Representer theorems that show that finite-width shallow networks are solutions to data-fitting problems posed over these spaces have also been developed~\citep{parhi2021banach}. These results have been extended to a wide variety of activation functions~\citep{parhi2020role}, more general settings \citep{korolev2022two,BartolucciRKBS,SpekRKBS,wang2024sparse}, as well as deep networks~\citep{parhi2022kinds,bartolucci2024neural,wang2024hypothesis}. The variation norm has also shown practical benefits in learning sparse neural networks~\citep{yang2022better} and implicit neural representations~\citep{shenouda2024relus}.

It is important to note that these prior works primarily investigate scalar-output neural networks or simple Cartesian product constructions to account for multiple outputs. As we saw in \cref{sec:vv-spaces} (see also \cref{app:equivalent-VV-M-norms}), the vector-valued setting brings unique difficulties in the choice of the norm associated with the variation space, and many choices do not always coincide with regularizers used in practice such as weight decay. While we note that there has been work devoted to the investigation of vector-valued RKBSs for multi-task learning (see \citet{zhang2013vector} and references therein), we are the first to study and propose the $\V_{\sigma}(\R^d;\R^D)$ space and norm. Furthermore, our new representer theorem (\cref{thm:VV-rep}) improves many results in the representer theorem literature since the number of neurons does not depend on the output dimension (see \citet{bredies2023extreme} for more details on the output-dimension dependence).

\subsection{Weight Decay and Sparsity} Numerous works have observed that weight decay is biased towards solutions with fewer neurons ~\citep{savarese2019infinite,parhi2021banach, yang2022better}. Empirically,~\citet{yang2022better} have shown that training DNNs with weight decay for many iterations induces neuron-wise sparsity on the trained network. Furthermore,~\cite{parhi2022kinds,jacot2022feature} provide bounds on the widths for DNNs that solve the weight decay objective. Our bounds improve upon the ones presented in both of these works. In particular, we develop the first \emph{data-dependent} bounds. Moreover, our setting is also more general: Our theory holds for \emph{any} DNN architecture that has homogeneous activation functions.

Recently, the works of~\cite{ergen2021convex,Mishkin2022FastCO} have shown that training shallow ReLU neural networks with weight decay can be recast as a (very large) convex program. Their reformulation reveals that weight decay induces a sparsity-promoting regularizer. A similar observation was developed for the vector-valued case by \cite{sahiner2021vectoroutput}.

\subsection{Low-Rank Features}
It has been observed that DNNs are biased toward learning low-rank features. Theoretical evidence for this phenomenon has been developed by~\citet{du2018algorithmic,ji2018gradient,radhakrishnan2020alignment,le2022training,mousavi-hosseini2023neural}. This has also been observed empirically in~\citet{nar2019crossentropy,waleffe2020principal, feng2022rank, huh2023low, kwon2024efficient, yaras2024compressible}. We highlight that some works such as \citet{waleffe2020principal, kwon2024efficient} use this insight to develop specialized algorithms that compress models during training. In contrast, our theory and experiments indicate that narrower networks can be found regardless of the training procedure by solving a simple convex multi-task lasso. Moreover, \cref{thm:widthbound} guarantees that the compressed network has the same sum of squared weights and minimizes the same objective as the original network. 

\section{Conclusion}
In this work we proposed the $\V_{\sigma}(\R^{d};\R^{D})$ function space which gives insights in the inductive bias of vector-valued neural networks. This is a critical step towards understanding DNNs, the theory of which significantly lags behind that of shallow neural networks. We proved a new representer theorem for this space showing that finite-width vector-valued neural networks are solutions to the data-fitting problem over this space. For DNNs with homogeneous activation functions we developed a novel connection between training DNNs with weight decay and multi-task lasso to prove the sharpest known bounds on network widths. This result motivated the design of a principled and computationally efficient procedure to compress pre-trained DNNs. Finally, we presented some experimental results showing the validity and practical utility of our theory.


\acks{The authors thank Ryan Tibshirani for helpful discussions regarding the lasso and related methods, especially the use of Carathéodory's theorem for characterizing the sparsity of their solutions. JS was supported by the ONR MURI grant N00014-20-1-2787. RP was supported by the National Science Foundation (NSF)  Graduate  Research  Fellowship  Program  under  grant  DGE-1747503 while he was with the University of Wisconsin--Madison and the European Research Council under grant 101020573 while he was with the \'Ecole polytechnique f\'ed\'erale de Lausanne. KL was supported by the NSF grant DMS-2023239. RN was supported in part by the NSF grants DMS-2134140 and DMS-2023239, the ONR MURI grant N00014-20-1-2787, and the AFOSR/AFRL grant FA9550-18-1-0166.}

\appendix
\section{Different Norms on Vector-Valued Measures} \label{app:equivalent-VV-M-norms}
In this appendix, we will discuss different choices of norms one can equip the space of vector-valued measures $\M(\Omega; \R^D)$, where $\Omega$ is any locally compact Hausdorff space. Eventually, we will show in \cref{lemma:equivalent-VV-M-norms} that these choices are equivalent Banach norms, though only one choice (the one proposed in \cref{sec:vv-spaces}) corresponds to weight-decay regularization in neural networks with homogeneous activation functions.

By viewing $\M(\Omega; \R^D)$ as the $D$-fold Cartesian product $\bigtimes_{j=1}^D \M(\Omega)$, a na\"ive choice of norm would be the mixed norm
\begin{equation}
    \norm{\vec{\nu}}_{\M, p} \coloneqq \paren*{\sum_{j=1}^D \norm{\nu_j}_{\M(\Omega)}^p}^{1/p} = \paren*{\sum_{j=1}^D \paren*{\sup_{\substack{\Omega = \bigcup_{i=1}^n A_i \\ n \in \mathbb{N}}} \sum_{i=1}^n \abs{\nu_j(A_i)}}^p}^{1/p},
\end{equation}
with $p \geq 1$ and $\vec{\nu} = (\nu_1, \ldots, \nu_D)$ where each $\nu_j \in \M(\Omega)$, $j = 1, \ldots, D$. In this scenario, it is clear that $\M(\Omega; \R^D)$ is a Banach space when equipeed with the norm $\norm{\dummy}_{\M, p}$, $p \geq 1$. Alternatively, one could consider the norm
\begin{equation}
    \norm{\vec{\nu}}_{p, \M} \coloneqq \sup_{\substack{\Omega = \bigcup_{i=1}^n A_i \\ n \in \mathbb{N}}} \sum_{i=1}^n \norm{\vec{\nu}(A_i)}_p = \sup_{\substack{\Omega = \bigcup_{i=1}^n A_i \\ n \in \mathbb{N}}} \sum_{i=1}^n \paren*{\sum_{j=1}^D \abs{\nu_j(A_i)}^p}^{1/p},
\end{equation}
with $p \geq 1$. We have that $(\M(\Omega; \R^D), \norm{\dummy}_{p, \M})$ is also Banach space \citep[cf.][pg. 29]{diestel1977vector}. When $\Omega = \Sph^d$ and $p =2$, this coincides with \cref{eq:this-is-the-norm}. The next lemma shows that all of these norms are actually equivalent Banach norms.
\begin{lemma} \label[lemma]{lemma:equivalent-VV-M-norms}
Let $\Omega$ be any locally compact Hausdorff space. The norms $\norm{\dummy}_{p, \M}$, $p \geq 1$, and $\norm{\dummy}_{\M, q}$, $q \geq 1$ are all equivalent Banach norms for $\M(\Omega; \R^D)$.
\end{lemma}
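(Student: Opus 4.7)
The plan is to reduce everything to two standard facts: (a) on $\R^D$, all $\ell^p$ norms are equivalent; and (b) for a single scalar measure $\mu \in \M(\Omega)$, the total variation is a supremum over finite partitions, so it behaves monotonically under refinement. The strategy is to prove three equivalences and then chain them: (i) $\norm{\cdot}_{p,\M} \asymp \norm{\cdot}_{p',\M}$ for any $p,p' \geq 1$, (ii) $\norm{\cdot}_{\M,q} \asymp \norm{\cdot}_{\M,q'}$ for any $q,q' \geq 1$, and (iii) $\norm{\cdot}_{1,\M} = \norm{\cdot}_{\M,1}$. Together, (i)–(iii) yield the full web of equivalences via transitivity. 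Completeness in each norm is already stated in the excerpt (citing \citet{diestel1977vector}), and equivalent norms preserve completeness, so the Banach property is automatic once equivalence is proved.

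For (i), fix $p,p' \geq 1$ and constants $c_{p,p'}, C_{p,p'} > 0$ with $c_{p,p'} \norm{x}_{p'} \leq \norm{x}_p \leq C_{p,p'} \norm{x}_{p'}$ for all $x \in \R^D$. Applying this inequality to the vector $\vec{\nu}(A_i) \in \R^D$ for every block $A_i$ of every finite partition of $\Omega$ and summing gives $c_{p,p'} \sum_i \norm{\vec{\nu}(A_i)}_{p'} \leq \sum_i \norm{\vec{\nu}(A_i)}_p \leq C_{p,p'} \sum_i \norm{\vec{\nu}(A_i)}_{p'}$; taking the supremum over partitions preserves the inequalities and yields $c_{p,p'} \norm{\vec{\nu}}_{p',\M} \leq \norm{\vec{\nu}}_{p,\M} \leq C_{p,p'} \norm{\vec{\nu}}_{p',\M}$. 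For (ii) the argument is even easier: $\norm{\vec{\nu}}_{\M,q}$ is literally the $\ell^q$-norm of the fixed $D$-vector $(\norm{\nu_1}_{\M(\Omega)}, \ldots, \norm{\nu_D}_{\M(\Omega)})$, so equivalence of $\ell^q$ and $\ell^{q'}$ on $\R^D$ gives the result directly.

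The crux is (iii). The inequality $\norm{\vec{\nu}}_{1,\M} \leq \norm{\vec{\nu}}_{\M,1}$ is immediate by swapping the order of summation: for any partition $\{A_i\}$,
\begin{equation*}
    \sum_i \norm{\vec{\nu}(A_i)}_1 = \sum_i \sum_{j=1}^D \abs{\nu_j(A_i)} = \sum_{j=1}^D \sum_i \abs{\nu_j(A_i)} \leq \sum_{j=1}^D \norm{\nu_j}_{\M(\Omega)} = \norm{\vec{\nu}}_{\M,1}.
\end{equation*}
The reverse direction is where the common-refinement trick enters. Fix $\epsilon > 0$. For each $j = 1, \ldots, D$, choose a finite partition $\{A_i^{(j)}\}_{i}$ of $\Omega$ with $\sum_i \abs{\nu_j(A_i^{(j)})} > \norm{\nu_j}_{\M(\Omega)} - \epsilon/D$. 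Form the common refinement $\{B_{\vec{k}}\}_{\vec{k}}$ with $B_{\vec{k}} = \bigcap_{j=1}^D A_{k_j}^{(j)}$. Since each $A_i^{(j)}$ is a disjoint union of the blocks $B_{\vec{k}}$ with $k_j = i$, the triangle inequality yields $\abs{\nu_j(A_i^{(j)})} \leq \sum_{\vec{k}: k_j = i} \abs{\nu_j(B_{\vec{k}})}$, and summing over $i$ gives $\sum_i \abs{\nu_j(A_i^{(j)})} \leq \sum_{\vec{k}} \abs{\nu_j(B_{\vec{k}})}$ for every $j$. Summing over $j$ then produces
\begin{equation*}
    \sum_{\vec{k}} \norm{\vec{\nu}(B_{\vec{k}})}_1 = \sum_{j=1}^D \sum_{\vec{k}} \abs{\nu_j(B_{\vec{k}})} \geq \sum_{j=1}^D \paren*{\norm{\nu_j}_{\M(\Omega)} - \epsilon/D} = \norm{\vec{\nu}}_{\M,1} - \epsilon,
\end{equation*}
and letting $\epsilon \to 0$ gives $\norm{\vec{\nu}}_{1,\M} \geq \norm{\vec{\nu}}_{\M,1}$.

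The only nontrivial step is (iii), and the main obstacle there is that the partitions witnessing the total variation of distinct components $\nu_j$ may be incompatible; the common-refinement construction resolves this by producing one partition that simultaneously approximates all $D$ scalar total variations. Steps (i) and (ii) are routine bookkeeping once (iii) is in hand, and chaining them through $\norm{\cdot}_{1,\M} = \norm{\cdot}_{\M,1}$ establishes all equivalences. Since each norm is already known to be Banach, equivalence of norms automatically preserves completeness and finishes the proof.
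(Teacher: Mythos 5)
Your proof is correct, and while it follows the same overall architecture as the paper's (reduce everything via the finite-dimensional equivalence of $\ell^p$-norms to a single comparison between $\norm{\dummy}_{1,\M}$ and $\norm{\dummy}_{\M,1}$), the key step is handled by a genuinely different and sharper argument. The paper settles for the two-sided bound $\frac{1}{D}\norm{\vec{\nu}}_{\M,1} \leq \norm{\vec{\nu}}_{1,\M} \leq \norm{\vec{\nu}}_{\M,1}$, obtaining the lower inequality crudely by bounding $\norm{\vec{\nu}}_{1,\M}$ below by the total variation of the single largest component $\nu_{j_0}$ and then paying a factor of $D$. You instead prove the exact identity $\norm{\vec{\nu}}_{1,\M} = \norm{\vec{\nu}}_{\M,1}$ via the common-refinement construction: since refining a partition can only increase $\sum_i \abs{\nu_j(A_i)}$ (triangle inequality applied to $\nu_j$ of a disjoint union), a single refinement simultaneously witnesses all $D$ scalar total variations up to $\epsilon$. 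This is the standard fact that the variation of a vector measure with respect to the $\ell^1$-norm on $\R^D$ is the sum of the component variations, and your argument for it is complete (the common refinement of finitely many finite measurable partitions is again a finite measurable partition, and empty blocks contribute nothing). What your route buys is dimension-free constants in this step, which would propagate to slightly better explicit constants wherever the lemma is invoked (e.g., the constant $C_{d,D}$ in the proof of the approximation theorem); what the paper's route buys is two lines of algebra in place of the refinement bookkeeping. Both establish the lemma as stated, and your closing observation that equivalence of norms preserves completeness correctly disposes of the Banach claim.
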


\begin{proof}
    We first observe that  for any $p, q \geq 1$, the norms $\norm{\dummy}_{\M, p}$ and $\norm{\dummy}_{\M, q}$ are equivalent by the equivalence of $p$-norms in finite dimensions. Similarly, for any $p, q \geq 1$, the norms $\norm{\dummy}_{p, \M}$ and $\norm{\dummy}_{q, \M}$ are equivalent. Thus, it suffices to show that $\norm{\dummy}_{\M, p}$ and $\norm{\dummy}_{q, \M}$ are equivalent for some $p, q \geq 1$. We have for any $\vec{\nu} \in \M(\Omega; \R^D)$
    \begin{align*}
        \norm{\vec{\nu}}_{\M, 1}
        &= \sum_{j=1}^D \norm{\nu_j}_{\M(\Omega)} \\
        &= \sum_{j=1}^D \sup_{\substack{\Omega = \bigcup_{i=1}^n A_i \\ n \in \mathbb{N}}} \sum_{i=1}^n \abs{\nu_j(A_i)} \\
        &\geq \sup_{\substack{\Omega = \bigcup_{i=1}^n A_i \\ n \in \mathbb{N}}} \sum_{j=1}^D \sum_{i=1}^n \abs{\nu_j(A_i)} \\
        &= \sup_{\substack{\Omega = \bigcup_{i=1}^n A_i \\ n \in \mathbb{N}}} \sum_{i=1}^n \sum_{j=1}^D \abs{\nu_j(A_i)} \\
        &= \sup_{\substack{\Omega = \bigcup_{i=1}^n A_i \\ n \in \mathbb{N}}} \sum_{i=1}^n \norm{\vec{\nu}(A_i)}_1 = \norm{\vec{\nu}}_{1, \M}. \numberthis
    \end{align*}
    For the reverse inequality, observe that
    \begin{align*}
        \norm{\vec{\nu}}_{1, \M}
        &= \sup_{\substack{\Omega = \bigcup_{i=1}^n A_i \\ n \in \mathbb{N}}} \sum_{i=1}^n \sum_{j=1}^D \abs{\nu_j(A_i)} \\
        &\geq \sup_{\substack{\Omega = \bigcup_{i=1}^n A_i \\ n \in \mathbb{N}}} \sum_{i=1}^n \abs{\nu_{j_0}(A_i)} \\
        &= \norm{\nu_{j_0}}_{\M(\Omega)} \\
        &\geq \frac{1}{D} \sum_{j=1}^D \norm{\nu_j}_{\M(\Omega)} \\
        &= \frac{1}{D} \norm{\vec{\nu}}_{\M, 1}, \numberthis
    \end{align*}
    where $j_0 \coloneqq \argmax_{j \in [D]} \norm{\nu_j}_{\M(\Omega)}$.
    Therefore, we have that
    \begin{align}\label{eq:equi-norms}
        \norm{\vec{\nu}}_{1, \M} \leq \norm{\vec{\nu}}_{\M, 1} \leq D \norm{\vec{\nu}}_{1, \M},
    \end{align}
    which proves the lemma.
\end{proof}
\begin{remark}
    While the norms may be equivalent, only the $\norm{\dummy}_{2, \M}$-norm corresponds to weight-decay regularization as shown in \cref{sec:vv-spaces}. The use of some of these other norms have been explored in the literature. For example, \citet{parhi2022kinds} define a similar vector-valued variation space for ReLU networks with the $\norm{\dummy}_{\M, 1}$-norm, which does not correspond to weight-decay regularization.
\end{remark}

\subsection{Connection to the Total Variation of Function}
The total variation of a measure is different than the total variation of a function, but the ideas are tightly linked.
In the univariate case, consider a Radon measure $\mu \in \M(\R)$ and suppose there exists a function $g_\mu: \R \to \R$ such that
\begin{equation}
    \int_{\R} f(x) \dd \mu(x) = \int_{\R} f(x) \dd g_\mu(x),
\end{equation}
where $f$ is any bounded continuous function and the integral on the right-hand side is a Riemann--Stieltjes integral. If $g_\mu$ is differentiable and its derivative, denoted by $g_\mu'$, is in $L^1(\R)$, then we have the equality
\begin{equation}
    \norm{\mu}_{\M(\R)} = \norm{g_\mu'}_{L^1(\R)}.
\end{equation}
The quantity on the right-hand side is referred to as the \emph{total variation} of the function $g_\mu$. Furthermore, if $g_\mu$ is not differentiable in the classical sense, its distributional derivative can be identified with the Radon measure $\mu$, i.e., $g_\mu' = \mu$, where equality is understood in $\M(\R)$. In this case, $\norm{g_\mu'}_{\M(\R)} = \norm{\mu}_{\M(\R)}$ is the total variation of the function $g_\mu$.

This correspondence extends to higher dimensions. If the vector measure $\mu=\nabla g$ for some function $g:\R^d \to \R$, then the norm we are using $\norm{\mu}_{2, \M}=\norm{\nabla g}_{2, \M}$, is the \emph{isotropic total variation} of $g$. On the other hand, the norm $\norm{\mu}_{1, \M}=\norm{\nabla g}_{1, \M}$ corresponds to the \emph{anisotropic total variation} of $g$. These two notions of total variation in multiple dimensions are often used in image processing. 

The isotropic total variation is equivalently specified by in the dual form
\begin{equation}
    \TV_\iso(g) = \sup_{\substack{\varphi \in \mathcal{D}(\R^d; \R^d) \\ \norm{\varphi}_{L^\infty(\R^d; \R^d)} = 1}} \int_{\R^d} g(\vec{x}) \,\mathrm{div} \varphi(\vec{x}) \dd \vec{x},
\end{equation}
where
\begin{equation}
    \norm{\varphi}_{L^\infty(\R^d; \R^d)} \coloneqq \esssup_{\vec{x} \in \R^d} \: \norm{\varphi(\vec{x})}_2,
\end{equation}
and $\mathcal{D}(\R^d; \R^d)$ denotes the space of infinitely differentiable compactly supported functions mapping $\R^d \to \R^d$. We refer the reader to the book of \citet{evans2015measure} for more details about the total variation of a function.

\section{Proof of \Cref{thm:approx}} \label{app:approx}
\begin{proof}
    First notice that for any $f = (f_1, \ldots, f_D) \in L^2(\R^d; \R^D)$ we have that
    \begin{align*}
        \norm{f}_{L^2(Q^d; \R^D)}^2
        &= \int_{Q^d} \norm{f(\vec{x})}_2^2 \dd\vec{x} \\
        &= \int_{Q^d} \sum_{j=1}^D \abs{f_j(\vec{x})}^2 \dd\vec{x} \\
        &= \sum_{j=1}^D \int_{Q^d} \abs{f_j(\vec{x})}^2 \dd\vec{x} \\
        &= \sum_{j=1}^D \norm{f_j}_{L^2(Q^d)}^2. \numberthis
    \end{align*}
    For any $f \in \V_\sigma(Q^d; \R^D)$ we have that
    \begin{equation}
        \sum_{j=1}^D \norm{f_j}_{\V_\sigma(Q^d)}^2 \leq \paren*{\sum_{j=1}^D \norm{f_j}_{\V_\sigma(Q^d)}}^2.
    \end{equation}
    Next,
    \begin{align}
        \sum_{j=1}^D \norm{f_j}_{\V_\sigma(Q^d)}
        &= \sum_{j=1}^D \inf_{\substack{\nu_j \in \M(\Sph^d) \nonumber\\ f_j = f_{\nu_j}}} \norm{\nu_j}_{\M(\Sph^d)} \nonumber\\
        &\leq \inf_{\substack{\vec{\nu} \in \M(\Sph^d; \R^D) \nonumber\\ f = f_\vec{\nu}}} \sum_{j=1}^D \norm{\nu_j}_{\M(\Sph^d)} \nonumber\\
        &=  \inf_{\substack{\vec{\nu} \in \M(\Sph^d; \R^D) \nonumber\\ f = f_\vec{\nu}}} \norm{\vec{\nu}}_{\M, 1} \nonumber\\
        &\leq D^\frac{3}{2} \inf_{\substack{\vec{\nu} \in \M(\Sph^d; \R^D) \\ f = f_\vec{\nu}}} \norm{\vec{\nu}}_{2, \M} = D^\frac{3}{2} \norm{f}_{\V_\sigma(Q^d; \R^D)},
    \end{align}
    where the equalities of the form $f_j = f_{\nu_j}$ or $f = f_\vec{\nu}$ are understood as a function of $\vec{x} \in Q^d$. The fourth line follows from \cref{eq:equi-norms} combined with the inequality between $\|\dummy\|_1$ and $\|\dummy\|_2$.

    To prove the claim, given any $f \in \V_\sigma(Q^d; \R^D)$, we construct a $K$-term approximant $f_j^K$ for each component $f_j$, $j = 1, \ldots, D$, as in \cref{eq:MJB}. We then construct the vector-valued function $f_{DK} = (f_1^K, \ldots, f_D^K)$ which has, at most, $DK$ terms. This approximant satisfies
    \begin{align}
        \norm{f - f_{DK}}_{L^2(Q^d; \R^D)}^2
        &= \sum_{j=1}^D \norm{f_j - f_j^K}_{L^2(Q^d)}^2 \nonumber \\
        &\leq \sum_{j=1}^D C_0^2 C_\sigma^2 \norm{f_j}_{\V_\sigma(Q^d)}^2 K^{-1} \nonumber \\
        &\leq C_0^2 C_\sigma^2 K^{-1} \sum_{j=1}^D \norm{f_j}_{\V_\sigma(Q^d)}^2 \nonumber \\
        &\leq C_0^2 C_\sigma^2 K^{-1} \paren*{\sum_{j=1}^D \norm{f_j}_{\V_\sigma(Q^d)}}^2 \nonumber \\
        &\leq C_0^2 C_\sigma^2 D^3\norm{f}_{\V_\sigma(Q^d; \R^D)}^2 K^{-1}.
    \end{align}
    Therefore, 
    \begin{equation}
        \norm{f - f_{DK}}_{L^2(Q^d; \R^D)} \leq C_0 C_\sigma D^\frac{3}{2} \norm{f}_{\V_\sigma(Q^d; \R^D)} K^{-1/2},
    \end{equation}
    which proves the theorem.
\end{proof}

\section{Proof of \cref{thm:VV-rep}} \label{app:VV-rep}

\begin{proof}
    From the definition of the $\V_\sigma(\R^d; \R^D)$-norm \cref{eq:VV-norm}, it follows from a standard argument \citep[e.g.,][Proposition~3.7]{BartolucciRKBS} that the problem in \cref{eq:VV-opt} is equivalent to the problem
    \begin{equation}
        \inf_{\vec{\nu} \in \M(\Sph^d; \R^D)} \: \sum_{i=1}^N \mathcal{L}\paren*{\vec{y}_i, \int_{\Sph^d} \sigma(\vec{w}^\T\aug{\vec{x}}_i) \dd\vec{\nu}(\vec{w})} + \lambda \norm{\vec{\nu}}_{2, \M}
        \label{eq:opt-M2}
    \end{equation}
    in the sense that their infimal values are the same and if $\vec{\nu}^\star$ is a solution to \cref{eq:opt-M2}, then
    \begin{equation}
        f_{\vec{\nu}^\star}(\vec{x}) = \int_{\Sph^d} \sigma(\vec{w}^\T\aug{\vec{x}}_i) \dd\vec{\nu}^\star(\vec{w})
    \end{equation}
    is a solution to \cref{eq:VV-opt}. We now proceed in four steps to prove the theorem.

    \paragraph{Step (i): Existence of solutions to \cref{eq:opt-M2}.}
    
    Define
    \begin{equation}
        \mathcal{J}(\vec{\nu}) \coloneqq \sum_{i=1}^N \mathcal{L}\paren*{\vec{y}_i, \int_{\Sph^d} \sigma(\vec{w}^\T\aug{\vec{x}}_i) \dd\vec{\nu}(\vec{w})} + \lambda \norm{\vec{\nu}}_{2, \M}.
    \end{equation}
    Given an arbitrary $\vec{\nu}_0 \in \M(\Sph^d; \R^D)$, let $C_0 \coloneqq \mathcal{J}(\vec{\nu}_0)$. Then, we can transform \cref{eq:opt-M2} into the constrained problem
    \begin{equation}
        \inf_{\vec{\nu} \in \M(\Sph^d; \R^D)} \: \mathcal{J}(\vec{\nu}) \quad\textrm{s.t.}\quad \norm{\vec{\nu}}_{2, \M} \leq C_0 / \lambda.
        \label{eq:transformed}
    \end{equation}
    This transformation is valid since any measure that does not satisfy the constraint will have an objective value strictly larger than $\vec{\nu}_0$ and therefore will not be in the solution set. Next, we note that we can write
    \begin{equation}
        \mathcal{J}(\vec{\nu}) = \sum_{i=1}^N \mathcal{L}\paren*{\vec{y}_i, \HOp_i\curly{\vec{\nu}}} + \lambda \norm{\vec{\nu}}_{2, \M}
    \end{equation}
    where
    \begin{equation}
        \HOp_i\curly{\vec{\nu}} \coloneqq \begin{bmatrix}
            \ang{\vec{\nu}, h_{i, j}} \\
            \vdots \\
            \ang{\vec{\nu}, h_{i, D}}
        \end{bmatrix} \in \R^D
        \label{eq:pairing}
    \end{equation}
        with $h_{i, j}(\vec{w}) \coloneqq \sigma(\vec{w}^\T\aug{\vec{x}}_i) \vec{e}_j$, where $\vec{e}_j \in \R^D$ is the $j$th canonical unit vector.
        
        Note that $\sigma: \R \to \R$ is continuous by assumption. Therefore, we readily observe that $h_{i, j} \in C(\Sph^d; \R^D)$, the space of continuous functions on $\Sph^d$ taking values in $\R^D$. In \cref{eq:pairing}, $\ang{\dummy, \dummy}$ denotes the duality pairing\footnote{The continuous dual of $C(\Sph^d; \R^D)$ can be identified with $\M(\Sph^d; \R^D)$ by Singer's representation theorem~\citep{singer1957linear,singer1959applications} \citep[see also][]{hensgen1996simple,bredies2020higher}.} between $C(\Sph^d; \R^D)$ and $\M(\Sph^d; \R^D)$. Thus, for $i = 1, \ldots, N$, $\HOp_i$ is component-wise weak$^*$ continuous on $\M(\Sph^d; \R^D)$. Since $\mathcal{L}(\dummy, \dummy)$ is lower semicontinuous in its second argument combined with the fact that every norm is weak$^*$ continuous on its corresponding Banach space, we have that $\mathcal{J}$ is weak$^*$ continuous on $\M(\Sph^d; \R^D)$. By the Banach--Alaoglu theorem~\citep[Chapter~3]{rudin-functional}, the constraint set of \cref{eq:transformed} is weak$^*$ compact. Thus, \cref{eq:transformed} is the minimization of a weak$^*$ continuous functional over a weak$^*$ compact set. By the Weierstrass extreme value theorem on general topological spaces~\citep[Chapter~5]{convex-functional-analysis}, there exists a solution to \eqref{eq:transformed} (and subsequently of \eqref{eq:opt-M2}).

    \paragraph{Step (ii): Recasting \cref{eq:opt-M2} as an interpolation problem.}

    Let $\widetilde{\vec{\nu}}$ be a (not necessarily unique) solution to \cref{eq:opt-M2}, which is guaranteed to exist by the previous argumentation. For $i = 1, \ldots, D$, define
    \begin{equation}
        \vec{z}_i = \HOp_i\curly{\widetilde{\vec{\nu}}} \in \R^D.
    \end{equation}
    Then, $\widetilde{\vec{\nu}}$ must satisfy
    \begin{equation}
       \widetilde{\vec{\nu}} \in \argmin_{\vec{\nu} \in \M(\Sph^d; \R^D)} \norm{\vec{\nu}}_{2, \M} \quad\textrm{s.t.}\quad \HOp_i\curly{\vec{\nu}} = \vec{z}_i, \: i = 1, \ldots, N.
       \label{eq:interpolation-opt}
    \end{equation}
    To see this, we note that if this were not the case, it would contradict the optimality of $\widetilde{\vec{\nu}}$. This reduction implies that any solution to the interpolation problem \cref{eq:interpolation-opt} will also be a solution to \cref{eq:opt-M2}.

    \paragraph{Step (iii): The form of the solution.}

    We can rewrite the interpolation problem as
    \begin{equation}
       \min_{\vec{\nu} \in \M(\Sph^d; \R^D)} \norm{\vec{\nu}}_{2, \M} \quad\textrm{s.t.}\quad \ang{\vec{\nu}, h_{i, j}} = z_{i, j}, \: i = 1, \ldots, N \text{ and } j = 1, \ldots, D.
       \label{eq:interpolation-opt-2}
    \end{equation}
    This is the vector-valued analogue of the classical (Radon) measure recovery problem with $ND$ weak$^*$ continuous measurements \citep[cf.,][]{Zuh}. By the abstract representer theorems of~\citet{BoyerRepresenter,BrediesSparsity,UnserUnifyingRepresenter}, there always exists a solution to \cref{eq:interpolation-opt-2} that takes the form
    \begin{equation}
        \vec{\nu}^\star = \sum_{k=1}^K c_k \vec{e}_k
    \end{equation}
    with $K \leq ND$, $c_k \in \R \setminus \curly{0}$ where for $k = 1, \ldots, k$, $\vec{e}_k$ is an extreme point of the unit regularization ball
    \begin{equation}
        B \coloneqq \curly{\vec{\nu} \in \M(\Sph^d; \R^D) \st \norm{\vec{\nu}}_{2, \M} \leq 1}.
    \end{equation}
    From \citet[Theorem~2]{werner1984extreme}, the extreme points of $B$ take the form $\vec{a} \delta_\vec{w}$ with $\vec{a} \in \R^D$, $\norm{\vec{a}}_2 = 1$, and $\vec{w} \in \Sph^d$. Thus, we can write
    \begin{equation}
        \vec{\nu}^\star = \sum_{k=1}^K c_k \vec{a}_k \delta_{\vec{w}_k}
    \end{equation}
    with $\vec{a}_k \in \R^D$ and $\vec{w}_k \in \Sph^d$.
    By the equivalence between \cref{eq:opt-M2} and \cref{eq:VV-opt}, we find that there exists a solution to \cref{eq:VV-opt} that takes the form
    \begin{equation}
        \widehat{f}(\vec{x}) = \sum_{k=1}^{\widehat{K}} \widehat{\vec{v}}_k \sigma(\widehat{\vec{w}}_k^\T\aug{\vec{x}}), \quad \vec{x} \in \R^d,
    \end{equation}
    where $\widehat{K} \leq ND$, $\widehat{\vec{v}}_k \coloneqq c_k \vec{a}_k \in \R^D$ and $\widehat{\vec{w}}_k \in \Sph^d$.
    
    \paragraph{Step (iv): Sharpening the sparsity bound.}
    Step (iii) establishes that there exists an optimal solution 
    with $\widehat{K} \leq ND$ neurons. Thus, we can apply the same argumentation as in the proof of \cref{thm:widthbound} to find another solution where the number of neurons is bounded by the product of the rank of the labels $\mat{Y} \in \R^{D \times N}$ with the rank of the post-activation feature matrix $\mat{\Phi} \in \R^{\widehat{K} \times N}$. From the dimensions of these matrices, we see that the product of the ranks is $\leq N^2$. Therefore, there exists a solution to \cref{eq:VV-opt} that takes the form
    \begin{equation}
        f^\star(\vec{x}) = \sum_{k=1}^{K_0} \vec{v}_k \sigma(\vec{w}_k^\T\aug{\vec{x}}), \quad \vec{x} \in \R^d,
    \end{equation}
    where $K_0 \leq N^2$, $\vec{v}_k \in \R^D$ and $\vec{w}_k \in \Sph^d$. Thus, we can always find a solution to \cref{eq:VV-opt} with $\min\{N^2, ND\}$ neurons improving upon the bound of $ND + 1$ predicted by Carath\'eodory's theorem.
\end{proof}

\section{Proof of \cref{thm:deep-VV-rep}} \label{app:deep-VV-rep}
\begin{proof}
    Given $f = f^{(L)} \circ \cdots \circ f^{(1)}$ such that $f^{(\ell)} \in \V_{\sigma}(\R^{d_{\ell-1}}, \R^{d_\ell})$, $\ell=1,\cdots, L$, let the functional $\mathcal{J}$ denote the objective value of the optimization problem, i.e.,
    \begin{align}
        \mathcal{J}(f) \coloneqq \mathcal{J}(f^{(L)}, \ldots, f^{(1)}) \coloneqq \sum_{i=1}^{N} \mathcal{L} (\vy_i, f(\vx_i)) + \lambda \sum_{\ell=1}^{L} \|f^{(\ell)}\|_{\V_\sigma(\R^{d_{\ell-1}};\R^{d_{\ell}})}.
    \end{align}
    Next, following the same approach in the proof of \cref{thm:VV-rep}, for an arbitrary $g = g^{(L)} \circ \cdots \circ g^{(1)}$, where $g^{(\ell)} \in \V_\sigma(\R^{d_{\ell-1}}, \R^{d_{\ell}})$, $\ell=1,\cdots, L$, we define its objective value as $C \coloneqq \mathcal{J}(g)$. The unconstrained problem \cref{eq:deep-VV-opt} can then be transformed into the equivalent constrained problem
    \begin{equation}
        \inf_{\substack{f^{(\ell)} \in \V_\sigma(\R^{d_{\ell-1}}; \R^{d_\ell}) \\ \ell=1,\cdots,L \\f = f^{(L)} \circ \cdots \circ f^{(1)}}} \: \mathcal{J}(f) \quad \text{s.t.}  \quad \norm{f^{(\ell)}}_{\V_\sigma(\R^{d_{\ell-1}}; \R^{d_{\ell}})} \leq C/\lambda , \quad\ell=1,\cdots,L.
        \label{eq:const-deep-VV-opt}
    \end{equation}
    This transformation is valid since any collection of functions $f^{(\ell)}$, $\ell =1, \ldots, L$, that do not satisfy the constraints would result in an objective value that is strictly larger than that of $g$, and would therefore not be in the solution set.

    For any $f_0 = f_{0}^{(L)} \circ \cdots \circ f_{0}^{(1)}$, where $f^{(\ell)} \in \V_\sigma(\R^{d_{\ell-1}};\R^{d_{\ell}})$, $\ell=1,\cdots,L$, we will show that, for any fixed $\tilde{\ell} \in \curly{1, \ldots, L}$, the map $f^{(\tilde{\ell})}_{0} \mapsto \mathcal{J}(f_0)$ is weak$^*$ lower semicontinuous on $\V_\sigma(\R^{d_{\tilde{\ell}-1}}; \R^{d_{\tilde{\ell}}})$. First, observe that the map $f^{(\tilde{\ell})}_{0} \mapsto f_{0} (\vx_0)$, for any $\vx_0 \in \R^{d}$, is component-wise weak$^*$ continuous on $\V_\sigma(\R^{d_{\tilde{\ell}-1}};\R^{d_{\tilde{\ell}}})$. Indeed, this follows since for any $\bm{x}_0 \in \R^{d}$ the point evaluation operator
    \begin{align}
        \widetilde{\bm{x}}_0: f \mapsto f(\bm{x}_0) = \begin{bmatrix}
            f_1(\bm{x}_0)\\
            \vdots\\
            f_D(\bm{x}_0)
        \end{bmatrix}
    \end{align}
    is component-wise weak$^*$ continuous by Lemma~2.9 of \citet{parhi2022kinds} combined with the equivalence of norms in \cref{lemma:equivalent-VV-M-norms}. Thus, since $f_0^{(\widetilde{\ell})} \mapsto f^{(L)}_0 \circ \cdots \circ f^{(1)}_0(\bm{x}_0)$ is made up of compositions of component-wise continuous and component-wise weak$^*$ continuous functions. it is therefore itself component-wise weak$^*$ continuous on $\V_\sigma(\R^{d_{\widetilde{\ell}-1}}; \R^{d_{\widetilde{\ell}}})$. Therefore, the map $(f^{(1)}_0, \cdots, f^{(L)}_0) \mapsto \mathcal{J}(f_0)$ is weak$^*$ lower semicontinuous on $\V_\sigma(\R^{d_{\widetilde{\ell}-1}};\R^{d_{\widetilde{\ell}}})$. Finally, by the Banach--Alaoglu theorem \citep[Chapter 3]{rudin-functional}, the feasible set in \cref{eq:const-deep-VV-opt} is weak$^*$ compact. Therefore there exists a solution to \cref{eq:const-deep-VV-opt}, and thus \cref{eq:deep-VV-opt}, by the Weierstrass extreme value theorem on general topological spaces \citep[Chapter 5]{convex-functional-analysis}.

    To complete the proof, let $\widetilde{f} = \widetilde{f}^{(L)} \circ \cdots \circ \widetilde{f}^{(1)}$ be any solution to \cref{eq:deep-VV-opt}. By applying $\widetilde{f}$ to each data point $\bm{x}_i$, $i=1,\cdots,N$, we can recursively compute the intermediate vectors $\vz_{i,\ell} \in \R^{d_{\ell}}$ as follows:
    \begin{itemize}
        \item Initialize $\vz_{i,0} := \vx_i$.
        \item For each $\ell = 1,\cdots,L$, recursively update $\vz_{i,\ell} := \widetilde{f}^{\ell}(\vz_{i,\ell-1})$.
    \end{itemize}
    The solution $\widetilde{f}$ must satisfy
    \begin{align}\label{eq:layerwise-VV-opt}
        \widetilde{f}^{(\ell)} \in \arg \min_{f \in \V_\sigma(\R^{d_{\ell-1}}; \R^{d_{\ell}})} \|f\|_{\V_\sigma(\R^{d_{\ell-1}};\R^{d_{\ell}})} \quad \text{s.t.} \quad f(\vz_{i,\ell-1}) = \vz_{i,\ell}, i=1,\cdots,N,
    \end{align}
for $\ell=1,\cdots,L$ (since otherwise, it would contradict the optimality of $\tilde{f}$). By \cref{thm:VV-rep}, there always exists a solution to \cref{eq:layerwise-VV-opt} that takes the form of a shallow vector-valued neural network. Hence, there always exists a solution to \cref{eq:deep-VV-opt} of the form in \cref{eq:deep-VV-form}. 
\end{proof}

\section{Proof of \Cref{thm:multi_task_lasso_bound}}\label{app:multi_task_lasso}
 Our proof relies extensively on Carathéodory's theorem~\citep[Proposition 2.6]{clarke2013functional} which we first state here.
\begin{theorem}[Carathéodory's theorem]
    Let $S$ be a subset of a normed vector space with finite dimension $R$. Then  every  point $\vx \in \mathrm{Conv}(S)$, the convex hull of $S$, can be represented by a convex combination of at most $R+1$ points from $S$.
\end{theorem}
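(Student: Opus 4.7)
The plan is to prove Carathéodory's theorem by the classical linear-dependence elimination argument: I would start from an arbitrary convex-combination representation of $\vx$ and repeatedly collapse one term until the number of active points drops to $R+1$.

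First, since $\vx \in \mathrm{Conv}(S)$, I can write $\vx = \sum_{i=1}^{n} \lambda_i \vs_i$ with $\vs_i \in S$, $\lambda_i \geq 0$, $\sum_i \lambda_i = 1$, for some finite $n$ (which is how the convex hull is defined). If $n \leq R+1$ there is nothing to prove, so assume $n \geq R+2$. The $n-1$ displacements $\vs_2 - \vs_1, \ldots, \vs_n - \vs_1$ all lie in the ambient $R$-dimensional space, so they must be linearly dependent. This produces scalars $\mu_2, \ldots, \mu_n$, not all zero, with $\sum_{i=2}^{n} \mu_i (\vs_i - \vs_1) = 0$. Setting $\mu_1 \coloneqq -\sum_{i=2}^{n} \mu_i$ yields coefficients $(\mu_1, \ldots, \mu_n)$, not all zero, satisfying the two balance relations
\begin{equation}
\sum_{i=1}^{n} \mu_i \vs_i = 0, \qquad \sum_{i=1}^{n} \mu_i = 0.
\end{equation}

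Next, for every $t \in \R$ the identity $\vx = \sum_{i=1}^{n} (\lambda_i - t \mu_i) \vs_i$ holds and the new coefficients still sum to $1$. The idea is to choose $t$ so that one coefficient drops to zero while the rest remain nonnegative. Because $\sum_i \mu_i = 0$ and not all $\mu_i$ vanish, at least one $\mu_i$ is strictly positive, so
\begin{equation}
t^\star \coloneqq \min\{\lambda_i / \mu_i : \mu_i > 0\}
\end{equation}
is well defined and nonnegative. For the minimizing index the updated coefficient is zero; for any index with $\mu_i > 0$ the definition of the minimum gives $\lambda_i - t^\star \mu_i \geq 0$; and for any index with $\mu_i \leq 0$ nonnegativity follows from $t^\star \geq 0$ and $\lambda_i \geq 0$. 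Hence $\vx$ is now expressed as a convex combination of at most $n-1$ points of $S$. Iterating this reduction terminates once the representation uses $R+1$ points.

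The argument is elementary, and the only point that requires a moment's care is ensuring that the set over which $t^\star$ is taken is nonempty; this is exactly what the relation $\sum_i \mu_i = 0$ with $(\mu_i)$ not identically zero provides. I would also remark that the statement as worded is the ambient-dimension version, and the same proof yields the sharper variant in which $R$ is replaced by the affine dimension of $S$, simply by executing the argument inside the affine hull of the supporting points.
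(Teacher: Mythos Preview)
Your proof is correct and is the standard linear-dependence elimination argument for Carath\'eodory's theorem. However, the paper does not actually prove this statement: it is quoted as a known result with a citation to \citet[Proposition~2.6]{clarke2013functional}, and is then invoked as a tool in the proof of \cref{thm:multi_task_lasso_bound}. There is therefore no ``paper's own proof'' to compare against; your write-up simply supplies a self-contained justification where the paper relies on the reference.
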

\begin{proof}[Proof of \Cref{thm:multi_task_lasso_bound}]
The constraint $\mathbf{\Psi} = \mathbf{V} \mathbf{\Phi}$ can be satisfied if and only if the row space of $\mathbf{\Psi}$ is contained in the row space of $\mathbf{\Phi}$.  This assumption, and the fact that the objective is continuous and coercive, implies a solution exists.
Suppose that $\mathbf{V}$ is a solution to our problem. Then we will show that there exists a (possibly different) solution $\widehat{\mathbf{V}}$ with no more than $r_{\mathbf{\Phi}} r_{\mathbf{\Psi}}$ nonzero columns.

Let $\text{col}(\mathbf{V})$ and $\text{col}(\mathbf{\Psi})$ denote the column space of $\mathbf{V}$ and $\mathbf{\Psi}$ respectively.
We first show that $\text{col}(\mathbf{V}) = \text{col}(\mathbf{\Psi})$.
Let $\mathbf{V} = \mathbf{A} + \mathbf{B}$, where $\mathbf{A} \in \text{col}(\mathbf{\Psi})$ and $\mathbf{B} \in \text{col}(\mathbf{\Psi})^\perp$, the subspace orthogonal to $\text{col}(\mathbf{\Psi})$.
Then,
\begin{align}
\sum^{K}_{k=1} \| \vv_k \|_2 = \sum^{K}_{k=1} \sqrt{ \| \va \|^2_2 + \|\vb\|^2_2 }.
\end{align}
Let $\mathbf{P}_{\text{col}(\mathbf{\Psi})}$ be the orthogonal projection onto $\text{col}(\mathbf{\Psi})$. We can then express the constraint as $\mathbf{\Psi} = \mathbf{P}_{\text{col}(\mathbf{\Psi})} \mathbf{\Psi}= \mathbf{P}_{\text{col}(\mathbf{\Psi})}(\mathbf{A}+\mathbf{B})\mathbf{\Phi}= \mathbf{A}\mathbf{\Phi}$.
Thus the solution must have $\mathbf{B}=0$, since anything nonzero would increase the objective without contributing to the constraint. Therefore, $\text{col}(\mathbf{V}) = \text{col}(\mathbf{\Psi})$ and $\text{rank}(\mathbf{V}) = \text{rank}(\mathbf{\Psi})$.
Now observe that we can also express the constraint as a sum of outer products,
\begin{align}
    \mathbf{\Psi} = \sum^{K}_{k=1} \vv_k \vec{\phi}_k^{T}
\end{align}
where $\vv_k$ are the columns of $\mathbf{V}$ and $\vec{\phi}^{T}_k$ are the rows of $\mathbf{\Phi}$. Let $\mathbf{M}_k =  \vv_k \vec{\phi}_k^{T}$. Since the $\vv_k$ belong to an $r_{\mathbf{\Psi}}$ dimensional subspace and the $\vec{\phi}_k$ belong to an $r_\mathbf{\Phi}$ dimensional subspace, the $\mathbf{M}_k$ all belong to a subspace of dimension at most $r_{\mathbf{\Phi}} r_{\mathbf{\Psi}}$. For ease of notation we let $R = r_{\mathbf{\Phi}} r_{\mathbf{\Psi}}$.
Now, define the optimal objective value $\gamma = \sum_{k=1}^{K} \| \vv_k \|_{2}$ and $\alpha_k = \|\vv_k\|_{2} / \gamma$ so that $\sum_{k=1}^{K} \alpha_k = 1$. We can then write $\mathbf{\Psi}$ as
\begin{align}
    \mathbf{\Psi} = \sum_{k=1}^{K} \alpha_k \left(\frac{1}{\alpha_k} \mathbf{M}_k \right) = \sum_{k=1}^{K} \alpha_k \widetilde{\mathbf{M}}_k.
\end{align}
This shows that $\mathbf{\Psi}$ is in the convex hull of matrices $\widetilde{\mathbf{M}}_k$ with dimension at most $R$.  Carathéodory's 
theorem implies that we can represent $\mathbf{\Psi}$ by a convex combination of a subset $\{ \widetilde{\mathbf{M}}_{j} \}_{j \in J}$ where $J \subset \{1,\cdots,K\}$ and $|J| \leq R+1$. Thus we can represent $\mathbf{\Psi}$ with no more than $R+1$ nonzero columns vectors in the solution
\begin{align}
    \mathbf{\Psi} &= \sum_{j \in J} \beta_j \widetilde{\mathbf{M}}_j \ = \ \sum_{j \in J} \beta_j \left(\frac{1}{\alpha_{j}} \vv_{j} \vec{\phi}^{T}_{j} \right).
\end{align}

Now to show that a solution exists with no more than $R$ nonzero columns we study the KKT conditions and apply  Carathéodory's theorem a second time.
Assume that $|J| = R+1$ and define $\widetilde{\vv}_{j} = \frac{1}{\alpha_{j}} \vv_{j}$. Thus  $\mathbf{\Psi}$ is in the convex hull of
\begin{align}
    \left\{\widetilde{\vv}_{j}\vec{\phi}^{T}_{j}\right\}_{j \in J}.
\end{align}
Each matrix $\widetilde{\vv}_{j}\vec{\phi}^{T}_{j}$ belongs to a subspace of dimension at most $R$, therefore, any matrix in this set can be expressed as a linear combination of the others
\begin{align}
\widetilde{\vv}_{i}\vec{\phi}^{T}_{i} = \sum_{j \in J ; j \neq i} c_j \widetilde{\vv}_{j}\vec{\phi}^{T}_{j}.
\end{align}
By the subgradient optimality conditions, we will prove that we must have $\sum_{j \in J ; j \neq i} c_j = 1$. This in turn implies that the set of matrices $\{\widetilde{\vv}_j\vec{\phi}^{T}_j\}_{j \in J}$ are not only linearly dependent but they also span an $R-1$ dimensional affine space (i.e., an $R-1$ dimensional hyperplane  not including the origin \citep{rockafellar1997convex}). We can then apply Carathéodory's again to show that a convex combination of $R$ matrices from this set suffices to satisfy the constraint.

The Lagrangian of our optimization problem \eqref{opt:multi-task-lasso} is
\begin{align}
    \mathcal{L}(\mathbf{V}, \vec{\nu}_1, \vec{\nu}_2) = \sum_{k=1}^{K} \|\vv_k\|_{2} + \vec{\nu}^{T}_1 (\mathbf{\Psi} - \mathbf{V}\mathbf{\Phi})\vec{\nu}_2
\end{align}
with Lagrange multipliers $\vec{\nu}_1 \in \R^{D}$ and $\vec{\nu}_2 \in \R^{N}$. By the KKT conditions, any solution must satisfy 
\begin{align}
    0 &\in \partial_{\mathbf{V}} \mathcal{L},\\
    0 &= \nabla_{\vec{\nu}_1, \vec{\nu}_2} \mathcal{L}.
\end{align}
Now for our original solution $\mathbf{V}$ we have,
\begin{align}
    \partial_{\mathbf{V}} \mathcal{L} &= \partial\left(\sum_{k=1}^{K} \|\vv_k\|_2\right) - \vec{\nu}_1(\mathbf{\Phi} \vec{\nu}_2)^{T}   = 0 \label{eq:first_kkt}\\
    \nabla_{\vec{\nu}_1,\vec{\nu}_2} \mathcal{L} &= \mathbf{\Psi} - \mathbf{V}\mathbf{\Phi} = 0.
\end{align}
Note we have no restriction of $\vec{\nu}_1$ and $\vec{\nu}_2$. Then for any $j \in J$ (the nonzero columns of $\mathbf{V}$) the $j$th column of the subgradient of our objective is
\begin{align}
    \partial_{\mathbf{V}} \left(\sum_{k=1}^{K} \|\vv_k\|_2 \right)_j =
        \frac{\vv_j}{\|\vv_j\|_2}.
\end{align}
Therefore, by \cref{eq:first_kkt} for the $j$th column of the subgradient we have 
\begin{align}
        \partial \left( \sum_{k=1}^{K} \| \vv_k \|_{2}\right)_j = \vec{\nu}_1 \vec{\nu}^{T}_2 \vec{\phi}_j \implies
        \frac{\vv_j}{\|\vv_j\|_2} = \vec{\nu}_1 \vec{\nu}^{T}_2 \vec{\phi}_j \label{eq:column_subgrad}
\end{align}
where $\vec{\phi}_j \in \R^{N \times 1}$ is the $j$th row of $\mathbf{\Phi}$.
Now consider the $i$th column of \cref{eq:first_kkt} and right-multiply both sides by $\widetilde{\vv}^{T}_i$ to obtain
 \begin{align}
 \frac{\vv_i \widetilde{\vv}^{T}_i}{\|\vv_i\|_2} = \vec{\nu}_1 \vec{\nu}^{T}_2 \vec{\phi}_i \widetilde{\vv}^{T}_i.
 \end{align}
By the linear dependence of the matrices $\{\widetilde{\vv}_j\vec{\phi}^{T}_j\}_{j \in J}$ for any $i \in J$ we have $\vec{\phi}_i \widetilde{\vv}^{T}_i =  \sum_{j \in J; j \neq i} c_j \vec{\phi}_j \widetilde{\vv}^{T}_j$ for some constants $c_j$, substituting this in we get
\begin{align}
        \frac{\vv_i \widetilde{\vv}^{T}_i}{\|\vv_i\|_2} &= \vec{\nu}_1 \vec{\nu}^{T}_2 \sum_{j \in J; j \neq i} c_j \vec{\phi}_j \widetilde{\vv}^{T}_j \nonumber\\
         &= \sum_{j \in J: j \neq i} c_j \vec{\nu}_1 \vec{\nu}^{T}_2   \vec{\phi}_j \widetilde{\vv}^{T}_j \nonumber\\
         &= \sum_{j \in J; j \neq i} c_j \frac{\vv_j\widetilde{\vv}^{T}_j}{\|\vv_j\|_2},
\end{align}
where the final equality follows from \cref{eq:column_subgrad}. Since $\widetilde{\vv}_j = \frac{1}{\alpha_j} \vv_j = \frac{\gamma}{\|\vv_j\|_2}\vv_j$ plugging this in on both sides and cancelling out $\gamma$ we have
\begin{align}
        \frac{\vv_i \vv^{T}_i}{\|\vv_i\|^2_2} &= \sum_{j \in J; j \neq i} c_j \frac{\vv_j\vv^{T}_j}{\|\vv_j\|^2_2}.
\end{align}
Taking the trace of the left and right hand sides reveals
\begin{align}
    \sum_{j \neq i} c_j = 1.
\end{align}
Therefore any $\vec{\phi}_i \widetilde{\vv}^{T}_i$ is in the affine hull of the other matrices and thus the matrices $\{\widetilde{\vv}_j\vec{\phi}^{T}_j\}_{j \in J}$ form an affine set of dimension $R-1$~\citep{rockafellar1997convex}. This implies that the matrices do not just span an $R$ subspace but rather an $R-1$ dimensional hyperplane that does not pass through the origin. Thus, the vector space corresponding to this set of matrices has dimension $R-1$. By applying
 Carathéodory's theorem again we can represent the constraint as a convex combination of just $R$ nonzero vectors
\begin{align}
    \mathbf{\Psi} &= \sum_{j \in J'} \hat{\beta}_j \widetilde{\vv}_j \vec{\phi}^{T}_j =\sum_{j \in J'} \vv^{*}_j \vec{\phi}^{T}_j, 
\end{align}
where $\vv^{*}_j:= \hat{\beta}_j \widetilde{\vv}_j$, $J' \subset \{1,\cdots, K\}$ and $|J'| \leq R = r_{\mathbf{\Phi}}r_{\mathbf{\Psi}}$. 
The coefficients $\hat{\beta}_j$ are nonnegative and sum to 1.
Finally, it remains to show that this new representation of $\mathbf{\Psi}$ has the same objective value as the original solution, indeed we have
\begin{align}
    \sum_{j \in J'} \|\vv^{*}_j\|_{2} = \sum_{j \in J'} \frac{\hat{\beta}_j}{\alpha_j} \|\vv_j\|_{2}  = \sum_{j \in J'} \gamma \hat{\beta}_j = \gamma
\end{align}
where we recall that $\gamma$ is the objective value obtained by our original assumed solution. 

The lower bound follows solely from the constraint. First we note that for any $\mathbf{\Psi}$ we always need at least $r_{\mathbf{\Psi}}$ nonzero columns in the solution $\mathbf{V}$. If this were not the case we would have $\text{rank}(\mathbf{V}) < r_{\mathbf{\Psi}}$ and therefore $\text{rank}(\mathbf{\Psi}) \leq \min\{\text{rank}(\mathbf{V}),r_{\mathbf{\Phi}}\} < r_{\mathbf{\Psi}} $ contradicting the fact that $\text{rank}(\mathbf{\Psi}) = r_{\mathbf{\Psi}}$.

Now for any $\mathbf{\Phi}$ the number of subspaces spanned by $p < r_{\mathbf{\Phi}}$ of the rows of $\mathbf{\Phi}$ is at most
\begin{align}
    \binom{K}{1} + \binom{K}{2} + \cdots + \binom{K}{r_{\mathbf{\Phi}} - 1}.
\end{align}
The union of these subspaces has Lebesgue measure zero in the $\text{row}(\mathbf{\Phi})$. Therefore, there exist uncountably many points in $\text{row}(\mathbf{\Phi})$ that cannot be represented with fewer than $r_\mathbf{\Phi}$ columns of $\mathbf{\Phi}$.


\end{proof}

\bibstyle{plain}
\bibliography{refs}

\end{document}